\documentclass[onecolumn,draftclsnofoot]{IEEEtran}

\usepackage{amssymb,amsthm,latexsym,amscd,amsfonts,mathrsfs}
\usepackage[cmex10]{amsmath} 
\usepackage{amsmath}
\allowdisplaybreaks[4]
\usepackage{bm}
\usepackage{dsfont}
\usepackage{graphicx}
\usepackage[utf8]{inputenc} 
\usepackage[T1]{fontenc}
\usepackage{hyperref}
\usepackage{url}
\usepackage{ifthen}
\usepackage{cite}    
\usepackage{cite}    
\usepackage{color}
\usepackage{multicol, blindtext}
\usepackage{mathtools, cuted}
\usepackage{subfigure}
\usepackage{epsfig}
\usepackage{float}
\usepackage{lipsum}
\usepackage{stfloats}
\usepackage{array}
\usepackage{algorithmic}
\usepackage{algorithm2e}
\usepackage{multirow}
\usepackage{cuted}

\usepackage{authblk}
\usepackage{hyperref}

\newtheorem{theorem}{Theorem}
\newtheorem{lemma}{Lemma}

\newtheorem{definition}{Definition}

\newtheorem{remark}{Remark}
\newtheorem{corollary}{Corollary}
\newtheorem{assumption}{Assumption}

\DeclareMathOperator{\gen}{gen}

\normalsize

\DeclarePairedDelimiterX{\Rdivx}[2]{(}{)}{%
  #1\;\delimsize\|\;#2%
}

\usepackage{float}
\usepackage{relsize}
\usepackage{subfigure}
\usepackage{array}
\usepackage{tabularx}
\usepackage{multirow}
\usepackage{tikz}
\newcommand*\circled[1]{\tikz[baseline=(char.base)]{
            \node[shape=circle,draw,inner sep=0.35pt] (char) {#1};}}
\begin{document}

\title{Information-Theoretic Analysis of Next-Token Prediction under Markovian Data}

\author{
Masoud~Kavian,
Abdellatif~Zaidi,
and Milad~Sefidgaran%
\thanks{This paper was presented at the 2026 IEEE International Symposium on Information Theory (ISIT) \cite{11653692}.}%
\thanks{Masoud~Kavian and Abdellatif~Zaidi are with the Mathematical and Algorithmic Science Laboratory, Huawei Paris Research Center, 92100 Boulogne-Billancourt, France, and also with the Laboratoire d'Informatique Gaspard Monge, Universit\'e Gustave Eiffel, 77420 Champs-sur-Marne, France (e-mail: masoud.kavian@huawei.com; abdellatif.zaidi@univ-eiffel.fr).

Milad~Sefidgaran is with the Mathematical and Algorithmic Science Laboratory, Huawei Paris Research Center, 92100 Boulogne-Billancourt, France (e-mail: milad.sefidgaran2@huawei.com).}
}

\maketitle

\begin{abstract}
We develop an information-theoretic framework for generalization in next-token prediction under temporally dependent data. We consider independent trajectories generated by finite-memory Markov processes and distinguish algorithmic dependence, quantified by mutual information, from temporal dependence, characterized by mixing. For cross-entropy loss, we derive an expected generalization bound using the Donsker--Varadhan variational representation and a McDiarmid-type concentration inequality for Markov chains. A refinement captures the joint effect of context length and temporal mixing through the mixing properties of the history-state process. We then extend the bound through a rate--distortion formulation, replacing mutual information with the minimum information rate required to represent the learned model within a prescribed distortion in the generalization gap, yielding informative guarantees for deterministic algorithms over continuous hypothesis spaces. For margin-based prediction, we derive explicit bounds for linear and self-attention next-token predictors via noisy low-dimensional compression, revealing the roles of context length, model complexity, sample size, margin, and temporal mixing. Experiments on TinyStories and ETTh2 show that longer contexts can reduce both training and test losses, but typically reduce training loss more, enlarging the generalization gap. A complementary ETTh2 analysis identifies an effective predictive-memory scale near 24 hours, with no statistically supported improvement beyond this scale, offering a plausible explanation for test-performance saturation at larger contexts.
\end{abstract}

\begin{IEEEkeywords}
Next-token prediction, Generalization bounds, Markov processes, Mutual information, Rate--distortion theory, Self-attention.
\end{IEEEkeywords}

\IEEEpeerreviewmaketitle

\section{Introduction}
\label{sec:intro}

\IEEEPARstart{N}{ext}-token prediction (NTP) is the fundamental learning principle underlying modern autoregressive sequence models. Given a finite context, a learner assigns a probability distribution to the next observation, typically by minimizing the cross-entropy loss. From early neural language models \cite{bengio2000} to self-attention architectures \cite{vaswani2017}, increasingly long contexts have enabled models to exploit richer temporal structure without explicitly representing every possible history. From a statistical perspective, however, NTP differs fundamentally from classical learning with independent samples: consecutive tokens belong to the same trajectory, neighboring training examples have overlapping contexts, and their dependence is governed by the memory of the underlying stochastic process. Generalization in NTP must therefore account simultaneously for the information retained by the learned model and the temporal dependence present in the data.

This issue becomes particularly important as the context length $\rho$ grows. Longer contexts may contain useful information about the next observation and thereby improve prediction, but they also allow each loss term to depend on a larger portion of an already correlated trajectory. Consequently, improved prediction and a larger train--test gap may occur simultaneously, while contexts extending beyond the informative memory of the process may yield progressively smaller predictive gains. Related theoretical work has investigated the expressive power and inductive biases of self-attention \cite{yun2020transformers,likhosherstov2021expressive,edelman2022inductive}, its behavior under next-token training \cite{li2024mechanics,madden2024capacity,sander2024universality}, and the role of sparse or structured dependencies in length generalization \cite{golowich2025sparsity}. Other results have exposed limitations of the NTP objective itself on particular tasks \cite{bachmann2024pitfalls}. These observations motivate the central question of this work: \emph{how do context length and temporal dependence jointly affect the generalization of next-token predictors?}

Information-theoretic generalization provides a natural framework for addressing this question. In the i.i.d.\ setting, the work of Russo and Zou \cite{russo2016controlling} and Xu and Raginsky \cite{xu2017information} relates expected generalization to the dependence between the training sample $\mathbf{S}$ and the learned model $W$, as measured by the mutual information $I(\mathbf{S};W)$. This algorithm-dependent perspective has subsequently been refined through individual-sample and conditional information measures, information density, and other data-dependent quantities \cite{raginsky2017,bu2020tightening,steinke2020reasoning,negrea2019information,hellstrom2020generalization,harutyunyan2021blackbox}. A complementary perspective is provided by compression and rate--distortion theory: if the learned predictor can be represented approximately using little information while preserving its relevant behavior, then its effective complexity may be considerably smaller than its ambient dimension \cite{arora2018stronger,barsbey2021heavy,berger1971rate,Sefidgaran2022,sefidgaran2022rate,sefidgaran2024variable,sefidgaran2024minimum}. This lossy viewpoint is particularly useful for deterministic algorithms over continuous hypothesis spaces, for which the ordinary mutual information may be infinite.

Temporal dependence introduces an additional difficulty because the concentration arguments underlying i.i.d.\ information-theoretic bounds no longer apply directly. Generalization with dependent data has been studied using blocking methods, Rademacher complexity, PAC-Bayesian techniques, stability, and online learning \cite{yu1994rates,mohri2008rademacher,ralaivola2009chromatic,agarwal2013generalization,kuznetsov2017generalization}. Concentration results based on mixing, coupling, martingale, and spectral techniques further quantify how dependence reduces the effective amount of statistical information contained in a trajectory \cite{marton1996bounding,samson2000concentration,kontorovich2008concentration,glynn2002hoeffding,lezaud1998chernoff,paulin2012concentration,JMLR:v22:19-479}. For Markov chains, this penalty is naturally characterized through mixing times and spectral gaps \cite{levin2017markov,jerison2013general}, which also govern the difficulty of estimating Markov processes \cite{hao2018learning,wolfer2019minimax}. Consequently, the $T$ observations of a trajectory cannot generally be interpreted as $T$ independent samples.

NTP introduces an additional layer of dependence. A prediction at time $t$ uses the preceding $\rho$ observations, so modifying one token may affect not only its own prediction loss but also several subsequent losses whose contexts contain that token. Context length and the mixing behavior of the data therefore represent two distinct, yet interacting, forms of memory. Recent theoretical work on NTP has begun to address different aspects of this problem. Malach \cite{malach2023auto} studies the learning capabilities of autoregressive predictors, Lotfi et al.\ \cite{lotfi2024unlocking} derive token-level generalization guarantees, and Li et al.\ \cite{li2025generalizationntp} provide a Rademacher-complexity analysis of decoder-only Transformer pretraining that accounts for token dependence. Li et al.\ \cite{li2024mechanics} characterize the behavior learned by self-attention under NTP, while Y\"uksel and Flammarion \cite{yuksel2025sample} analyze empirical risk minimization for finite-order Markov data and relate prediction guarantees to the mixing properties of the chain.

Our focus is complementary to these works. Rather than relying primarily on hypothesis-class complexity, we develop \emph{algorithm-dependent information-theoretic} generalization guarantees for NTP under Markovian data. Importantly, we do not assume that the data-generating process belongs to the predictor class. The finite-memory Markov process generating the trajectories may differ entirely from the linear or self-attention models considered later, and hence the minimum achievable cross-entropy loss need not vanish. We consider $N$ independent trajectories of length $T$ and distinguish between two sources of dependence: the dependence of $W$ on $\mathbf{S}$, quantified through information-theoretic quantities, and the temporal dependence within each trajectory, characterized through mixing. This distinction reveals how the effective statistical value of the data depends not only on $NT$, but also on how rapidly the process forgets its past and on how much history each prediction uses.

Our contributions are summarized as follows.

\emph{First}, we derive expected information-theoretic generalization bounds for cross-entropy NTP under finite-memory Markovian data. By combining the Donsker--Varadhan variational representation with concentration inequalities for Markov chains, the resulting bounds explicitly separate algorithmic dependence, through mutual information, from temporal dependence, through the mixing behavior of the process. We further derive a refined bound based on the mixing properties of the history-state process, which absorbs the dependence induced by overlapping prediction contexts and removes the explicit context-length factor from the leading constant. The analysis therefore makes explicit how the statistical cost of memory arises from both overlapping prediction contexts and dependence within the underlying trajectory.
```

\emph{Second}, we extend this generalization analysis through a rate--distortion formulation \cite{Sefidgaran2022,sefidgaran2022rate}. Rather than representing the learned model exactly, we allow a prescribed distortion in its generalization behavior and characterize the minimum information rate required to achieve it. This extension remains meaningful for deterministic algorithms over continuous parameter spaces and provides a bridge from the general information-theoretic result to explicit model-dependent guarantees.

\emph{Third}, we specialize the framework to margin-based NTP and derive bounds for linear next-token predictors (LNTPs) and self-attention next-token predictors (SANTPs). Using noisy low-dimensional representations based on random projections, selective parameter retention, and controlled perturbations, we obtain scaling laws that reveal the roles of context length, temporal mixing, model norm, prediction margin, dictionary size, and training-set size. These results show that memory influences generalization both through the dependence structure of the data and through the complexity of representing a predictor operating on a longer context.

\emph{Fourth}, we complement the theory with experiments on language and real-world time-series data. On TinyStories \cite{eldan2023tinystories}, increasing context length improves next-token prediction over a substantial range while simultaneously enlarging the empirical train--test generalization gap. A qualitatively similar behavior is observed on ETTh2 \cite{zhou2021informer}: additional history initially improves out-of-sample prediction, but its benefit eventually saturates while the generalization gap remains pronounced. This phenomenon is consistent with the broader time-series literature, in which the efficient exploitation of long lookback windows has motivated architectures such as Autoformer \cite{wu2021autoformer}, FEDformer \cite{zhou2022fedformer}, PatchTST \cite{nie2023patchtst}, TimesNet \cite{wu2023timesnet}, Crossformer \cite{zhang2023crossformer}, and iTransformer \cite{liu2024itransformer}. An additional predictive-memory analysis of ETTh2 identifies an effective scale of approximately $24$ hours, beyond which longer histories provide no statistically supported improvement in out-of-sample prediction under our evaluation protocol.

Taken together, our results highlight a fundamental tradeoff in NTP: longer memory can improve prediction when additional context is informative, but it may simultaneously increase the statistical cost of generalization. The predictive benefit depends on how much new information is contained in the extended history, whereas the generalization cost depends on temporal mixing, context length, and the complexity and compressibility of the learned predictor. Our framework captures these effects jointly without requiring the predictor to coincide with the data-generating mechanism.

The remainder of the paper is organized as follows. Section~\ref{sec:Problem-setup} introduces the data model, NTP formulation, and generalization gap. Section~\ref{headings} develops the information-theoretic bounds and their rate--distortion extension. Section~\ref{Next_token_prediction_section} specializes the results to margin-based LNTP and SANTP models. Section~\ref{sec:experiments} presents the TinyStories and ETTh2 experiments, while additional experimental details and complete proofs are provided in the Appendix.



\section{Problem setup}
\label{sec:Problem-setup}

\subsection{Data generation process}

\vspace{-0.1cm}

Let $\mathcal{D}\coloneqq\{\varnothing,t_1,\ldots,t_d\}$ be a dictionary of input tokens of finite size $d\in \mathbb{N}^*$. We assume that each token in $\mathcal{D}$ is mapped to the embedding space $\mathcal{R}^d$ using the one-hot encoder $\mathcal{E}$ as
\begin{equation}
    \mathcal{E}:
    \begin{cases}
        \mathcal{D} \to \mathbb{R}^{d},\\
        \varnothing \mapsto 0,\\
        t_i \mapsto e_i,
    \end{cases}, \quad\quad{\text{$\{e_i\}_{i=1}^d \colon$  Canonical basis of $\mathbb{R}^d$.}}\nonumber
\end{equation}

We consider supervised learning (SL) in which an agent collects and learns from independent data trajectories whose elements are sampled from $\mathcal{D}$ according to non-stationary stochastic Markov processes with finite memory. For $T \in \mathbb{N}^*$ and context length $\rho$, a length-$T$ trajectory whose data are generated by a Markov process with distribution $\mu$ is a vector \(\mathbf{Z} = (Z_1,\hdots,Z_T) \in \mathcal{D}^{T}\) whose elements are sampled recursively as
\begin{align}
Z_t \sim \mu\left(Z_{t}|Z_{t-1},\ldots,Z_{t-\rho}\right), \quad t=2,\hdots, T,
\label{markov_chain_generated_data_1}
\end{align}
where $[x]^+ \triangleq \max\{x,1\}$. As will become clearer in the sequel, we study a class of collaboratively learned logit-type models that take $\rho$ embeddings as input and generate the next token. Namely, for a model $w \in \mathcal{W}$, we consider the class of mappings
\begin{align}
\mathcal{G}_{\mathcal{W}} \coloneqq \left\{ g_w \colon \mathbb{R}^{\rho d} \to \mathbb{R}^{d} \,\middle|\, w \in \mathcal{W} \right\}.
\end{align}

Given a sequence of tokens $\vec{z} \in \mathcal{D}^{\rho}$, a model $w \in \mathcal{W}$ predicts the next token $z \in \mathcal{D}$ according to
\begin{align}
p_{w}(z|\vec{z})=\big<\sigma(g_w(\mathcal{E}^{\odot\rho}(\vec{z})))\,,\,\mathcal{E}(z)\big>,
\label{conditional_distribution_def}
\end{align}
where $\sigma(\nu) = (\sigma(\nu)_1,\ldots,\sigma(\nu)_d)$ denotes the softmax function, defined as
\begin{equation}
    \sigma(\nu)_i=\frac{\exp(\nu_i)}{\sum_{j\in[d]}\exp(\nu_j)},\qquad i\in[d].
    \label{soft_max_def}
\end{equation}

To simplify the analysis, we make the following assumptions on the models in $\mathcal{G}_{\mathcal{W}}$.

\begin{assumption}[Boundedness of the Model Output] \label{subsubsec:bounded_output}
There exists a constant $B>1$ such that, for all $w\in\mathcal{W}$ and $\nu\in\mathcal{D}^{\rho}$, $\left\|\left(g_w\circ\mathcal{E}^{\odot\rho}\right)(\nu)\right\|_{\infty}\leq B$.
\end{assumption}

\begin{assumption}[Boundedness of the Model Parameters]\label{Bound_size_model}
The random parameter $W$ is assumed to be almost surely bounded. That is, there exists a constant $C>1$ such that $\mathbb{P}\left(\|W\|\leq C\right)=1$. Throughout this paper, $\left\|\cdot\right\|$ denotes the Frobenius norm for matrices.
\end{assumption}

\vspace{-0.3cm}

\subsection{Supervised-learning}\label{Streaming_FL_Sec}

\vspace{-0.1cm}

In a standard supervised-learning (SL) problem, the agent learns from pre-collected data, and thus all samples of the dataset are available.

We consider SL with an agent that collects and uses a finite set of $N$ trajectories, each consisting of $T$ data points, with samples generated by an underlying non-stationary Markov process with finite memory. Let
\begin{align}
    \mathbf{S} = \bigcup\nolimits_{n\in[N]}\mathbf{Z}^{(n)}  , \qquad
    \mathbf{Z}^{(n)} = \Big\{{Z}^{(n)}_1,\cdots,{Z}^{(n)}_T\Big\},
    \nonumber
\end{align}
where $\mathbf{Z}^{(n)}$ denotes the $n^{th}$ trajectory collected and used by the agent.

More precisely, let $\mathcal{A}$ denote the learning algorithm used by the agent. The model generated by the agent is
$W = \mathcal{A}\big(\mathbf{S}\big)$.

\subsection{Generalization gap} \vspace{-0.1cm}

We study the generalization gap induced by the model ${w}$. For convenience, let
\begin{equation}
    \scalebox{0.8}{$\displaystyle
        \mathbf{S} = \bigcup\nolimits_{n\in[N]}\mathbf{Z}^{(n)}  , \qquad
        \mathbf{Z}^{(n)} = \Big\{{Z}^{(n)}_1,\cdots,{Z}^{(n)}_T\Big\}.
    $}
    \nonumber
\end{equation}

For an agent, the possibly stochastic algorithm $\mathcal{A}$ induces a conditional distribution $P_{W|\mathbf{S}}$. Collectively, the algorithm $\mathcal{A}$ then induces a distribution over all models, conditional on all used data, which factorizes as $P_{W\mid\mathbf{S}}$.

Similarly, the induced joint distribution over all used data and models is
\begin{equation}
    \scalebox{0.9}{$\displaystyle
        \mathbf{P}_{\mathbf{S},W} = \mathbf{P}_{W \mid \mathbf{S}}\otimes\prod\nolimits_{n\in[N]} \mathbf{P}_{\mathbf{Z}^{(n)}}.
    $}\nonumber
\end{equation}

The agent is trained to predict the next token under the standard cross-entropy loss. For ${w} \in \mathcal{W}$, let
\begin{align}
    \ell\left(\mathbf{Z},w\right)=-\frac{1}{T}\sum\nolimits_{t \in [T]}
    \log p_{w}\left(Z^{(n)}_{t}\big|Z^{(n)}_{t-1}, \cdots, Z^{(n)}_{t-\rho}\right)
    \label{loss_function_one_sample}
\end{align}
denote the negative log-likelihood averaged over the trajectory $\mathbf{z}^{(n)} = \{z^{(n)}_{1},\hdots,z^{(n)}_{T}\}$, where the conditional probability $p_{w}
    \left(z^{(n)}_{t}\big|
    z^{(n)}_{t-1}, \ldots, z^{(n)}_{t-\rho}\right)$ is given by~\eqref{conditional_distribution_def}. Accordingly, the induced empirical risk is
\begin{align}
    \hat{\mathcal{L}}(\mathbf{S},{w}) =\frac{1}{N}
     \sum_{n=1}^{N} \ell(\mathbf{z}^{(n)},{w}),\nonumber
\end{align}
the corresponding population, or test, loss is
\begin{equation}
\mathcal{L}({w}) = \mathbb{E}_{\mathbf{Z}\sim \mu}\!\left[\ell(\mathbf{Z},{w})\right],
\label{population_risk_acroos_client}
\end{equation}
and the generalization gap is the difference
\begin{equation}
\gen(\mathbf{S},{w}) 
    = \mathcal{L}(w) - \hat{\mathcal{L}}(\mathbf{S},{w}).
    \label{generalization_croos_entropy_def}
\end{equation}









\section{Bounds on the Generalization Gap}
\label{headings}

As already noted in the ``Related Work'' section, a central difficulty in extending proof arguments established in the i.i.d.\ data setting, e.g., in~\cite{sefidgaran2024lessons}, to SL under Markovian data is that, in the former, the model trajectory is the only source of correlation, whereas, in the latter, the data trajectory is itself a stochastic process coupled with the model's evolution. This double coupling makes it impossible to use standard ``ghost sample'' or ``stability'' arguments that assume that a training point can be replaced by an independent test point without affecting the surrounding sequence. To handle these additional statistical dependencies, the analysis must employ concentration inequalities for dependent sequences that account for the fact that the ``information gain'' per sample is suppressed by the spectral gap or mixing time of the underlying Markov process.

\begin{definition}[Mixing time]\label{Mixing_time}
Let $(X_1,\ldots,X_T)$ be a Markov chain taking values in a Polish state space
$\mathcal{X}_1 \times \cdots \times \mathcal{X}_T$, where $X_i \in \mathcal{X}_i$ for all $i$. For any $0 < \epsilon < 1$, the \emph{mixing time} $\tau(\epsilon)$ is defined as the smallest integer $t$ such that, uniformly over all admissible time indices, the conditional distributions of $X_{i+t}$ initialized from any two states are within $\epsilon$ in total variation distance. Formally,
\begin{align}
    \overline{d}(t)
    &\coloneqq
    \max\sup d_{\mathrm{TV}}\!\left(
    \mathbb{P}(X_{i+t}\mid X_i=x'),
    \mathbb{P}(X_{i+t}\mid X_i=x'')
    \right),
    \label{mixing_time_mcdiarmid_max_sup}\\
    \tau(\epsilon)
    &\coloneqq
    \min\left\{t\in\mathbb{N}:\overline{d}(t)\le\epsilon\right\},\nonumber\\
    \tau_{\min}
    &\coloneqq
    \inf_{0\le\delta\le1}
    \tau(\delta)
    \left(\frac{2-\delta}{1-\delta}\right)^2.
    \label{tau_min_time_mcdiarmid_max_sup}
\end{align}
where the maximum and supremum in \eqref{mixing_time_mcdiarmid_max_sup} are taken over
$1\le i\le T-t$ and $(x',x'')\in\mathcal{X}\times\mathcal{X}$, respectively.
\end{definition}

\vspace{-8pt}

\subsection{Loss-less generalization bound}

\begin{theorem}\label{multi_round_markovian_general_bound}
Under Assumptions \ref{subsubsec:bounded_output}-\ref{Bound_size_model}, the generalization gap of the model learned in SL under the cross-entropy loss is bounded as
\begin{equation}
    \mathbb{E}\left[\gen(\mathbf{S},{W})\right]\leq\left(2B(1+2\rho)+\log d\right)\sqrt{\frac{\tau_{\min} I\left(\mathbf{S};W\right)}{2N T}}.\nonumber
\end{equation}
\end{theorem}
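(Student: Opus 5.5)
We follow the standard Xu--Raginsky route, replacing Hoeffding's lemma for i.i.d.\ data with a McDiarmid-type inequality for Markov chains (Paulin's inequality, expressed through $\tau_{\min}$). First we apply the Donsker--Varadhan variational representation to the pair $P_{\mathbf{S},W}$ and $P_{\mathbf{S}}\otimes P_W$. For every $\lambda\in\mathbb{R}$ this gives
\begin{equation}
\lambda\,\mathbb{E}\left[\gen(\mathbf{S},W)\right]\le I(\mathbf{S};W)+\log \mathbb{E}_{P_{\mathbf{S}}\otimes P_W}\left[e^{\lambda \gen(\mathbf{S},W)}\right].\nonumber
\end{equation}
Here $\mathbf{S}$ is an independent copy for each fixed $w$, and $\mathbb{E}_{\mathbf{S}}[\gen(\mathbf{S},w)]=0$. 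It therefore suffices to show that, for each fixed $w$, the map $\mathbf{S}\mapsto\hat{\mathcal{L}}(\mathbf{S},w)$ is sub-Gaussian with variance proxy $\tau_{\min}(2B(1+2\rho)+\log d)^2/(4NT)$. Optimizing over $\lambda$ then gives $\sqrt{2\sigma^2 I}$, which is exactly the claimed bound.

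To obtain the sub-Gaussian bound, we view the concatenation of the $N$ independent trajectories as a single (time-inhomogeneous) Markov chain. Each trajectory has finite memory $\rho$, so we lift it to a first-order chain. Since the trajectories are independent, the mixing coefficient $\overline{d}(t)$ of the concatenated chain is no larger than that within one trajectory, so the same $\tau_{\min}$ applies. Paulin's McDiarmid inequality for Markov chains then states that if $f$ has bounded differences $c_i$ in the $i$-th coordinate, $f-\mathbb{E}f$ is sub-Gaussian with proxy $\tau_{\min}\sum_i c_i^2/4$.

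Next we bound the coordinate sensitivities. Each log-loss term satisfies $-\log p_w(z|\vec z)=\log\sum_j e^{g_j}-g_{z}\in[0,2B+\log d]$ by Assumption~\ref{subsubsec:bounded_output}. Changing one token $Z^{(n)}_s$ affects its own loss term and the $\rho$ subsequent terms whose contexts contain it. One can bound each affected term's change by $2B+\log d$, or more carefully: the target change costs at most $2B$, and each context change moves the log-softmax by at most $2\cdot 2B$ via the Lipschitz property of log-sum-exp in $\ell_\infty$. We aim for a per-token sensitivity of $c=(2B(1+2\rho)+\log d)/(NT)$, which comes from combining these constants with the $1/(NT)$ normalization. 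Summing $c^2$ over the $NT$ coordinates gives $\sum c_i^2=(2B(1+2\rho)+\log d)^2/(NT)$, so the variance proxy is $\tau_{\min}(\cdot)^2/(4NT)$, as needed.

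The main obstacle is the middle step: applying the Markov McDiarmid inequality legitimately when the data have memory $\rho$ and a function of the raw tokens, rather than of lifted states, is being concentrated. We would first try to apply Paulin's result directly to the token sequence, interpreting $\overline{d}$ as in Definition~\ref{Mixing_time} with states allowed to carry the relevant history. If that fails, we would lift to history states $X_t=(Z_{t-\rho},\dots,Z_t)$, making the loss a sum of functions of single states; the bounded-difference constants must then be recomputed, and this is where the refined bound mentioned in the introduction comes from. Getting the constant $2B(1+2\rho)+\log d$ exactly right through the sensitivity bookkeeping is the secondary delicate point.
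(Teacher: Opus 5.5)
Your argument is essentially the paper's proof. The paper applies Donsker--Varadhan to $P_{\mathbf S|W=w}$ versus $P_{\mathbf S}$ and averages over $W$; with a bound uniform in $w$ this is interchangeable with your joint-versus-product form. It then uses the McDiarmid inequality for Markov chains (Corollary~\ref{mcdiarmid_inequality_markov_chain}) with a per-token constant made of one target swap plus at most $\rho$ context swaps costing $4B$ each, and optimizes over $\lambda$. There are two small differences. First, the paper does not concatenate trajectories. It factorizes the moment-generating function over the $N$ independent trajectories and applies the corollary to each with $c_t\le(2B(1+2\rho)+\log d)/T$. This yields the same term $\lambda^2\tau_{\min}(2B(1+2\rho)+\log d)^2/(8NT)$, so your argument about $\overline d$ of the concatenated chain is correct but unnecessary. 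Second, the paper bounds the target swap by $2B+\log d$; your $2B$ is sharper, so $\log d$ is just slack in your version.

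On the step you flag, the paper simply takes your option (a): it invokes Corollary~\ref{mcdiarmid_inequality_markov_chain} directly on the token trajectory $\mathbf Z^{(n)}$, with $\tau_{\min}$ from Definition~\ref{Mixing_time} and no lifting. Your option (b) would not prove this statement. After lifting, the relevant mixing quantity is that of the history-state chain, i.e.\ $\tilde\tau_{\min}$, and you end up with Lemma~\ref{Hoeffdinf_Lemma} instead. The same objection applies to your sentence that, after lifting, ``the same $\tau_{\min}$ applies''.

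Your caution is nonetheless warranted, and it applies equally to the paper's proof. Definition~\ref{Mixing_time} and the corollary are stated for first-order chains, so (a) is rigorous only when the token sequence itself is such a chain. For genuinely order-$\rho$ data the token-level $\overline d$ can be meaningless. Take $Z_t=Z_{t-2}$ with $Z_1,Z_2$ i.i.d.\ fair bits. Then $\overline d(1)=0$, so $\tau_{\min}=4$. Yet $\frac1T\sum_t Z_t=\frac12(Z_1+Z_2)$ has variance $1/8$ for every even $T$, whereas the corollary would give variance proxy $\tau_{\min}\|c\|^2/4=1/T$. So neither your proposal nor the paper closes this step for $\rho\ge 2$ unless $\tau_{\min}$ is reinterpreted as a history-state quantity.
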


A complete and detailed proof is provided in Appendix~\ref{Proof_multi_round}. Here, we present a proof sketch highlighting the main ideas.

\emph{Proof sketch.}
Fix $\lambda>0$ and let $\mathbf{S}'$ be an independent copy of the training set. We first rewrite the expected generalization gap in terms of the difference between the loss evaluated on $\mathbf{S}$ and its expectation over $\mathbf{S}'$. Applying the Donsker--Varadhan variational inequality with $P_{\mathbf{S}|W}$ and $P_{\mathbf{S}}$ yields
$$
\mathbb{E}[\gen(\mathbf{S},W)]
\leq
\frac{I(\mathbf{S};W)}{\lambda}
+
\frac{1}{\lambda}
\sum_{n=1}^{N}
\mathbb{E}_{W}
\log
\mathbb{E}
\exp\!\left[
\frac{\lambda}{N}
\left(
\mathbb{E}_{\mathbf{Z}'^{(n)}}\ell(\mathbf{Z}'^{(n)},W)
-
\ell(\mathbf{Z}^{(n)},W)
\right)
\right],
$$
where the decomposition across $n$ follows from the independence of the $N$ training trajectories.

It therefore remains to control the moment-generating function of the loss along a single Markov trajectory. For a fixed $w$, we compare the losses associated with two trajectories by successively replacing their tokens. Since the prediction at time $t$ depends only on the preceding $\rho$ tokens, changing one token affects its own prediction term and may propagate through at most $\rho$ subsequent context windows. The resulting telescoping argument gives
$$
\left|
\ell(\mathbf{z},w)-\ell(\mathbf{z}',w)
\right|
\leq
\frac{C_1+\rho C_2}{T}
\sum_{t=1}^{T}
\mathbf{1}\{z_t\neq z'_t\},
$$
where $C_1$ controls the change in the predicted token for a fixed context, while $C_2$ controls the effect of changing one token in the context. The bounded-output assumption yields
$$
C_1\leq 2B+\log d,
\qquad
C_2\leq 4B,
$$
and hence the bounded-difference coefficients can be chosen as
$$
c_t
\leq
\frac{2B(1+2\rho)+\log d}{T},
\qquad t\in[T].
$$

Applying the McDiarmid-type concentration inequality for Markov chains then bounds each logarithmic moment-generating function in terms of $\tau_{\min}|c|^2$. Substituting this bound into the information-theoretic inequality above gives
$$
\mathbb{E}[\gen(\mathbf{S},W)]
\leq
\frac{I(\mathbf{S};W)}{\lambda}
+
\frac{\lambda\,\tau_{\min}
\left(2B(1+2\rho)+\log d\right)^2}
{8NT}.
$$

Optimizing over $\lambda>0$ yields the desired bound.

\begin{lemma}\label{Hoeffdinf_Lemma}
Under Assumptions \ref{subsubsec:bounded_output}-\ref{Bound_size_model},
let $S_t:=(Z_t,Z_{t-1},\ldots,Z_{t-\rho+1})$, and let
$\tilde{\tau}_{\min}$ be the mixing-time quantity associated with
the history-state process $\{S_t\}_{t\in[T]}$. Then, the expected
generalization gap of the model learned in SL under the
cross-entropy loss satisfies
\begin{align}
    \mathbb{E}\left[\gen(\mathbf{S},W)\right]
    \leq \left(2B+\log d\right)
    \sqrt{
        \frac{\tilde{\tau}_{\min} I(\mathbf{S};W)}
        {2NT}
    },
    \nonumber
\end{align}
where
\begin{equation}
\tilde{\tau}_{\min}
:=
\inf_{0<\delta<1}
\tilde{\tau}(\delta)
\left(\frac{2-\delta}{1-\delta}\right)^2.\label{new_tau-min}
\end{equation}
In particular, the resulting bound is tighter than that of
Theorem~\ref{multi_round_markovian_general_bound} whenever
\begin{align}
    \tilde{\tau}_{\min}
    <
    \left(
        \frac{2B(1+2\rho)+\log d}
        {2B+\log d}
    \right)^2
    \tau_{\min},
    \nonumber
\end{align}
where $\tau_{\min}$ is defined in
\eqref{tau_min_time_mcdiarmid_max_sup}. The proof is provided in Appendix~\ref{Proof_Hoeffding_employing}.
\end{lemma}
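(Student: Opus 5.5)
The plan is to rerun the proof of Theorem~\ref{multi_round_markovian_general_bound} with the Markov chain replaced by the history-state process. Fix $\lambda>0$ and apply Donsker--Varadhan exactly as in that proof. Independence of the $N$ trajectories then gives
\begin{align}
\mathbb{E}[\gen(\mathbf{S},W)]\le \frac{I(\mathbf{S};W)}{\lambda}+\frac{1}{\lambda}\sum_{n=1}^{N}\mathbb{E}_W\log\mathbb{E}\exp\Big[\frac{\lambda}{N}\big(\mathbb{E}\,\ell(\mathbf{Z}'^{(n)},W)-\ell(\mathbf{Z}^{(n)},W)\big)\Big].\nonumber
\end{align}
This step uses only the product structure of $P_{\mathbf{S}}$ and nothing about $\rho$, so it carries over verbatim. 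What remains is to bound the single-trajectory log-MGF with a better bounded-difference constant.

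The key step is to view $\ell(\mathbf{z},w)$ as a function of the history states $s_t=(z_t,\ldots,z_{t-\rho+1})$ instead of the raw tokens. The $t$-th summand $-\log p_w(z_t\mid z_{t-1},\ldots,z_{t-\rho})$ depends on $z_t$ and on the context $(z_{t-1},\ldots,z_{t-\rho})$. Both are contained in the pair $(s_t,s_{t-1})$, or equivalently in $s_t$ augmented by one further coordinate. To keep each term a function of a single state, I would define the state to include the full window $(z_t,\ldots,z_{t-\rho})$; if necessary I would absorb the index shift into the definition of $\tilde{\tau}$.

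Because $\{S_t\}$ is a Markov chain driven by \eqref{markov_chain_generated_data_1}, we can write $\ell(\mathbf{z},w)=\frac1T\sum_t f_w(s_t)$. Changing a single state $s_t$ then alters exactly one summand. By the bounded-output assumption, $|\log\sigma(\nu)_i|\le 2B+\log d$ whenever $\|\nu\|_\infty\le B$, so each $f_w$ has range at most $2B+\log d$. The bounded-difference coefficients with respect to the state sequence are therefore $c_t\le(2B+\log d)/T$, with no factor of $\rho$: the overlap of contexts has been absorbed into the dependence structure of $\{S_t\}$.

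Next, apply the McDiarmid-type inequality for Markov chains (the same one used in Theorem~\ref{multi_round_markovian_general_bound}) to $\{S_t\}$ with its own mixing quantity $\tilde{\tau}_{\min}$ from \eqref{new_tau-min}. This gives
\begin{align}
\log\mathbb{E}\,e^{\frac{\lambda}{N}(\mathbb{E}\ell-\ell)}\le\frac{\lambda^2\tilde{\tau}_{\min}(2B+\log d)^2}{8N^2T}.\nonumber
\end{align}
Summing over $n$ and optimizing $\lambda$ yields the stated bound. The comparison with Theorem~\ref{multi_round_markovian_general_bound} is then immediate: squaring the ratio of the two right-hand sides produces the displayed condition.

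I expect the main obstacle to be the bookkeeping that makes each loss term a function of a single state while keeping $\{S_t\}$ Markov on a product space, so that Definition~\ref{Mixing_time} and the concentration inequality apply. This includes handling the first $\rho$ steps, where contexts are padded with $\varnothing$ under the $[\cdot]^+$ convention. The non-stationary, time-indexed state spaces are already permitted by Definition~\ref{Mixing_time}, so the padded initial states should simply be treated as different state spaces $\mathcal{X}_i$.
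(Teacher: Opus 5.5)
Your outer argument (Donsker--Varadhan, factorization over the $N$ independent trajectories, coefficients $(2B+\log d)/T$ with no factor $\rho$, optimization over $\lambda$, and the final comparison) matches the paper. The gap is in the concentration step, which does not deliver the stated constant.

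The lemma fixes $\tilde\tau$ as the mixing time of the $\rho$-token chain $S_t=(Z_t,\ldots,Z_{t-\rho+1})$. Corollary~\ref{mcdiarmid_inequality_markov_chain}, applied to your augmented window chain $\tilde S_t=(Z_t,\ldots,Z_{t-\rho})$, instead produces the mixing time of $\{\tilde S_t\}$. Now $\tilde S_{i+m}$ is in bijection with $(S_{i+m-1},Z_{i+m})$, and the kernel generating $Z_{i+m}$ from $S_{i+m-1}$ is the same for both starting points. Hence the $m$-step total-variation distance of $\{\tilde S_t\}$ equals the $(m-1)$-step distance of $\{S_t\}$, and its mixing time is in general $\tilde\tau(\delta)+1$. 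You therefore only obtain the bound with $\inf_{0<\delta<1}(\tilde\tau(\delta)+1)\bigl(\frac{2-\delta}{1-\delta}\bigr)^2$ in place of $\tilde\tau_{\min}$. For i.i.d.\ tokens and $\rho=1$ this quantity is $8$ rather than $\tilde\tau_{\min}=4$, so your bound is worse by a factor $\sqrt2$. ``Absorbing the index shift into the definition of $\tilde\tau$'' therefore changes the lemma rather than proving it.

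The missing idea is to use the one-step slack inside the block coupling, rather than invoking the Markov corollary as a black box. This is what the paper does (Lemma~\ref{lem:window-loss-concentration}):

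\begin{itemize}
\item It applies Lemma~\ref{Mdiarmid_type_lemma} directly to the non-Markov loss sequence $Y_t=\frac1T\log\frac{1}{p_W(Z_t\mid Z_{t-1},\ldots,Z_{t-\rho})}$, partitioned into blocks of length $L=\tilde\tau(\delta)$.
\item It builds the Marton coupling by maximally coupling the $\rho$-token states at the block boundaries $s+kL$, where $s$ ends the block at which the two histories differ.
\item The key observation is that every loss in block $i+k$ is a function of the state $S_{s+(k-1)L}$ immediately preceding that block together with later tokens. The first loss in the block has exactly this $\rho$-token state as its context and a fresh token as its target.
\item Consequently $\Gamma_{i,i+1}\le 1$ and $\Gamma_{i,i+k}\le\delta^{k-1}$ for $k\ge 2$, already with blocks of length $\tilde\tau(\delta)$ rather than $\tilde\tau(\delta)+1$.
\end{itemize}

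Combining this with $\|\Gamma\|\le\frac{2-\delta}{1-\delta}$ and $\|C(c)\|^2\le L\|c\|^2$ gives exactly $\tilde\tau_{\min}$. The rest of your argument then goes through unchanged.
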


\vspace{-0.2cm}

\subsection{Lossy generalization bound}

In this section, we adopt an approach that is essentially similar to that in~\cite{Sefidgaran2022, sefidgaran2022rate} to improve upon the bound of Theorem~\ref{multi_round_markovian_general_bound}. In particular, this improvement prevents the new bound from taking large values for deterministic algorithms with continuous hypothesis spaces. This permits a more flexible analysis in which the gap is bounded by the minimum information rate required to represent the model updates within a specified error tolerance.

Fix $\epsilon\in\mathbb{R}^+$. Define the rate-distortion function
\begin{align}
    R_{\mathcal{D}}(\epsilon)\coloneqq\inf I(\mathbf{S};\hat{W}),
    \label{general_bound_lossy_mutual_1}
\end{align}
where the mutual information is evaluated with respect to $P_{U}P_{\mathbf{S}}P_{\hat{W}|\mathbf{S},U}$, and the infimum is taken over all conditional Markov kernels $P_{\hat{W}|\mathbf{S},U}$ that satisfy
\begin{align}
\mathbb{E}\Big[\gen(\mathbf{S},{W}) - \gen(\mathbf{S},\hat{W})\Big] \leq \epsilon,
\label{def:Rdist}
\end{align}
for some $\epsilon>0$ in~\eqref{def:Rdist}, with the expectation taken with respect to $P_{U}P_{\mathbf{S},W}P_{\hat{W}|\mathbf{S},{W},U}$.

\begin{theorem}\label{lossy_inexpectation_bound_federated}
Under Assumptions \ref{subsubsec:bounded_output}-\ref{Bound_size_model}, the generalization gap of the model learned in SL under the cross-entropy loss is bounded as
\begin{align}
\mathbb{E}\left[\gen(\mathbf{S},W)\right]\leq \left(2B(1+2\rho)+\log d\right)\sqrt{\frac{\tau_{\min}\: R_{\mathcal{D}}(\epsilon)}{2NT}}+\epsilon,
\nonumber
\end{align}
where the rate-distortion function $R_{\mathcal{D}}(\epsilon)$ is defined as in~\eqref{general_bound_lossy_mutual_1}.
\end{theorem}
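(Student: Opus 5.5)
The plan is to reduce Theorem~\ref{lossy_inexpectation_bound_federated} to the lossless argument of Theorem~\ref{multi_round_markovian_general_bound}, applied to the compressed model $\hat W$ in place of $W$. Fix $\epsilon>0$ and any kernel $P_{\hat W|\mathbf{S},U}$ that is feasible in \eqref{general_bound_lossy_mutual_1}, i.e., that satisfies the distortion constraint \eqref{def:Rdist}. Adding and subtracting the generalization gap of $\hat W$ gives
\begin{align}
\mathbb{E}\left[\gen(\mathbf{S},W)\right]
=\mathbb{E}\left[\gen(\mathbf{S},\hat W)\right]
+\mathbb{E}\left[\gen(\mathbf{S},W)-\gen(\mathbf{S},\hat W)\right]
\le \mathbb{E}\left[\gen(\mathbf{S},\hat W)\right]+\epsilon .\nonumber
\end{align}
It therefore suffices to bound $\mathbb{E}[\gen(\mathbf{S},\hat W)]$ by the right-hand side of Theorem~\ref{multi_round_markovian_general_bound}, with $I(\mathbf{S};W)$ replaced by $I(\mathbf{S};\hat W)$. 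We then take the infimum over all feasible kernels. Note that the added term $\epsilon$ does not depend on the kernel, so this infimum produces $R_{\mathcal{D}}(\epsilon)$ under the square root.

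To bound $\mathbb{E}[\gen(\mathbf{S},\hat W)]$, we repeat the proof of Theorem~\ref{multi_round_markovian_general_bound} with the joint law $P_{\mathbf{S},\hat W}$, obtained by marginalizing $P_U P_{\mathbf{S},W}P_{\hat W|\mathbf{S},W,U}$. First, we apply the Donsker--Varadhan inequality to $P_{\mathbf{S}|\hat W}$ versus $P_{\mathbf{S}}$. The data marginal is still $\prod_n P_{\mathbf{Z}^{(n)}}$, so the log-MGF term factorizes across trajectories exactly as before. Second, we apply the bounded-difference estimate with coefficients $c_t\le (2B(1+2\rho)+\log d)/T$. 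Third, we apply the Markov-chain McDiarmid inequality with $\tau_{\min}$. These three steps give
\begin{align}
\mathbb{E}\left[\gen(\mathbf{S},\hat W)\right]\le \frac{I(\mathbf{S};\hat W)}{\lambda}+\frac{\lambda\,\tau_{\min}\left(2B(1+2\rho)+\log d\right)^2}{8NT},\nonumber
\end{align}
and optimizing over $\lambda>0$ gives the square-root form.

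The main obstacle is that the bounded-difference step requires Assumption~\ref{subsubsec:bounded_output} to hold for $\hat W$, since it is the model evaluated inside the loss. I would handle this by restricting the infimum in \eqref{general_bound_lossy_mutual_1} to kernels whose outputs lie in $\mathcal{W}$. The alternative is to note that the constants $C_1\le 2B+\log d$ and $C_2\le 4B$ only use $\|g_{\hat w}\circ\mathcal{E}^{\odot\rho}\|_\infty\le B$, which holds whenever $\hat W\in\mathcal{W}$ almost surely.

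A second, minor point concerns the auxiliary randomness $U$, which is independent of $\mathbf{S}$. Since $U$ is independent of $\mathbf{S}$, the mutual information is computed under $P_UP_{\mathbf{S}}P_{\hat W|\mathbf{S},U}$. The change of measure then only involves $(\mathbf{S},\hat W)$, so no conditioning on $U$ is needed. Finally, since the bound holds for every feasible kernel and is monotone in $I(\mathbf{S};\hat W)$, passing to the infimum yields the stated bound with $R_{\mathcal{D}}(\epsilon)$.
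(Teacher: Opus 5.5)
Your proposal is correct and is essentially the argument the paper intends; it gives no separate proof and defers to the lossy approach of Sefidgaran et al. You peel off $\mathbb{E}[\gen(\mathbf{S},W)-\gen(\mathbf{S},\hat W)]\le\epsilon$, rerun the Donsker--Varadhan and Markov-chain McDiarmid steps of Theorem~\ref{multi_round_markovian_general_bound} for $\hat W$ under the marginal $P_{\mathbf{S},\hat W}$, and take the infimum over feasible kernels. The restriction you flag, that $\hat W$ must satisfy Assumption~\ref{subsubsec:bounded_output} (e.g., $\hat W\in\mathcal{W}$ almost surely), is genuinely needed and is only implicit in the paper's definition of $R_{\mathcal{D}}(\epsilon)$. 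Without it, one can build feasible $\hat W$ with arbitrarily small $I(\mathbf{S};\hat W)$ but huge positive $\mathbb{E}[\gen(\mathbf{S},\hat W)]$, namely rare data-dependent models with unbounded logits. Then $R_{\mathcal{D}}(\epsilon)=0$, and the stated bound would fail for any algorithm with positive expected gap.
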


The advantage of the new bound is that, for some non-negative error tolerances $\epsilon$, the linear increase in the bound, compared with Theorem~\ref{multi_round_markovian_general_bound}, is favorably compensated by a reduction in the rate inside the square root. As will become clearer from the results for the Linear (LNTP) and Self-Attention Next Token Prediction (SANTP) architectures in the next section, this lossy refinement is what enables the derivation of explicit scaling laws with respect to the system parameters, which are not enabled by the result of Theorem~\ref{multi_round_markovian_general_bound}.



\section{Generalization bound on Next token prediction problem }
\label{Next_token_prediction_section}

\subsection{Margin-Based Generalization Gap}

Next-token prediction is based on the cross-entropy loss, which, by selecting the token that is most likely given the past $\rho$ tokens, essentially seeks to maximize the likelihood gap between the most likely and second most likely tokens. As such, it bears some similarity to classification problems: under supervised fine-tuning, each context is paired with a single target token, and greedy decoding selects the arg-max over the vocabulary. Minimizing the loss to within a margin is known to enable better rates, especially in classification problems~\cite{NIPS2017_b22b257a,gronlund2020}, where it also leads to more efficient algorithms. The corresponding distinction is that, under the cross-entropy loss, the algorithm continues to push the likelihoods further apart across the iterates, whereas, under the margin loss, it stops doing so once the margin is met. It is precisely for this reason that a lower sample complexity is required for the margin loss to achieve a desired accuracy. In this section, we consider a margin-based loss obtained by incorporating a margin parameter $\theta>0$.

For model ${w}$ and data sequence $\mathbf{z}^{(n)}$, let
\begin{align}
\eta({w},t,\mathbf{z}^{(n)})&\coloneqq\log\frac{C_{1}({w},t,\mathbf{z}^{(n)})}{C_{2}({w},t,\mathbf{z}^{(n)})},\nonumber\\
\ell_{\theta}(\mathbf{z}^{(n)},{w})&\coloneqq \frac{1}{T}\sum_{t \in [T]}
    \mathbf{1}\left\{\eta({w},t,\mathbf{z}^{(n)}) < \theta \right\},
 \label{Margin_loss_function_one_sample}
\end{align}
where
\begin{align}
C_{1}({w},t,\mathbf{z}^{(n)}) &= 
p_{w}\!\left(z^{(n)}_{t}|z^{(n)}_{t-1}, \dots, z^{(n)}_{t-\rho}\right),\nonumber\\
C_{2}({w},t,\mathbf{z}^{(n)}) &= \max_{z'^{(n)}_{t} \neq z^{(n)}_{t}}
p_{w}\!\left({z}'^{(n)}_{t}|z^{(n)}_{t-1}, \dots, z^{(n)}_{t-\rho}\right).\nonumber
\end{align}

The population and empirical risks are evaluated, respectively, as
\begin{align}
    \mathcal{L}_{\theta}({w})
    \coloneqq \frac{1}{T}\sum\nolimits_{t\in [T]}
    \mathbb{E}\left[
        \mathbf{1}\left\{\eta({w},t,\mathbf{z}^{(n)})  < {\theta} \right\}
    \right],
    \nonumber
\end{align}
and
\begin{align}
    \hat{\mathcal{L}}_{\theta}(\mathbf{s},{w})
    \coloneqq \frac{1}{NT}  \sum\nolimits_{n \in [N]} \left[\sum\nolimits_{t \in [T]}\mathbf{1}\left\{\eta({w},t,\mathbf{z}^{(n)})<\theta \right\}\right].
\nonumber
\end{align}

The margin generalization gap induced by model ${w}$ is
\begin{align}
\operatorname{gen}_{\theta} (\mathbf{s}, {w})
    =  \mathcal{L}_{-\theta/2}({w})
      - \hat{\mathcal{L}}_{\theta/2}(\mathbf{s},{w}).
\nonumber
\end{align}

\begin{corollary}\label{Extension_Lemma_1_Margin_generalization}
For any $\theta\in\mathbb{R}$, the margin loss defined in
\eqref{Margin_loss_function_one_sample} satisfies the
bounded-difference condition in \eqref{Mdiarmid_type_lemma_1}.
In particular, for any two sequences
$\mathbf{z}=\{z_1,\ldots,z_T\}$ and
$\mathbf{z}'=\{z'_1,\ldots,z'_T\}$,
\begin{align}
    \ell_{\theta}(\mathbf{z},w)
    -
    \ell_{\theta}(\mathbf{z}',w)
    &\leq
    \frac{\rho+1}{T}
    \sum_{t=1}^{T}
    \mathrm{1}\left\{z_t\neq z'_t\right\},
    \nonumber\\
    \ell_{\theta}(\mathbf{z},w)
    -
    \ell_{\theta}(\mathbf{z}',w)
    &\leq
    \frac{1}{T}
    \sum_{t=1}^{T}
    \mathrm{1}\left\{
        \eta(w,t,\mathbf{z})
        \neq
        \eta(w,t,\mathbf{z}')
    \right\}.
    \nonumber
\end{align}

Consequently, Theorem~\ref{multi_round_markovian_general_bound}
yields the following expected generalization bound under the
margin loss:
\begin{align}
    \mathbb{E}\left[\gen_{\theta}(\mathbf{S},W)\right]
    &\leq
    (\rho+1)
    \sqrt{
        \frac{
            \tau_{\min}I(\mathbf{S};W)
        }{
            2NT
        }
    },
    \label{Margin_generalization_bound_tau_out}
\end{align}
In addition, Lemma~\ref{Hoeffdinf_Lemma} yields
the refined bound
\begin{align}
    \mathbb{E}\left[\gen_{\theta}(\mathbf{S},W)\right]
    &\leq
    \sqrt{
        \frac{
            \tilde{\tau}_{\min}I(\mathbf{S};W)
        }{
            2NT
        }
    },
    \label{Margin_generalization_bound_tau_in}
\end{align}
where $\tilde{\tau}_{\min}$ is defined in \eqref{new_tau-min}.

The refined bound removes the explicit factor $\rho+1$ and is
strictly tighter than \eqref{Margin_generalization_bound_tau_out}
whenever
\begin{align}
    \tilde{\tau}_{\min}<(\rho+1)^2\tau_{\min}.
    \nonumber
\end{align}
\end{corollary}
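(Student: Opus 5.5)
The plan is to establish the two bounded-difference inequalities first, and then feed them into the same Donsker--Varadhan/McDiarmid machinery used in the proofs of Theorem~\ref{multi_round_markovian_general_bound} and Lemma~\ref{Hoeffdinf_Lemma}. The only new ingredient is that the margin loss has values in $[0,1]$ per time step. This replaces the cross-entropy constants $C_1\le 2B+\log d$ and $C_2\le 4B$ by $1$.

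\textbf{Step 1: the two inequalities.} Fix $w$ and two sequences $\mathbf{z},\mathbf{z}'$. Write
\begin{align}
\ell_\theta(\mathbf{z},w)-\ell_\theta(\mathbf{z}',w)=\frac1T\sum_{t}\Big(\mathbf{1}\{\eta(w,t,\mathbf{z})<\theta\}-\mathbf{1}\{\eta(w,t,\mathbf{z}')<\theta\}\Big).\nonumber
\end{align}
Each summand lies in $\{-1,0,1\}$. It vanishes whenever $\eta(w,t,\mathbf{z})=\eta(w,t,\mathbf{z}')$, which gives the second inequality.

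For the first inequality, note that $\eta(w,t,\cdot)$ depends only on the window $(z_t,z_{t-1},\ldots,z_{t-\rho})$. Hence $\eta(w,t,\mathbf{z})\neq\eta(w,t,\mathbf{z}')$ forces $z_s\neq z'_s$ for some $s\in\{t-\rho,\ldots,t\}$. A union bound gives
\begin{align}
\mathbf{1}\{\eta(w,t,\mathbf{z})\ne\eta(w,t,\mathbf{z}')\}\le\sum_{s=t-\rho}^{t}\mathbf{1}\{z_s\ne z'_s\}.\nonumber
\end{align}
Summing over $t$ and exchanging the order of summation, each index $s$ appears in at most $\rho+1$ windows. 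This yields the coefficient $(\rho+1)/T$, so $c_t\le(\rho+1)/T$, which matches condition \eqref{Mdiarmid_type_lemma_1}.

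\textbf{Step 2: the bound with $\tau_{\min}$.} I repeat the proof of Theorem~\ref{multi_round_markovian_general_bound} verbatim, with $\ell$ replaced by the margin losses. One subtlety needs care. $\gen_\theta$ compares $\mathcal{L}_{-\theta/2}$ with $\hat{\mathcal{L}}_{\theta/2}$, i.e.\ it uses different thresholds. Since $\mathbf{1}\{\eta<-\theta/2\}\le\mathbf{1}\{\eta<\theta/2\}$ for $\theta\ge0$, we have $\mathcal{L}_{-\theta/2}(w)\le\mathcal{L}_{\theta/2}(w)$. Therefore $\gen_\theta(\mathbf{s},w)$ is dominated by the ordinary gap of the single loss $\ell_{\theta/2}$. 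For $\theta<0$ the inequality flips, and I would instead argue directly that the whole DV argument only needs the bounded-difference property of whichever indicator loss appears in the centered exponent. I would state the result for the relevant sign and keep this as a remark.

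Apply Donsker--Varadhan across the $N$ independent trajectories, then the Markov McDiarmid inequality with $|c|^2\le T(\rho+1)^2/T^2$. This bounds each log-MGF by $\lambda^2\tau_{\min}(\rho+1)^2/(8N^2T)$. Optimizing over $\lambda$ gives \eqref{Margin_generalization_bound_tau_out}.

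\textbf{Step 3: the refined bound with $\tilde\tau_{\min}$.} Following Lemma~\ref{Hoeffdinf_Lemma}, view the loss as a function of the history-state process $S_t=(Z_t,\ldots,Z_{t-\rho+1})$, augmented if needed so that each term is a function of a single state. Each summand then depends on exactly one state and changes by at most $1$. Hence the coefficients are $c_t\le 1/T$, i.e.\ the second inequality of Step~1 expressed at the state level. McDiarmid for $\{S_t\}$ with $\tilde\tau_{\min}$ then gives \eqref{Margin_generalization_bound_tau_in}. The comparison $\tilde\tau_{\min}<(\rho+1)^2\tau_{\min}$ follows by directly comparing the two right-hand sides.

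\textbf{Main obstacle.} The delicate point is the window mismatch: $\eta$ uses $\rho+1$ tokens $(z_{t-\rho},\ldots,z_t)$, while $S_t$ contains $\rho$ tokens. I would first try to redefine the state as $S_t=(Z_t,\ldots,Z_{t-\rho})$ and check that Lemma~\ref{Hoeffdinf_Lemma}'s argument is insensitive to this shift. Failing that, I would note that $\eta(w,t,\cdot)$ is a function of the pair $(S_{t-1},Z_t)$ and absorb this into the definition of $\tilde\tau$.
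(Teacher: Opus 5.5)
Your proposal is correct and follows essentially the route the paper intends. The paper gives no separate proof of this corollary; it simply feeds the coefficients $(\rho+1)/T$ and $1/T$ into the proofs of Theorem~\ref{multi_round_markovian_general_bound} and Lemma~\ref{Hoeffdinf_Lemma}, which is what your Steps~1--3 do.

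Two remarks.

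\textbf{The threshold mismatch.} Your reduction $\mathcal{L}_{-\theta/2}(w)\le\mathcal{L}_{\theta/2}(w)$ is a step the paper silently skips, and it is genuinely needed. For $\theta<0$ the bounds are false. Let the algorithm output a fixed $w_0$ with $g_{w_0}\equiv 0$. Then $I(\mathbf{S};W)=0$, but $\eta\equiv 0$, so $\mathcal{L}_{-\theta/2}(w_0)=1$ and $\hat{\mathcal{L}}_{\theta/2}(\mathbf{S},w_0)=0$, giving $\mathbb{E}[\gen_{\theta}(\mathbf{S},W)]=1$. So drop the ``argue directly'' fallback and state the bounds for $\theta\ge 0$, which is the paper's setting $\theta>0$.

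\textbf{The Step~3 obstacle.} Resolve it with your first option rather than by augmenting the state. Lemma~\ref{lem:window-loss-concentration} is already formulated for the per-step loss sequence $Y_t$ with the $\rho$-token state $S_t=(Z_t,\ldots,Z_{t-\rho+1})$. Its block coupling never needs an augmented state. The loss at time $a$ is a function of $(S_{a-1},Z_a)$, so agreement of $S_{a-1}$ followed by a common continuation makes every loss from time $a$ on agree. Rerunning that proof with $Y_t=\frac{1}{T}\mathbf{1}\{\eta(w,t,\mathbf{Z})<\theta/2\}$ and coefficients $1/T$ gives exactly the stated $\tilde{\tau}_{\min}$. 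By contrast, augmenting the state to $(Z_t,\ldots,Z_{t-\rho})$ would replace $\tilde{\tau}(\delta)$ by the augmented chain's mixing time, which is essentially $\tilde{\tau}(\delta)+1$. That gives a slightly weaker constant than the one claimed.
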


\subsection{Linear Next Token Prediction}

We consider the linear NTP model studied in~\cite{yuksel2025sample}, which is similar to the linear auto-regressive models considered in~\cite{malach2023auto}.   
For $C \in \mathbb{R}^{+}$, let \( \mathcal{M}_{C}^{d_1 \times d_2} = \left\{ M \in \mathbb{R}^{d_1 \times d_2} \: :\: \|M\| \leq C\right\} \) denote the set of matrices in \( \mathbb{R}^{d_1 \times d_2} \) with $C$-bounded Frobenius norm. The parameter space is
\begin{equation}
    \mathcal{W} \coloneqq \left\{ \mathbf{M} = (M_{1}, \ldots, M_{\rho}) \in \mathcal{M}_{C}^{d \times \rho d}\right\}.
   \nonumber
\end{equation}
Accordingly, the class of logit mappings associated with the model is
$\mathcal{G}_{\mathcal{W}} \coloneqq \left\{ g_{\mathbf{M}} \;\middle|\; \mathbf{M} \in \mathcal{W} \right\}$, where, for an input \( \mathbf{x} = (x_{t-1}, \ldots, x_{t-\rho}) \in\mathbb{R}^{\rho d}\), we set
$g_{\mathbf{M}}(\mathbf{x}) \coloneqq \sum_{i=1}^{\rho} M_{i} x_{t-i}$.

\begin{theorem}\label{Theorem_Multi_round_LNTP}
The expected margin generalization gap of LNTP-SL is upper-bounded as follows:
 \begin{align}
        \mathbb{E}\left[\gen_{\theta}(\mathbf{S},W)\right]
        &\leq
        \mathcal{O}\left(\left(\frac{C\rho^{\frac{3}{2}}}{\theta}\right)^2
        \sqrt{
            \frac{
                 {\tau}_{\min}
                \log(dNT)
            }{
                 NT
            }
        }
        \right).
        \nonumber
    \end{align}
\end{theorem}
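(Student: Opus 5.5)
The plan is to apply the lossy bound of Theorem~\ref{lossy_inexpectation_bound_federated}, specialized to the margin loss as in Corollary~\ref{Extension_Lemma_1_Margin_generalization}. For the margin loss the bounded-difference constant is $(\rho+1)/T$, so we expect
\begin{align}
\mathbb{E}\left[\gen_{\theta}(\mathbf{S},W)\right]\leq (\rho+1)\sqrt{\frac{\tau_{\min}\, I(\mathbf{S};\hat{W})}{2NT}}+\epsilon, \nonumber
\end{align}
valid for any auxiliary $\hat W$ that satisfies the distortion constraint. To meet that constraint we will exploit the margin shift built into $\gen_\theta=\mathcal{L}_{-\theta/2}-\hat{\mathcal{L}}_{\theta/2}$. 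We build $\hat W$ so that, with high probability, its logits differ from those of $W$ by less than $\theta/4$ in sup norm on each relevant context. Then $\eta(\hat W,t,\mathbf z)$ lies within $\theta/2$ of $\eta(W,t,\mathbf z)$, since the log-ratio of softmax outputs equals a difference of logits. This gives $\mathcal{L}_{-\theta/2}(W)\le \mathcal{L}_{0}(\hat W)+\text{(failure prob.)}$ and $\hat{\mathcal L}_{\theta/2}(W)\ge \hat{\mathcal L}_{0}(\hat W)$ up to failure events. We therefore bound the generalization gap of $\hat W$ at margin $0$ (or at a $\theta/4$-shifted margin, handled the same way), and the distortion $\epsilon$ equals the failure probability.

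For the construction we use a Johnson--Lindenstrauss random projection, with common randomness $U$ playing the role of the kernel in the definition of $R_{\mathcal D}(\epsilon)$. Let $A\in\mathbb{R}^{k\times \rho d}$ be a Gaussian matrix scaled by $1/\sqrt{k}$. Each logit $\langle m_j, x\rangle$ is a row of $\mathbf M$ paired with a one-hot concatenated input with $\|x\|\le\sqrt\rho$, and we approximate it by $\langle A m_j, A x\rangle$. JL inner-product preservation gives an error of order $\|m_j\|\sqrt{\rho}\sqrt{\log(1/\delta)/k}\le C\sqrt{\rho}\sqrt{\log(1/\delta)/k}$. Requiring this to be below $\theta/8$ with a union bound over the $d$ logits, the $NT$ training contexts and a test context, with $\delta\approx 1/(dNT)$, gives $k\asymp (C^2\rho/\theta^2)\log(dNT)$. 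We then quantize (or add independent Gaussian noise to) the projected $d\times k$ matrix $A\mathbf M^\top$ at resolution about $\theta/(8\sqrt{\rho})$. This costs roughly $dk\log(C\sqrt\rho/\theta)$ nats. We would prefer to avoid the factor $d$: one option is to apply the JL projection to the output side as well, or to retain only the rows selected by the data, mirroring the ``selective parameter retention'' mentioned in the introduction. If that fails, the extra factor would have to be absorbed into the $\mathcal{O}$.

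Plugging $I(\mathbf S;\hat W)\lesssim k\cdot\mathrm{polylog}$ into the bound, together with $\epsilon=O(1/\sqrt{NT})$, gives
\begin{align}
(\rho+1)\sqrt{\tau_{\min}\,\tfrac{C^2\rho}{\theta^2}\log(dNT)\cdot\log(\cdot)/(NT)}. \nonumber
\end{align}
The stated bound $(C\rho^{3/2}/\theta)^2\sqrt{\tau_{\min}\log(dNT)/(NT)}$ is looser than this, since $(\rho+1)\,C\sqrt{\rho}/\theta\lesssim (C\rho^{3/2}/\theta)^2$ once $C\rho^{3/2}/\theta\ge1$ (the opposite case is trivial because the gap is at most $1$). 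So loose accounting suffices, and the remaining logarithmic factors can be bounded crudely by $\log(dNT)$ using $C,\rho,1/\theta$ polynomial growth, or simply absorbed.

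The main obstacle is making the distortion step rigorous under dependence. The test context is a fresh trajectory, while the training contexts are dependent, so the union bound must be taken over the realized contexts conditional on $\mathbf S$ and $W$, with $A$ independent of both. We must also check that $\hat W$ being a function of $(\mathbf S,W,U)$ is compatible with $R_{\mathcal D}$: the mutual information must be bounded by $H$ of the quantized projected parameters. This requires that the decoding of logits uses the shared $A$ rather than the lift $A^\top A\mathbf M$, which we would handle by defining $\hat W$ as the pair (quantized $A\mathbf M^\top$, $A$) with the predictor $x\mapsto \widehat{\mathbf M}Ax$.
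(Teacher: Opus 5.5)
Your distortion strategy is sound and is essentially the paper's. It uses the margin shift built into $\gen_\theta$, a sup-norm logit perturbation below $\theta/4$, a Gaussian projection as common randomness, and decoding $x\mapsto\widehat{\mathbf M}Ax$. That decoding is exactly the paper's lift $V(\hat W,\ell)=\mathsf{A}^\top\hat W_\ell$, so there is no issue there. Your worry about dependence is unnecessary. The distortion is an average of indicators, and the projection and noise are independent of $(\mathbf S,W)$ and of the test trajectory, so for each index $t$ you only need a union bound over the $d$ logits.

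\textbf{The gap is in the rate.} You code all $d$ projected rows, so $I(\mathbf S;\hat W)\approx dk\log(\cdot)$ and your final bound carries a factor $\sqrt d$. This cannot be absorbed into the $\mathcal{O}$. The parameter $d$ appears explicitly in the statement, only through $\log(dNT)$, and your bound exceeds the claimed one as soon as $d\gtrsim C^2\rho^3/\theta^2$. Retaining ``rows selected by the data'' does not help. Every margin $\eta(w,t,\mathbf z)$ involves all $d$ output logits through the maximum over $\ell'\neq\ell$, so the data select every row. The output-side projection you mention is not worked out and does not obviously recover the stated $\log(dNT)$ dependence.

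\textbf{The missing idea is magnitude-based sparsification, enabled by the Frobenius constraint.} Set to zero every row $m_j$ (the paper's $V(w,j)$) whose norm is below a threshold. The paper uses $\|m_j\|<\theta/(4\rho)$; with your Cauchy--Schwarz accounting you could use $\theta/(8\sqrt\rho)$. A discarded row's contribution to any logit is then deterministically below the tolerance, so discarding costs no distortion. Since $\sum_j\|m_j\|^2=\|\mathbf M\|^2\le C^2$, at most $K=\min\{d,(4C\rho/\theta)^2\}$ rows survive. Only these are projected and perturbed. The surviving support is described by a binary mask $\mathbf T_{[d]}$ with $H(\mathbf T_{[d]})=\mathcal{O}\bigl(K(1+\log(d/K))\bigr)$, which is only logarithmic in $d$. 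Conditioning on the mask gives $I(\mathbf S;\hat W)\le I(W;\hat W)\le mK\log\bigl((c_1C+\nu)/\nu\bigr)+H(\mathbf T_{[d]})=\mathcal{O}\bigl((C\rho/\theta)^4\log(dNT)\bigr)$. Plugging this into $(\rho+1)\sqrt{\tau_{\min}I/(NT)}$ yields exactly $(C\rho^{3/2}/\theta)^2\sqrt{\tau_{\min}\log(dNT)/(NT)}$; your sharper $k$ would give something smaller.

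A minor point: per-coordinate quantization of a $k$-dimensional vector at resolution $\theta/(8\sqrt\rho)$ does not by itself keep $\langle\cdot,Ax\rangle$ within $\theta/8$, since the worst-case error grows like $\sqrt k$. The paper instead adds noise uniform on an $m$-ball of constant radius $\nu$, whose inner product with $\mathsf{A}x$ concentrates at scale $\nu\|\mathsf{A}x\|/\sqrt m$. Refining your resolution would also work and changes only logarithmic factors.
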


The complete proof is provided in Appendix~\ref{Proof_Theorem_Multi_round_LNTP}. 
To prove Theorem~\ref{Theorem_Multi_round_LNTP}, we leverage the extension of Theorem~\ref{lossy_inexpectation_bound_federated} derived for the margin loss in the general setting. We then specialize this result to the linear model by considering a carefully chosen explicit instance of the compression scheme, as outlined below. The remainder of the proof is essentially algebraic and consists of evaluating and bounding the rate and distortion of this scheme in the setting considered here.

Here, we present a proof sketch highlighting the main ideas.

\emph{Proof sketch.}
The proof proceeds by constructing an explicit admissible compression kernel for the lossy margin-based generalization bound and then controlling its distortion and rate. For the LNTP model, the log-margin can be written as the difference between two linear scores,
$$
\left\langle V(W,\ell),V(z_{t-1:t-\rho})\right\rangle
-
\max_{\ell'\neq\ell}
\left\langle V(W,\ell'),V(z_{t-1:t-\rho})\right\rangle,
$$
where $V(W,\ell)\in\mathbb{R}^{\rho d}$ collects the parameters associated with token $\ell$. Consequently, if $\hat W$ denotes a compressed model, the change in the log-margin satisfies
$$
D_t
\leq
2\max_{\ell\in[d]}
\left|
\left\langle
V(W-\hat W,\ell),
V(z_{t-1:t-\rho})
\right\rangle
\right|.
$$

We construct $\hat W$ using a Gaussian projection
$\mathsf A\in\mathbb{R}^{m\times\rho d}$ with
$
m=
\mathcal{O}\!\left(
\left(\frac{C\rho}{\theta}\right)^2\log(dNT)
\right).
$

Only parameter vectors $V(W,\ell)$ with sufficiently large norm are retained. Each retained vector is projected by $\mathsf A$; vectors whose projections exhibit atypically large norms are discarded, and independent bounded noise, uniform over an $m$-dimensional Euclidean ball, is added before reconstruction through $\mathsf A^\top$.

The first part of the analysis verifies that this compression satisfies the required distortion constraint. After applying a union bound over the retained vectors, the probability that the margin perturbation exceeds $\theta/2$ is decomposed into four events: distortion of an inner product under the Gaussian projection, the contribution of the added noise, an atypically large projection of the context vector, and an atypically large projection of the model vector. Standard Gaussian-projection concentration inequalities, together with the chosen projection dimension, make each contribution sufficiently small, yielding an overall distortion of order
$\epsilon=\mathcal{O}\!\left(\frac{1}{NT}\right).$

It remains to bound the rate of this compression. Let $\mathbf{T}_{[d]}$ denote the binary vector identifying the retained parameter vectors. By data processing,
$I(\mathbf{S};\hat W)\leq I(W;\hat W)$. Conditioned on $\mathbf{T}_{[d]}$, the entropy of each retained compressed vector is bounded by the volume of an $m$-dimensional ball containing its support, whereas its conditional entropy given $W$ is determined by the smaller ball supporting the additive noise. This yields a volume-ratio bound on the information carried by each retained vector. The additional cost of specifying the retained set is $H(\mathbf{T}_{[d]})$, which is controlled using a binary-entropy bound on the number of possible sparse support patterns. Combining these estimates gives, up to universal constants,
$I(\mathbf{S};\hat W)=
\mathcal{O}\!\left(
\left(\frac{C\rho}{\theta}\right)^4
\log(dNT)
\right).$

Substituting this rate, together with
$\epsilon=\mathcal{O}(1/(NT))$,
into the lossy margin-generalization bound based on
$\widetilde{\tau}_{\min}$ yields the claimed scaling.

\subsection{Self-Attention Next Token Prediction}

We now consider a Transformer-type architecture. For an agent, let $\mathcal{M}_{1},\mathcal{M}_{2},\mathcal{M}_{3}$ denote sets of $d\times d$ matrices with bounded Frobenius norms representing, e.g., query, key, and value matrices. Define the maps $g_{M_{[2]}}:\mathbb{R}^{\rho d}\to\mathbb{R}^{\rho}$ and $f_{M_{[3]}}:\mathbb{R}^{\rho d}\to\mathbb{R}^{d}$ as
\begin{align}
g_{M_{[2]}}(\mathbf{z})&\coloneqq\sigma\left(\frac{1}{\sqrt{\rho}}\big<M_{1} \mathcal{E}(z_{t-1}),M_{2}\mathcal{E}({z}_{t-j})\big>\right)_{j=1}^{\rho},\quad \nonumber\
f_{M_{[3]}}(\mathbf{z})&\coloneqq M_{3}\left(\sum_{j=1}^{\rho}\left(g_{M_{[2]}}(\mathbf{z})\right)_j\cdot\mathcal{E}({x}_{t-j})\right).\nonumber
\end{align}
That is, we consider any agent whose model belongs to the following constrained model class:

$$
\mathcal{W}
\coloneqq
\left\{
M_{[3]}=(M_1,M_2,M_3)
\;\middle|\;
\substack{
M_i\in\mathcal{M}_i^{d\times d},\ \|M_i\|\le\sqrt{C},\ i\in[2],\\
\|M_3\|\le C
}
\right\}.
$$

The corresponding hypothesis class is defined as

$$
\mathcal{F}
\coloneqq
\left\{
f_{M_{[3]}}
\;\middle|\;
M_{[3]}\in\mathcal{W}
\right\}.
$$

\begin{theorem}\label{Theorem_Multi_round_SANTP}
The expected generalization gap of SANTP-SL is bounded by
\begin{align}
\mathbb{E}\left[\gen_{\theta}(\mathbf{S},W)\right]
&\leq
\mathcal{O}\left(\left(\frac{ C^2{\rho}}{\theta}\right)^2
\sqrt{
\frac{
{\tau}_{\min}
\log(\rho dNT)
}{
NT
}
}
\right).
\nonumber
\end{align}
\end{theorem}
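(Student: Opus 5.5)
The plan mirrors the proof of Theorem~\ref{Theorem_Multi_round_LNTP}. First, I will use the margin-loss version of Theorem~\ref{lossy_inexpectation_bound_federated}. This version is obtained by combining the lossy argument with the bounded-difference property of Corollary~\ref{Extension_Lemma_1_Margin_generalization}, with the $\theta/2$ slack in $\gen_\theta$ absorbing margin perturbations. It gives
\[
\mathbb{E}[\gen_\theta(\mathbf{S},W)]\lesssim (\rho+1)\sqrt{\tau_{\min}R(\epsilon)/(NT)}+\epsilon,
\]
where the distortion is the probability, averaged over $n$ and $t$, that the log-margin $\eta$ moves by more than $\theta/2$ when $W$ is replaced by $\hat W$. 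It then suffices to build a compression kernel $P_{\hat W\mid W,U}$ for $M_{[3]}=(M_1,M_2,M_3)$ with $\epsilon=\mathcal{O}(1/(NT))$ and rate $\mathcal{O}\bigl((C^2\sqrt{\rho}/\theta)^4\log(\rho dNT)\bigr)$, up to the $\rho$ bookkeeping needed to match the stated power.

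The first step is a perturbation decomposition for the attention map. The logit for token $\ell$ is $\langle M_3^\top e_\ell,\sum_j \alpha_j e_{z_{t-j}}\rangle$, with $\alpha=\sigma\bigl(\rho^{-1/2}\langle M_1e_{z_{t-1}},M_2e_{z_{t-j}}\rangle\bigr)_j$. I will split the change in $\eta$ into a value part and an attention part:
\[
|\Delta\eta|\le 2\max_\ell\bigl|\langle (M_3-\hat M_3)^\top e_\ell,\textstyle\sum_j\alpha_je_{z_{t-j}}\rangle\bigr| + 2\|M_3\|\,\|\alpha-\hat\alpha\|_1 .
\]
Softmax is $1$-Lipschitz from $\ell_\infty$ to $\ell_1$ up to a factor $2$. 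So the attention part is at most
\[
4C\rho^{-1/2}\max_j\bigl|\langle M_1e_{z_{t-1}},M_2e_{z_{t-j}}\rangle-\langle \hat M_1e_{z_{t-1}},\hat M_2e_{z_{t-j}}\rangle\bigr| .
\]
This means I only need to preserve inner products of the form $\langle M_1e_a,M_2e_b\rangle$ to precision $\theta\sqrt\rho/(16C)$, and value-logit inner products to precision $\theta/8$.

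Next comes the compression. I will compress the columns $M_1e_a$ and $M_2e_b$ (norms at most $\sqrt C$) with a shared Gaussian projection $\mathsf A\in\mathbb{R}^{m\times d}$, keeping $\mathsf A M_1$ and $\mathsf A M_2$ and reconstructing the bilinear form as $\langle \mathsf A M_1e_a,\mathsf A M_2e_b\rangle$. The rows of $M_3$ are handled exactly as in the LNTP scheme, using retention of large-norm vectors, discarding atypical projections, and adding uniform ball noise. Johnson--Lindenstrauss inner-product concentration gives an error of about $\|M_1e_a\|\,\|M_2e_b\|\sqrt{\log(1/\delta)/m}\le C\sqrt{\log(1/\delta)/m}$. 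I will take $\delta=1/(\rho dNT)$, since a union bound over the $\rho$ positions and the $d$ tokens is needed. The choice
\[
m=\mathcal{O}\bigl((C^2/\theta)^2\rho^{\,c}\log(\rho dNT)\bigr)
\]
then makes every bad event have probability $\mathcal{O}(1/(NT))$. The noise-addition and large-projection events are controlled by the same four-event decomposition as in Theorem~\ref{Theorem_Multi_round_LNTP}.

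For the rate, I will use $I(\mathbf{S};\hat W)\le I(W;\hat W)$ and the volume-ratio bound per retained $m$-dimensional vector, plus $H(\mathbf T_{[d]})$ for the retained support. The retained count is controlled by the norm budget, roughly $(C/\theta)^2$ vectors. Multiplying by $m$ gives a rate of order $(C^2\rho/\theta)^4\log(\rho dNT)$, and substituting into the lossy bound yields the claim.

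The main obstacle is the attention part. The softmax couples all $\rho$ positions, $M_1$ and $M_2$ enter bilinearly, so their errors multiply, and a naive union bound introduces extra factors of $\rho$. To handle this, I will first try to exploit the $1/\sqrt\rho$ temperature and the one-hot inputs. These reduce the problem to only $d^2$ distinct inner products, independent of $\rho$, so the union bound costs only $\log d$. I will then check that the leftover $\rho$ factors combine to exactly $(C^2\rho/\theta)^2$ in the final bound.
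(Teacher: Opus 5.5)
Your route differs from the paper's in two places, and it is viable; done carefully it is even sharper in $\rho$. The written accounting, however, does not close.

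\textbf{How the paper proceeds.} The paper never sketches the factors. It reparametrizes by $M_4=M_2^\top M_1$, so that $\|M_4\|\le C$. It compresses the columns of $M_4$ with a projection $\mathsf A_4$ that is independent of the one used for $M_3$. It then splits the distortion through the intermediate model $\hat W_1=(M_3,\hat M_4)$. As a result, the attention error is a single-vector inner product $\langle (M_4-\hat M_4)e_{\ell'},e_{\ell''}\rangle$ against fixed one-hot vectors, and only one noise term appears. The paper also uses the $\ell_1\to\ell_1$ Lipschitz bound for softmax. That bound costs $\sqrt\rho\max_j$ and forces precision $\theta/(8C\sqrt\rho)$. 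This gives $m_4\asymp C^4\rho\log(\rho dNT)/\theta^2$, about $C^4\rho/\theta^2$ retained columns, and $R\asymp C^8\rho^2\log(\rho dNT)/\theta^4$. The $(\rho+1)$ prefactor then produces exactly $(C^2\rho/\theta)^2$.

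\textbf{How your route differs.} Your bound $\|\sigma(x)-\sigma(y)\|_1\le 2\|x-y\|_\infty$ is correct. It improves the required precision by a factor $\rho/2$, to $\tau=\theta\sqrt\rho/(16C)$. The price of sketching $M_1,M_2$ with a shared $\mathsf A$ is that both factors carry noise. You must therefore also control $\langle N_a,\mathsf AM_2e_b\rangle$, $\langle \mathsf AM_1e_a,N'_b\rangle$ and $\langle N_a,N'_b\rangle$; these are routine with uniform-ball tails.

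\textbf{What needs fixing.}

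First, your final substitution fails as written. The rate you announce, $(C^2\rho/\theta)^4\log(\rho dNT)$, inserted into $(\rho+1)\sqrt{\tau_{\min}R/(NT)}$, gives $(\rho+1)(C^2\rho/\theta)^2\sqrt{\tau_{\min}\log(\rho dNT)/(NT)}$. That is a factor $\rho$ above the claim. It is also not what your scheme produces, since $(C/\theta)^2\cdot m$ is of order $C^6\rho^{c}/\theta^4$.

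Second, you must apply large-norm retention to the columns of $M_1$ and $M_2$, not only to the rows of $M_3$. If you keep all of $\mathsf AM_1$ and $\mathsf AM_2$, the volume-ratio bound charges order $m$ for each of $2d$ columns, and you get $\sqrt d$ instead of $\sqrt{\log d}$. Drop $M_1e_a$ when $\|M_1e_a\|<\tau/(2\sqrt C)$. This costs at most $\tau/2$ per score, because $\|M_2e_b\|\le\sqrt C$. Since $\sum_a\|M_1e_a\|^2\le C$, at most $4C^2/\tau^2\asymp C^4/(\theta^2\rho)$ columns survive, and the same holds for $M_2$.

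Third, your two-term split does not telescope. It should read $\langle (M_3-\hat M_3)^\top e_\ell,\sum_j\hat\alpha_j e_{z_{t-j}}\rangle+\langle M_3^\top e_\ell,\sum_j(\alpha_j-\hat\alpha_j)e_{z_{t-j}}\rangle$, which is what the paper's $\hat W_1$ achieves. The $M_3$ projection must also be independent of the attention sketch, so that inner-product preservation applies conditionally on $\hat\alpha$.

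\textbf{Resulting bound.} With these fixes, $m\asymp C^4\log(\rho dNT)/(\theta^2\rho)$. The rate becomes $R\lesssim\bigl(C^8/(\theta^4\rho^2)+C^4/\theta^4\bigr)\log(\rho dNT)$, up to the $\log C$ volume factor that the paper also suppresses. Hence $(\rho+1)\sqrt{R}\lesssim (C^4+\rho C^2)/\theta^2$, which lies well within $(C^2\rho/\theta)^2$ and so proves the stated bound.
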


A detailed proof is provided in Appendix~\ref{Proof_Theorem_Multi_round_SANTP}; here, we briefly present a proof sketch.

\emph{Proof sketch.}
The proof follows the lossy-compression strategy used in the linear case, but the distortion analysis must additionally account for the nonlinear attention weights. Since the attention scores depend on $M_1$ and $M_2$ only through the product
$
M_4\coloneqq M_2^\top M_1,
$
we use the equivalent parameterization $W=(M_3,M_4)$ and compress $M_3$ and $M_4$ separately using independent Gaussian projections.

To control the distortion, we introduce the intermediate model
$\hat W_1=(M_3,\hat M_4)$ and decompose the perturbation of the log-margin into the change caused by replacing $M_4$ with $\hat M_4$ and the subsequent change caused by replacing $M_3$ with $\hat M_3$. For the first term, the $\ell_1$-Lipschitz property of the softmax implies that the change in the attention distribution is controlled by the perturbation of the attention scores. Using $|M_3|\leq C$, this reduces the corresponding margin perturbation to entrywise inner-product errors involving $M_4-\hat M_4$, with a threshold of order $\theta/(C\sqrt{\rho})$. We therefore choose
$
m_4=\mathcal{O}\left(\left(\frac{C^2\sqrt{\rho}}{\theta}\right)^2
\log(\rho dNT)\right)
$
for the projection of $M_4$. Gaussian inner-product preservation, concentration of projected norms, and bounded additive noise then imply that the contribution of this part to the distortion is $\mathcal{O}(1/(NT))$.

For the second term, the attention-weighted context is a convex combination of one-hot vectors and hence has $\ell_1$ norm at most one. The perturbation caused by replacing $M_3$ with $\hat M_3$ can therefore be controlled directly through inner-product preservation for the rows of $M_3$. Choosing
$
m_3=\mathcal{O}\left(
\left(\frac{C}{\theta}\right)^2
\log(dNT)
\right)
$
and applying the same projection and noise-concentration arguments yields another $\mathcal{O}(1/(NT))$ contribution. Thus, the complete compression kernel satisfies the required distortion constraint with
$
\epsilon=\mathcal{O}\left(\frac{1}{NT}\right).
$

Finally, the rate is bounded by conditioning on the binary masks that identify the retained rows of $M_3$ and $M_4$. By the data-processing inequality and the independence of the two compression mechanisms, the mutual information is bounded by the sum of the corresponding conditional rates. As in the linear case, each conditional rate is controlled by the logarithm of the ratio between the volume of a ball containing a noisy projected row and the volume of the noise ball, while the entropy of the two support masks is bounded using binary-entropy estimates. With the above choices of $m_3$ and $m_4$, the dominant contribution satisfies, up to universal constants and logarithmic factors,

$$
I(\mathbf{S};\hat W)
=
\mathcal{O}\!\left(
\left(\frac{C^4\rho}{\theta^2}\right)^2
\log(\rho dNT)
\right).
$$

Combining this rate with the $\mathcal{O}(1/(NT))$ distortion term in the lossy margin-generalization bound yields the stated SANTP generalization rate.

\begin{remark}
By associating $(s,\bar{s})=(4,2)$ with SANTP and $(s,\bar{s})=(2,3)$ with LNTP, the order-wise generalization bounds for LNTP and SANTP-SL are given by
\begin{align}
\mathcal{O}\left(\left(\frac{{C}^{s}\rho^{\bar{s}}}{\theta^2}\right)\sqrt{\frac{{\tau}_{\min}\log\left(\rho dNT\right):}{NT}}\right).\nonumber
\end{align}

\end{remark}

\begin{remark}
The bounds in Theorems~\ref{Theorem_Multi_round_LNTP} and
\ref{Theorem_Multi_round_SANTP} can be refined by using
$\tilde{\tau}_{\min}$, yielding
\begin{align}
\mathcal{O}\left(
\frac{C^{s}\rho^{\bar{s}}}{\theta^2}
\sqrt{
\frac{\tilde{\tau}_{\min}\log(\rho dNT)}{NT}
}
\right),
\nonumber
\end{align}
where $(s,\bar{s})=(4,1)$ for SANTP and $(s,\bar{s})=(2,2)$
for LNTP, and $\tilde{\tau}_{\min}$ is defined in
\eqref{new_tau-min}. These bounds are strictly tighter than
their scalar-mixing counterparts whenever
\begin{align}
\tilde{\tau}_{\min}<(\rho+1)^2\tau_{\min}.
\nonumber
\end{align}
\end{remark}

\subsection{Effect of memory (mixing time)}

An important question in this context is how the statistical nature of the data stream, i.e., whether the data are i.i.d. or Markovian, affects generalization performance. Our analysis reveals that Markovian dependencies introduce a fundamental ``memory penalty'' that is absent in the i.i.d. setting. While i.i.d. samples provide independent information at each time step, Markovian samples are statistically coupled, thereby reducing the effective information rate of the data stream. In our bounds, this effect is captured through the square root of the mixing time, $\sqrt{\tau_{\min}}$, which acts as an inflation factor for the generalization gap. Consequently, achieving generalization performance comparable to that of an i.i.d. system requires a larger data budget in Markovian SL to compensate for the redundancy introduced by temporal dependencies. This observation is consistent with existing results on hidden Markov models, where prediction performance converges more slowly under weak mixing conditions because of persistent long-range dependencies that obscure the underlying process structure.



\section{Experiments}
\label{sec:experiments}

In this section, we present numerical experiments to examine whether the qualitative dependence on context length highlighted by our theoretical analysis also arises in more realistic self-attention architectures. These experiments are not intended as a direct numerical verification of the finite-sample bounds or their exact parameter scaling, since the empirical Transformer architectures are more expressive than the stylized SANTP model analyzed theoretically. Rather, they are designed to assess the qualitative prediction that increasing the available context may improve next-token prediction while simultaneously enlarging the empirical train--test generalization gap. We first consider language-model pretraining on the TinyStories corpus, followed by experiments on the real-world ETTh2 time series.

\subsection{Language Model Pretraining on TinyStories}
\label{subsec:tinystories}

We consider next-token prediction on the TinyStories corpus. We use
$8\%$ of the original training split and train a byte-level BPE
tokenizer with vocabulary size $d=8000$ on the selected stories.
After tokenization, the resulting token stream is partitioned into
training and held-out evaluation subsets, with $10\%$ of the tokens
reserved for evaluation. Throughout the paper, we refer to the loss
measured on this held-out subset as the \emph{test loss}; the same
subset is denoted as the validation set in our implementation.

We train a GPT-style decoder-only Transformer based on causal
self-attention. The model consists of $6$ Transformer blocks with
embedding dimension $512$, $8$ attention heads, and an MLP expansion
ratio of $4$. Each block employs pre-normalization with RMSNorm,
causal self-attention, and a SwiGLU feed-forward network. The input
and output token embeddings are tied, and dropout is set to $0.1$.
The model is optimized using the standard cross-entropy next-token
prediction loss. Further implementation and optimization details are
provided in Appendix~\ref{app:tinystories_details}.

To investigate the effect of the context length that appears explicitly
in our theoretical bounds, we consider a centralized learning setup
and vary $\rho \in \{2,4,8,16,32,64,128\}$.
For each value of $\rho$, a new model is initialized and trained from
scratch, while the remaining optimization settings and the total
gradient-step budget are kept fixed. We estimate the empirical
generalization gap as
\begin{equation}
    \widehat{\operatorname{gen}}(\rho)
    =
    \widehat{\mathcal L}_{\mathrm{test}}(\rho)
    -
    \widehat{\mathcal L}_{\mathrm{train}}(\rho),
    \nonumber
\end{equation}
where $\widehat{\mathcal L}_{\mathrm{train}}$ denotes the
cross-entropy loss evaluated on the training stream and
$\widehat{\mathcal L}_{\mathrm{test}}$ denotes the corresponding
loss on the held-out evaluation stream. The reported curves are
averaged over independent training realizations, and the shaded
regions represent one standard deviation.

\begin{figure*}[t]
    \centering
    \begin{minipage}[t]{0.35\textwidth}
        \centering
        \includegraphics[width=\linewidth]{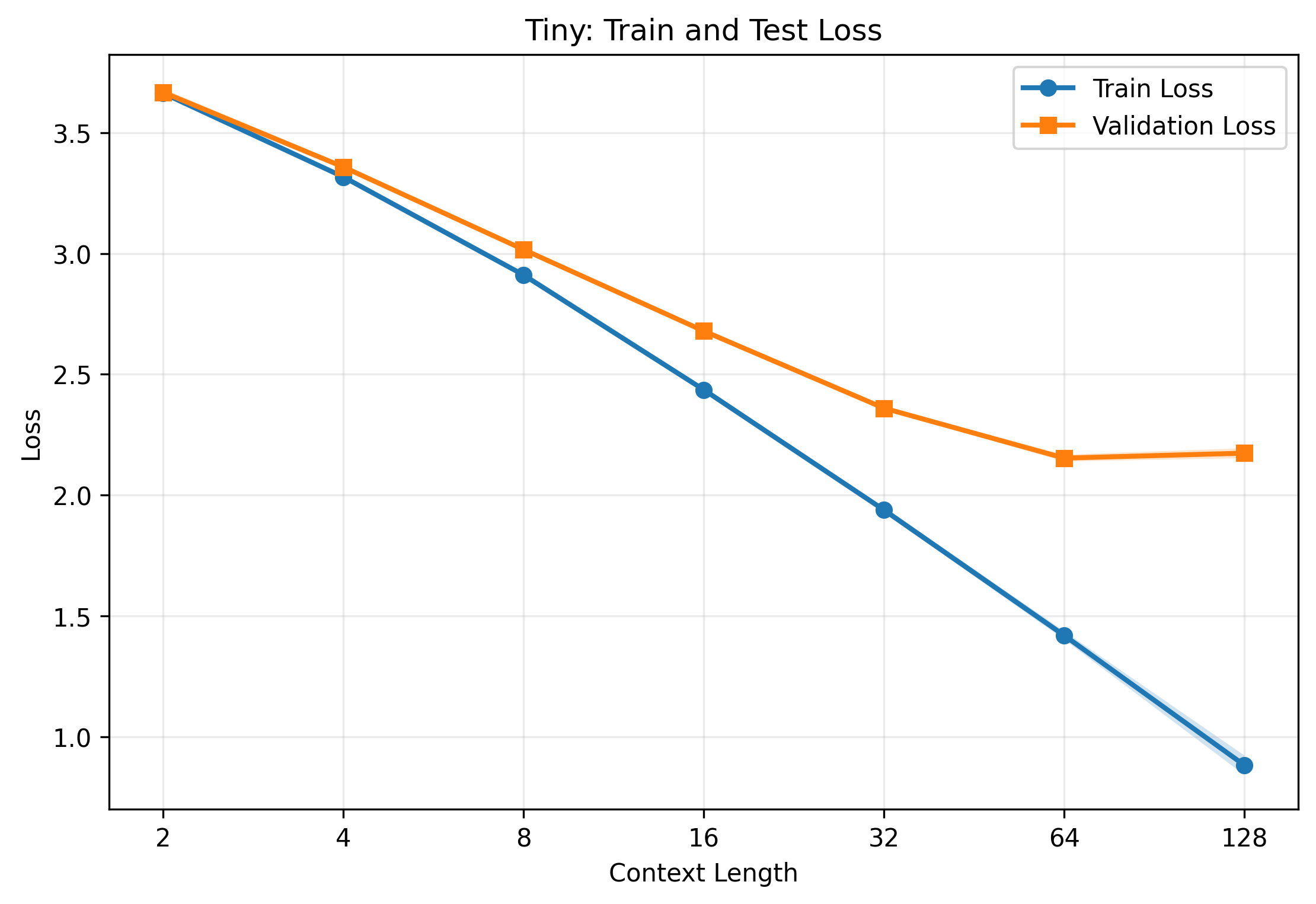}

        \vspace{1mm}
        \small (a) Training and test cross-entropy losses
    \end{minipage}
    \hspace{0.1\textwidth}
    \begin{minipage}[t]{0.35\textwidth}
        \centering
        \includegraphics[width=\linewidth]{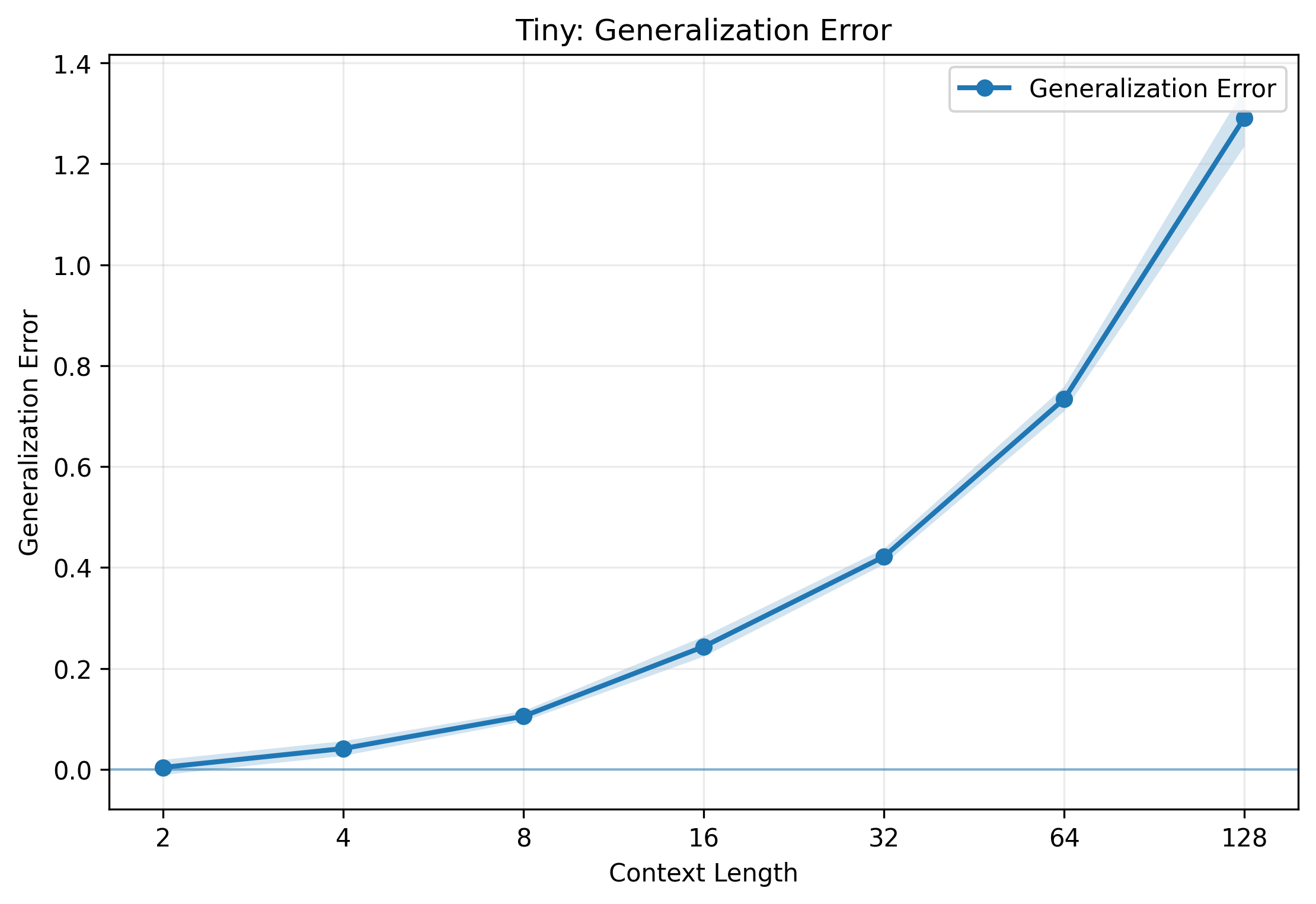}

        \vspace{1mm}
        \small (b) Generalization gap
    \end{minipage}

    \caption{\textbf{Effect of context length on next-token prediction
    over TinyStories.}
    A causal decoder-only Transformer is trained separately for
    $\rho\in\{2,4,8,16,32,64,128\}$ while keeping the optimization
    protocol and total gradient-step budget fixed.
    Panel~(a) reports the training and test cross-entropy losses,
    whereas panel~(b) reports their difference,
    $\widehat{\operatorname{gen}}
    =\widehat{\mathcal L}_{\mathrm{test}}
    -\widehat{\mathcal L}_{\mathrm{train}}$.
    Shaded regions indicate one standard deviation across independent
    training realizations. Increasing the context length substantially
    improves next-token prediction while simultaneously enlarging the
    discrepancy between the training and test losses.}
    \label{fig:tinystories_context}
\end{figure*}

Figure~\ref{fig:tinystories_context} shows a pronounced dependence of
both prediction performance and generalization on context length.
As $\rho$ increases, the training cross-entropy loss decreases
substantially, indicating that the self-attention model effectively
exploits the additional context to fit the next-token prediction task.
The test loss also decreases over most of the considered range,
showing that a longer context provides useful predictive information
beyond the training data. In particular, the test loss continues to
improve up to relatively large context lengths before the improvement
eventually saturates.

At the same time, Fig.~\ref{fig:tinystories_context}(b) shows that the
generalization gap becomes progressively larger as the context length
increases, with a particularly pronounced increase for the largest
values of $\rho$. Importantly, the widening of the generalization gap
does not necessarily imply poorer test performance: over a substantial
range of context lengths, both the training and test losses decrease
while their difference increases.

The observed trend is qualitatively aligned with the role of context
length in our theoretical analysis, where $\rho$ enters explicitly
into the generalization penalty. We emphasize, however, that the
experiment uses a deeper multi-layer Transformer and is therefore not
intended to verify the exact finite-sample scaling derived for the
stylized SANTP architecture. Rather, it demonstrates that the
qualitative performance--generalization tradeoff highlighted by the
theory can also arise in a substantially more expressive next-token
predictor: increasing the available context can improve predictive
performance while simultaneously widening the empirical train--test
gap.



\subsection{Next-Token Prediction on ETTh2}
\label{subsec:etth2_main}

We next examine whether the context-length dependence observed in
language modeling also arises for real-world temporally dependent data.
To this end, we consider the ETTh2 time series and formulate the
univariate \texttt{OT} sequence as a discrete next-token prediction
problem. The range of the series is uniformly quantized into
$d=20$ symbols, and a causal self-attention predictor is trained to
predict the next symbol from the preceding $\rho$ observations.

We consider $\rho\in\{2,4,8,16,32,64,128\}$. For each context length, a two-layer causal self-attention model with
embedding dimension $64$ and $4$ attention heads is trained from
scratch. The total optimization budget is fixed at $15{,}000$ gradient
updates for every value of $\rho$. The experiment is repeated over
$5$ independent random seeds. A validation subset of the training data
is used only for checkpoint selection; the selected checkpoint is then
evaluated on the full training set and on a separate test set.

As in the language-model experiment, we define the empirical
generalization gap as
\begin{equation}
    \widehat{\operatorname{gen}}(\rho)
    =
    \widehat{\mathcal L}_{\mathrm{test}}(\rho)
    -
    \widehat{\mathcal L}_{\mathrm{train}}(\rho).
    \nonumber
\end{equation}
Further details on the data construction, architecture, optimization,
and checkpoint-selection procedure are provided in
Appendix~\ref{app:etth2_details}.

\begin{figure*}[t]
    \centering
    \begin{minipage}[t]{0.35\textwidth}
        \centering
        \includegraphics[width=\linewidth]{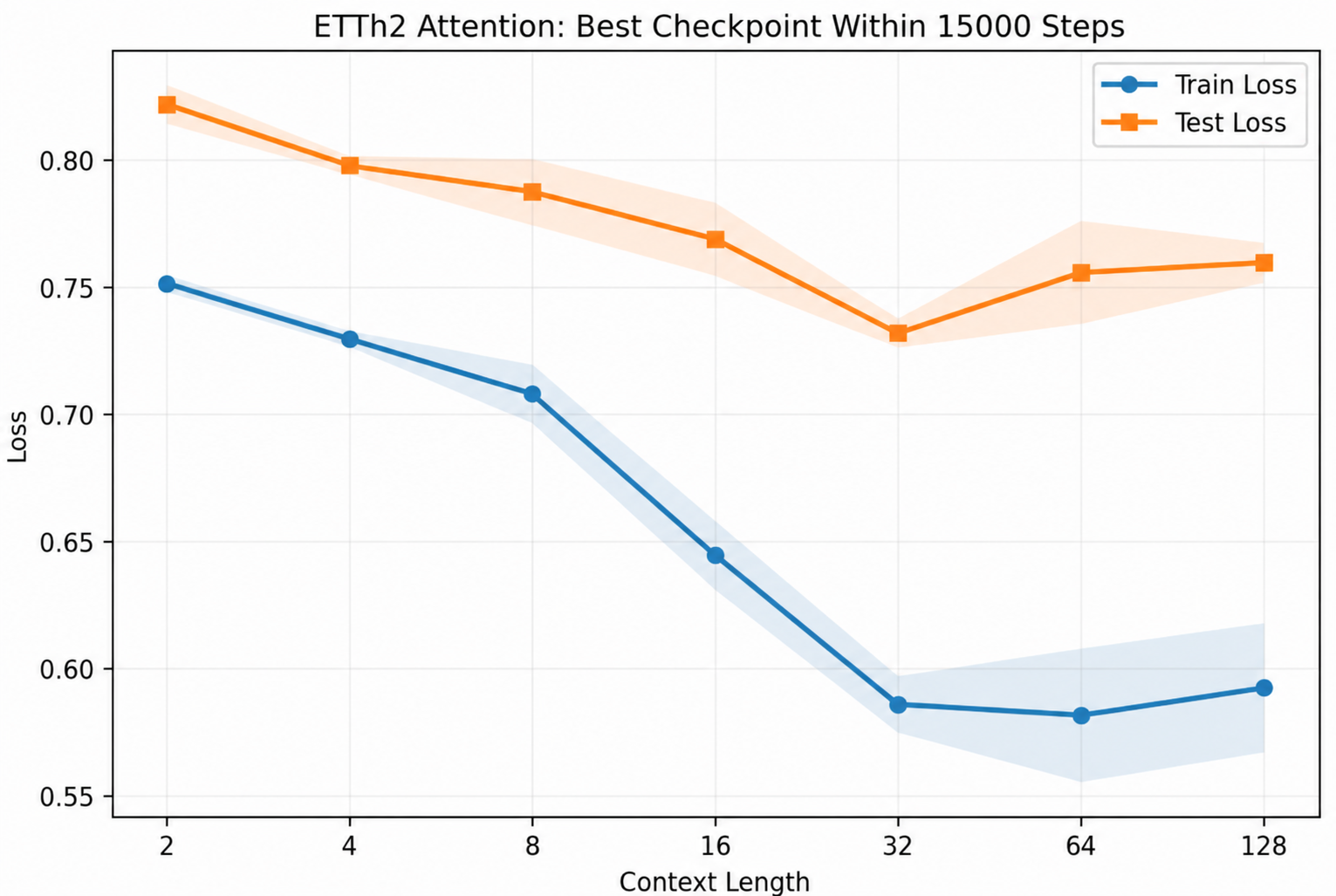}
        \small (a) Training and test cross-entropy losses.
    \end{minipage} 
    \hspace{0.1\textwidth}
    \begin{minipage}[t]{0.35\textwidth}
        \centering
        \includegraphics[width=\linewidth]{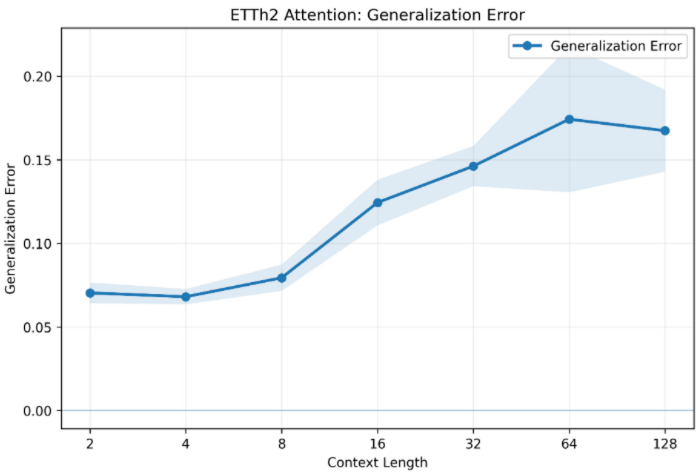}
        \small (b) Generalization gap.
    \end{minipage}
    \caption{\textbf{Effect of context length on next-token prediction
    over ETTh2.}
    The context length is varied over
    $\rho\in\{2,4,8,16,32,64,128\}$ while keeping the optimization
    protocol and the total budget of $15000$ gradient updates fixed.
    Panel~(a) reports the training and test cross-entropy losses, while
    panel~(b) reports
    $\widehat{\operatorname{gen}}
    =\widehat{\mathcal L}_{\mathrm{test}}
    -\widehat{\mathcal L}_{\mathrm{train}}$.
    Curves are averaged over $5$ independent random seeds, and shaded
    regions denote $95\%$ confidence intervals for the corresponding
    means.}
    \label{fig:etth2_context_main}
\end{figure*}

Figure~\ref{fig:etth2_context_main} shows that increasing the context
length substantially reduces the training loss. The test loss also
decreases initially, from approximately $0.82$ at $\rho=2$ to its
lowest value of approximately $0.73$ around $\rho=32$. Beyond this
range, however, the test loss no longer improves and instead increases
slightly, even though the training loss remains substantially smaller.

At the same time, the empirical generalization gap exhibits a clear
overall increase with context length. It is approximately $0.07$
for the shortest contexts and reaches values around $0.17$ for the
largest contexts. Although the dependence is not strictly monotone at
every individual value of $\rho$, the widening of the gap is
pronounced over the considered range.

These observations are qualitatively consistent with the
context-length dependence predicted by our generalization analysis for
self-attention next-token predictors. Increasing $\rho$ makes a longer
temporal history available to the model and initially improves its
out-of-sample predictive performance, but the benefit of this additional
history eventually saturates while the discrepancy between training and
test performance continues to widen. In
Appendix~\ref{app:etth2_memory}, we investigate this behavior further
through an independent empirical analysis of the predictive-memory scale
of ETTh2.



\section{Appendices}

\appendices



\subsection{On the Markovian random variables}\label{Markov_property_random_variable}

\begin{definition}[Marton coupling]\label{Marton_coupling}
Let $\mathbf{Z} \coloneqq (Z_1,\ldots,Z_T)$ be a random vector taking values in
$\mathcal{Z}_1 \times \cdots \times \mathcal{Z}_T$.
A \emph{Marton coupling} for $\mathbf{Z}$ is defined as a family of coupled random vectors
\begin{align}
    \left(
    \mathbf{Z}^{(z_1,\ldots,z_i,z'_i)},
    \mathbf{Z}'^{(z_1,\ldots,z_i,z'_i)}
    \right)
    \in
    \mathcal{Z} \times \mathcal{Z},
\end{align}
constructed for every $i \in [T]$ and every choice of
$z_1 \in \Omega_1, \ldots, z_i \in \Omega_i, z'_i \in \Omega_i$,
and satisfying the following conditions:
\begin{itemize}
    \item
    $
    \mathbf{Z}_1^{(z_1,\ldots,z_i,z'_i)} = z_1,\ \ldots,\ 
    \mathbf{Z}_i^{(z_1,\ldots,z_i,z'_i)} = z_i;
    $
    
    \item
    $
    {\mathbf{Z}'_1}^{(z_1,\ldots,z_i,z'_i)} = z_1,\ \ldots,\ 
    {\mathbf{Z}'_{i-1}}^{(z_1,\ldots,z_i,z'_i)} = z_{i-1},\ 
    {\mathbf{Z}'_i}^{(z_1,\ldots,z_i,z'_i)} = z'_i;
    $
    
    \item
    $
    \left(
    \mathbf{Z}^{(z_1,\ldots,z_i,z'_i)}_{i+1}, \ldots,
    \mathbf{Z}^{(z_1,\ldots,z_i,z'_i)}_T
    \right)
    \sim
    \mathbb{P}\!\left(
    Z_{i+1}, \ldots, Z_T
    \mid
    Z_1 = z_1, \ldots, Z_i = z_i
    \right);
    $
    
    \item
    $
    \left(
    {Z'_{i+1}}^{(z_1,\ldots,z_i,z'_i)}, \ldots,
    {Z'_T}^{(z_1,\ldots,z_i,z'_i)}
    \right)
    \sim
    \mathbb{P}\!\left(
    Z_{i+1}, \ldots, Z_T
    \mid
    Z_1 = z_1, \ldots, Z_{i-1} = z_{i-1}, Z_i = z'_i
    \right);
    $
    
    \item
    If $z_i = z'_i$, then
    $
    Z^{(z_1,\ldots,z_i,z'_i)} =
    Z'^{(z_1,\ldots,z_i,z'_i)}.
    $
\end{itemize}

Associated with a Marton coupling, the \emph{mixing matrix}
$\Gamma \coloneqq (\Gamma_{i,j})_{i,j \leq T}$
is defined as an upper-triangular matrix with diagonal entries
$\Gamma_{i,i} \coloneqq 1$ for all $i \leq T$, and off-diagonal entries given by
\begin{align*}
    \Gamma_{j,i} &\coloneqq 0, \qquad 1 \leq j < i \leq T, \\
    \Gamma_{i,j}
    &\coloneqq
    \sup_{z_1,\ldots,z_i,z'_i}
    \mathbb{P}\!\left[
    \mathbf{Z}^{(z_1,\ldots,z_i,z'_i)}_j
    \neq
    \mathbf{Z}'^{(z_1,\ldots,z_i,z'_i)}_j
    \right],
\end{align*}
for $1 \leq i < j \leq T$.
\end{definition}

\begin{definition}[Partition]
A \emph{partition} of a set $S$ is a collection of disjoint, non-empty subsets whose union is equal to $S$.
Analogously, let $\mathbf{Z} \coloneqq ({Z}_1,\ldots,{Z}_T)$ be a vector of random variables. We say that
$\hat{\mathbf{Z}} \coloneqq (\hat{\mathbf{Z}}_1,\ldots,\hat{\mathbf{Z}}_{\hat{T}})$ is a partition of $\mathbf{Z}$ if
$\{\hat{\mathbf{Z}}_i\}_{i=1}^{\hat{T}}$ forms a partition of the set $\{\mathbf{Z}_1,\cdots,\mathbf{Z}_T\}$.

For a given partition $\hat{\mathbf{Z}}$ of $\mathbf{Z}$, we denote by $s(\hat{\mathbf{Z}}_i)$ the number of elements in $\hat{\mathbf{Z}}_i$
(i.e., the size of $\hat{\mathbf{Z}}_i$), and define the \emph{size of the partition} as
\[
s(\hat{\mathbf{Z}}) \coloneqq \max_{1 \leq i \leq \hat{T}} s(\hat{\mathbf{Z}}_i).
\]

Moreover, we denote by $\mathcal{I}(\hat{\mathbf{Z}}_i)$ the set of indices corresponding to the elements of
$\hat{\mathbf{Z}}_i$, that is,
\[
{Z}_j \in \hat{\mathbf{Z}}_i \quad \text{if and only if} \quad j \in \mathcal{I}(\hat{\mathbf{Z}}_i).
\]
For any index set $S \subseteq [T]$, we write
\[
\mathbf{Z}_S \coloneqq \{Z_j : j \in S\}.
\]
In particular, $\hat{\mathbf{Z}}_i = Z_{\mathcal{I}(\hat{\mathbf{Z}}_i)}$.

Finally, if $Z$ takes values in the product space
$\mathcal{Z} \coloneqq \mathcal{Z}_1 \times \cdots \times \mathcal{Z}_T$,
then the partitioned random vector $\hat{\mathbf{Z}}$ takes values in
\[
\hat{\mathcal{Z}} \coloneqq \hat{\mathcal{Z}}_1 \times \cdots \times \hat{\mathcal{Z}}_{\hat{T}},
\]
where each $\hat{\mathbf{Z}}_i$ corresponds to the product space indexed by
$\mathcal{I}(\hat{\mathbf{Z}}_i)$.
\end{definition}

\begin{lemma}[McDiarmid-type inequality for dependent random variables]
\label{Mdiarmid_type_lemma}
Let $\mathbf{Z} \coloneqq (Z_1,\ldots,Z_T)$ be a sequence of random variables taking values in
$\mathcal{Z}$, with distribution $P$. Let
$\hat{\mathbf{Z}} \coloneqq (\hat{\mathbf{Z}}_1,\cdots,\hat{\mathbf{Z}}_{\hat{T}})$
be a partition of $Z$, taking values in $\hat{\mathcal{\mathbf{Z}}}$ with distribution $\hat{P}$.
Suppose that a Marton coupling exists for $\hat{\mathbf{Z}}$, with associated mixing matrix $\Gamma$.
Let $c \in \mathbb{R}_+^{T}$ and define $C(c) \in \mathbb{R}_+^{\hat{T}}$ by
\begin{align}
    C_i(c)
    \coloneqq
    \sum_{j \in \mathcal{I}(\hat{\mathbf{Z}}_i)} c_j,
    \qquad i \leq \hat{T}.\nonumber
\end{align}
Assume that the function $f:\mathcal{Z}\to\mathbb{R}$ satisfies
\begin{align}
    f(\mathbf{z})-f(\mathbf{z}')
    \leq
    \sum_{i=1}^{T} c_i \mathbf{1}\{z_i \neq z'_i\},
    \label{Mdiarmid_type_lemma_1}
\end{align}
for all $\mathbf{z},\mathbf{z}' \in \mathcal{Z}$. Then, for any $\lambda \in \mathbb{R}$,
\begin{align}
    \log \mathbb{E}\!\left[
    e^{\lambda (f(\mathbf{Z})-\mathbb{E}f(\mathbf{Z}))}
    \right]
    \leq
    \frac{\lambda^2 \|\Gamma C(c)\|^2}{8}.
    \nonumber
\end{align}
Consequently, for any $t \geq 0$,
\begin{align}
    \mathbb{P}\left(
    |f(\mathbf{Z})-\mathbb{E}f(\mathbf{Z})| > t
    \right)
    \leq
    2 \exp\!\left(
    -\frac{2t^2}{\|\Gamma C(c)\|^2}
    \right).\nonumber
\end{align}
\end{lemma}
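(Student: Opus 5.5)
This is Paulin's McDiarmid inequality for Marton couplings (2015, Theorem 2.9 with partitions). The proof I would follow is a martingale (Azuma--Hoeffding) argument along the blocks of the partition. Write $F(\hat{\mathbf{z}})\coloneqq f(\mathbf{z})$, regarded as a function of the blocks. Let $\mathcal{F}_i\coloneqq\sigma(\hat{\mathbf{Z}}_1,\ldots,\hat{\mathbf{Z}}_i)$, and form the Doob martingale $M_i\coloneqq\mathbb{E}[F\mid\mathcal{F}_i]$ with increments $D_i\coloneqq M_i-M_{i-1}$. Then
\[
f(\mathbf{Z})-\mathbb{E}f(\mathbf{Z})=\sum_{i=1}^{\hat T}D_i .
\]
By the conditional Hoeffding lemma, it suffices to show that, conditionally on $\mathcal{F}_{i-1}$, each $D_i$ lies in an interval of length at most $(\Gamma C(c))_i$. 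Iterating the tower property then gives
\[
\log\mathbb{E}\,e^{\lambda\sum_i D_i}\le \frac{\lambda^2}{8}\sum_i(\Gamma C(c))_i^2=\frac{\lambda^2\|\Gamma C(c)\|^2}{8}.
\]

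The key step is the range bound. Fix $\hat{z}_1,\ldots,\hat{z}_{i-1}$ and two candidate values $\hat{z}_i,\hat{z}'_i$ for the $i$-th block. The oscillation of $D_i$ is at most
\[
\sup_{\hat z_i,\hat z_i'}\Big(\mathbb{E}[F\mid \hat z_{1:i-1},\hat z_i]-\mathbb{E}[F\mid \hat z_{1:i-1},\hat z'_i]\Big).
\]
I would take the Marton coupling $(\hat{\mathbf{Z}}^{(\hat z_1,\ldots,\hat z_i,\hat z'_i)},\hat{\mathbf{Z}}'^{(\hat z_1,\ldots,\hat z_i,\hat z'_i)})$. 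By the third and fourth defining properties, its two marginals are exactly these two conditional laws. Hence the difference of conditional expectations equals $\mathbb{E}[F(\hat{\mathbf{Z}}^{(\cdot)})-F(\hat{\mathbf{Z}}'^{(\cdot)})]$.

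Next I apply the bounded-difference hypothesis \eqref{Mdiarmid_type_lemma_1}. A coordinate $z_j$ can differ between the coupled vectors only if its block $\hat{\mathbf{Z}}_k$, with $j\in\mathcal{I}(\hat{\mathbf{Z}}_k)$, differs, and then only for $k\ge i$. This gives
\[
F(\hat{\mathbf{Z}})-F(\hat{\mathbf{Z}}')\le\sum_{k\ge i}C_k(c)\,\mathbf{1}\{\hat{\mathbf{Z}}_k\neq\hat{\mathbf{Z}}'_k\}.
\]
Taking expectations and using $\mathbb{P}[\hat{\mathbf{Z}}_k\neq\hat{\mathbf{Z}}'_k]\le\Gamma_{i,k}$ for $k>i$, together with $\Gamma_{i,i}=1$, yields the oscillation bound $\sum_k\Gamma_{i,k}C_k(c)=(\Gamma C(c))_i$. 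This holds uniformly over the conditioning, which is exactly what Hoeffding's lemma requires.

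The main obstacle is the measure-theoretic bookkeeping. One has to check that $D_i$, conditionally on $\mathcal{F}_{i-1}$, really takes values in $[a_i,a_i+(\Gamma C(c))_i]$ for an $\mathcal{F}_{i-1}$-measurable $a_i$. I would handle this by setting $a_i$ to the infimum over $\hat z_i$ of $\mathbb{E}[F\mid\hat z_{1:i-1},\hat z_i]-M_{i-1}$. This is legitimate because the state spaces are Polish, so regular conditional distributions exist, and the Marton coupling is given for every tuple.

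The tail bound then follows by Chernoff's method. Markov's inequality gives
\[
\mathbb{P}(f-\mathbb{E}f>t)\le e^{-\lambda t+\lambda^2\|\Gamma C\|^2/8},
\]
and choosing $\lambda=4t/\|\Gamma C(c)\|^2$ gives $\exp(-2t^2/\|\Gamma C(c)\|^2)$. Applying the same argument to $-f$, which satisfies the same hypothesis by symmetry in $\mathbf{z},\mathbf{z}'$, and taking a union bound produces the factor $2$.
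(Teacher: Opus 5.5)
Your proof is correct. The paper states this lemma without proof and imports it from Paulin's McDiarmid inequality for Marton couplings, which is proved by exactly your argument: a Doob martingale over the blocks, the coupling bounding the conditional oscillation of the $i$-th increment by $\sum_{k\ge i}\Gamma_{i,k}C_k(c)=(\Gamma C(c))_i$, and then Hoeffding's lemma and Chernoff's method. The only refinement is that you do not need measurability of your $a_i$: apply Hoeffding's lemma pointwise to the regular conditional law of $\hat{\mathbf{Z}}_i$ given $\hat{\mathbf{Z}}_1,\ldots,\hat{\mathbf{Z}}_{i-1}$, under which $D_i$ has mean zero and range at most $(\Gamma C(c))_i$.
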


\begin{corollary}[McDiarmid's inequality for Markov chains]
\label{mcdiarmid_inequality_markov_chain}
Let $\mathbf{Z} \coloneqq (Z_1,\ldots,Z_T)$ be a Markov chain taking values in
$\mathcal{Z}_1 \times \cdots \times \mathcal{Z}_T$, with mixing time $\tau(\delta)$
for $0 \leq \delta \leq 1$. Define
\begin{align}
    \tau_{\min}
    \coloneqq
    \inf_{0 \leq \delta \leq 1}
    \tau(\delta)
    \left(\frac{2-\delta}{1-\delta}\right)^2.
    \nonumber
\end{align}
Suppose that the function $f:\mathcal{Z}^T \to \mathbb{R}$ satisfies
\eqref{Mdiarmid_type_lemma_1} for some $c \in \mathbb{R}_+^{T}$.
Then, for any $t \geq 0$,
\begin{align}
    \mathbb{P}\!\left(
    |f(\mathbf{Z})-\mathbb{E}f(\mathbf{Z})| \geq t
    \right)
    \leq
    2 \exp\!\left(
    -\frac{2t^2}{\|c\|^2 \tau_{\min}}
    \right).\nonumber
\end{align}
Equivalently, for any $s \in \mathbb{R}$,
\begin{align}
    \log \mathbb{E}\!\left[
    e^{s (f(\mathbf{Z})-\mathbb{E}f(\mathbf{Z}))}
    \right]
    \leq
    \frac{s^2 \|c\|^2 \tau_{\min}}{8}.
   \nonumber
\end{align}
\end{corollary}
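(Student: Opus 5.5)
We would derive Corollary~\ref{mcdiarmid_inequality_markov_chain} from Lemma~\ref{Mdiarmid_type_lemma}, taking the partition $\hat{\mathbf{Z}}$ to consist of consecutive blocks of length $\tau(\delta)$ for a fixed $\delta\in[0,1)$. The bound then follows from controlling the mixing matrix $\Gamma$ of a suitable Marton coupling. Concretely, fix $\delta$ with $\tau:=\tau(\delta)<\infty$. Set $\hat T=\lceil T/\tau\rceil$ and $\hat{\mathbf{Z}}_i=(Z_{(i-1)\tau+1},\ldots,Z_{\min(i\tau,T)})$. Because the original chain is Markov, the block process $(\hat{\mathbf{Z}}_i)$ is again a Markov chain, whose state at step $i$ effectively depends on the past only through its last coordinate $Z_{i\tau}$.

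\textbf{Constructing the coupling.} For each prefix $(\hat z_1,\ldots,\hat z_i)$ and alternative $\hat z_i'$, the two conditional futures depend only on the last coordinates $x'$ and $x''$ of $\hat z_i$ and $\hat z_i'$. We would build the coupling step by step. At each block transition we apply a maximal coupling of the distributions of the first coordinate of the next block; this is possible on a Polish space. By Definition~\ref{Mixing_time}, the two chains started from $x'$ and $x''$ differ at distance $\tau$ with probability at most $\overline{d}(\tau)\le\delta$. Once they coalesce we run them identically, which the Markov property permits. Between coalescence attempts we continue with an optimal (Markovian) coupling of the conditional laws. Iterating over $j-i$ block transitions gives
$$\Gamma_{i,j}\le \delta^{\,j-i},\qquad j>i.$$
Making this iterative maximal-coupling construction rigorous, with measurable selection on general Polish spaces, is the main technical obstacle; we would follow the standard Marton/Paulin construction.

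\textbf{Bounding $\|\Gamma C(c)\|$.} Since $\Gamma$ is upper triangular with entries at most $\delta^{j-i}$, each row sum and each column sum is at most $\sum_{k\ge0}\delta^k=1/(1-\delta)$. The Schur test then gives the operator-norm bound $\|\Gamma\|\le 1/(1-\delta)$; the factor $(2-\delta)/(1-\delta)=1+1/(1-\delta)$ in the statement suggests this slightly looser constant is what is intended, and either one suffices. Next, by Cauchy--Schwarz, $C_i(c)^2\le\tau\sum_{j\in\mathcal{I}(\hat{\mathbf{Z}}_i)}c_j^2$, so $\|C(c)\|^2\le\tau\|c\|^2$. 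Combining the two estimates,
$$\|\Gamma C(c)\|^2\le \tau(\delta)\Big(\frac{2-\delta}{1-\delta}\Big)^2\|c\|^2.$$

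\textbf{Concluding.} Lemma~\ref{Mdiarmid_type_lemma} now yields the MGF bound $s^2\|c\|^2\tau(\delta)\big(\frac{2-\delta}{1-\delta}\big)^2/8$ for every admissible $\delta$. Taking the infimum over $\delta$ gives the $\tau_{\min}$ form; the case $\delta=1$ contributes $+\infty$ and is harmless. The tail bound follows from the Chernoff method by optimizing over $s$, applied to both $f$ and $-f$; alternatively, it can be read off directly from the tail statement of the lemma.
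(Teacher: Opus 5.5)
Your skeleton is the right one: blocks of length $\tau(\delta)$, a Marton coupling, a Schur-test bound on $\|\Gamma\|$, Cauchy--Schwarz for $\|C(c)\|^2\le\tau(\delta)\|c\|^2$, and an infimum over $\delta$. It is the Paulin argument behind this corollary, which the paper reproduces in the proof of Lemma~\ref{lem:window-loss-concentration}. The gap is the mixing-matrix estimate $\Gamma_{i,j}\le\delta^{j-i}$, which is false. In the Marton coupling for block $i$, the two copies already differ at $Z_{i\tau}$, the last coordinate of block $i$. Block $i+1$ starts one step later, at $Z_{i\tau+1}$, so its first coordinates are governed by $\overline{d}(1)$, not by $\overline{d}(\tau)$. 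Your step that couples the first coordinate of the next block and invokes distance $\tau$ conflates distance $1$ with distance $\tau$. No coupling can make the whole block $i+1$ agree with probability $1-\delta$.

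For a concrete failure, take the chain on $\{0,1\}$ that keeps its state with probability $q$ and otherwise resamples uniformly, so $\overline{d}(s)=q^s$. Any coupling has $\mathbb{P}(\hat{\mathbf{Z}}_j\neq\hat{\mathbf{Z}}'_j)$ at least the total-variation distance between the two conditional block laws. This gives $\Gamma_{i,i+1}\ge q$ and $\Gamma_{i,i+k}\ge q^{(k-1)\tau+1}$. With $q=0.9$ and $\delta=0.1$ (so $\tau(\delta)=22$), you get $\Gamma_{i,i+1}\ge 0.9$ and $\Gamma_{i,i+2}\ge 0.9^{23}\approx 0.089>\delta^2$. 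So the row-sum bound $1/(1-\delta)$ is not available. The extra $1$ in $(2-\delta)/(1-\delta)=1+1/(1-\delta)$ is not optional slack; it is exactly the cost of the uncontrolled adjacent block.

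The repair is to couple block endpoints rather than block starts:
\begin{itemize}
\item Given the current endpoint pair $(Z_{(i+r)\tau},Z'_{(i+r)\tau})$, maximally couple the $\tau$-step transition laws to produce $(Z_{(i+r+1)\tau},Z'_{(i+r+1)\tau})$. These points are exactly $\tau$ steps apart, so Definition~\ref{Mixing_time} bounds the disagreement probability by $\delta$.
\item Fill in the interior of block $i+r+1$ from the corresponding Markov bridges, so that each copy keeps its required conditional law.
\item Once the endpoints coincide, run both copies with a common continuation.
\end{itemize}
Then $\mathbb{P}(Z_{(i+r)\tau}\neq Z'_{(i+r)\tau})\le\delta^{r}$. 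Block $i+k$ coincides whenever the endpoint of block $i+k-1$ does, so $\Gamma_{i,i+1}\le 1$ and $\Gamma_{i,i+k}\le\delta^{k-1}$ for $k\ge 2$. Row and column sums are then at most $1+1+\delta+\delta^2+\cdots=(2-\delta)/(1-\delta)$, whence $\|\Gamma\|\le(2-\delta)/(1-\delta)$. The rest of your argument (Cauchy--Schwarz, infimum over $\delta$, Chernoff on $f$ and $-f$) goes through unchanged and yields exactly the stated $\tau_{\min}$.
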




\begin{lemma}
\label{lem:window-loss-concentration}
Let $\{Z_t\}_{t=1}^{T}$ be a Markov process of order $\rho$, and let
$S_t:=(Z_t,Z_{t-1},\ldots,Z_{t-\rho+1})$. For a fixed
$w\in\mathcal W$, define
\[
Y_t:=\frac{1}{T}
\log\frac{1}{
p_w(Z_t\mid Z_{t-1},\ldots,Z_{t-\rho})}.
\]
Let $\tilde{\tau}(\delta)$ denote the mixing time of the
history-state process $\{S_t\}_{t\in[T]}$, defined by
\[
\tilde{\tau}(\delta)
:=
\min\left\{
m\geq1:
\max_i\sup_{s',s''}
d_{\mathrm{TV}}\!\left(
\mathbb{P}(S_{i+m}\in\cdot\mid S_i=s'),
\mathbb{P}(S_{i+m}\in\cdot\mid S_i=s'')
\right)
\leq\delta
\right\}.
\]
Then, for any function $f:\mathbb R^T\to\mathbb R$ satisfying
the bounded-difference condition in
\eqref{Mdiarmid_type_lemma_1} with coefficients
$c=(c_1,\ldots,c_T)\in\mathbb R_+^T$, and for every
$\lambda\in\mathbb R$,
\begin{equation}
\log\mathbb E
\left[e^{\lambda\big(f(Y)-\mathbb Ef(Y)\big)}\right]
\le
\frac{\lambda^2\tilde{\tau}_{\min}\|c\|_2^2}{8},
\label{Concentration_under_condition_A}
\end{equation}
where
\begin{equation}
\tilde{\tau}_{\min}
:=
\inf_{0<\delta<1}
\tilde{\tau}(\delta)
\left(\frac{2-\delta}{1-\delta}\right)^2.
\end{equation}
\end{lemma}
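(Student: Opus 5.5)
The idea is to lift the problem to the history-state process and then apply Corollary~\ref{mcdiarmid_inequality_markov_chain} to that chain. Since $\{Z_t\}$ is Markov of order $\rho$, the process $S_t=(Z_t,\ldots,Z_{t-\rho+1})$ is a first-order (possibly non-stationary) Markov chain on the finite product space $\mathcal{D}^{\rho}$. This holds because $\mathbb{P}(S_{t+1}\mid S_t,S_{t-1},\ldots)$ depends only on $S_t$: the new coordinate $Z_{t+1}$ is drawn from $\mu(\cdot\mid Z_t,\ldots,Z_{t-\rho+1})$, which is a function of $S_t$, and the other coordinates are obtained by shifting. The first step is to check this, adopting the convention that early indices are padded with $\varnothing$.

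The second step is to observe that each summand depends on a single state of this chain. Indeed, $Y_t$ is a deterministic function of the pair $(Z_t, Z_{t-1},\ldots,Z_{t-\rho})$. Up to the one-index shift in the window convention, this pair is a function of $S_t$ together with $S_{t-1}$. To make it a function of a single state, I would enlarge the window by one, or equivalently reindex so that $S_t$ carries $\rho+1$ tokens. In either case $Y_t=\phi_t(S_t)$ for a deterministic $\phi_t$.

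The third step transfers the bounded-difference condition. If $f$ satisfies \eqref{Mdiarmid_type_lemma_1} in the variables $Y=(Y_1,\ldots,Y_T)$ with coefficients $c$, define $F(S_1,\ldots,S_T):=f(\phi_1(S_1),\ldots,\phi_T(S_T))$. Then $\mathbf 1\{\phi_t(s_t)\neq\phi_t(s'_t)\}\le \mathbf 1\{s_t\neq s'_t\}$, so $F$ satisfies \eqref{Mdiarmid_type_lemma_1} in the variables $S$ with the same vector $c$. Applying Corollary~\ref{mcdiarmid_inequality_markov_chain} to the chain $(S_1,\ldots,S_T)$ then gives the moment bound with $\|c\|_2^2\,\tilde\tau_{\min}$. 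Here $\tilde\tau$ is exactly the mixing time of Definition~\ref{Mixing_time} computed for the chain $S$. The infimum over $0<\delta<1$ matches the corollary's infimum over $0\le\delta\le1$, since the endpoints contribute nothing: $\delta=1$ gives $+\infty$, and $\delta=0$ is a limiting value.

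The main obstacle is the third step's justification of the corollary for $S$. Through Lemma~\ref{Mdiarmid_type_lemma}, that corollary rests on a Marton coupling whose mixing matrix satisfies $\Gamma_{i,j}\le\bar d(j-i)$, obtained by coupling conditional laws that differ only in the current state. For the lifted chain, a single-site change $s_i\to s'_i$ alters $\rho$ tokens at once. However, the Marton coupling is built on the states $S_i$ themselves, and the corollary only requires a Markov property in those states, so the standard maximal-coupling construction for $\Gamma$ goes through verbatim. I would state explicitly that the blocking/partition argument (partition size $\tilde\tau(\delta)$, giving $\|\Gamma C(c)\|^2\le \tilde\tau(\delta)(\frac{2-\delta}{1-\delta})^2\|c\|^2$) is applied to $S$ rather than to $Z$. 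This is precisely what absorbs the overlap of contexts: in the $S$-coordinates each loss term depends on only one coordinate, so no factor $(1+2\rho)$ appears in $c$.
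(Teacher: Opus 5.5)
Your lifting idea is sound. Transferring the bounded-difference condition to the product space of states is exactly what removes the $(1+2\rho)$ factor, and your remark on the endpoints of the infimum is correct. The gap is in the step where you identify the mixing quantity.

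As you note, $Y_t$ depends on the $\rho+1$ tokens $(Z_t,\ldots,Z_{t-\rho})$, whereas the statement fixes $S_t=(Z_t,\ldots,Z_{t-\rho+1})$ with only $\rho$ tokens. After you enlarge the window, Corollary~\ref{mcdiarmid_inequality_markov_chain} is applied to $S^{+}_t:=(Z_t,\ldots,Z_{t-\rho})$. It therefore returns the mixing time $\tilde\tau^{+}$ of that chain, not the $\tilde\tau$ of the lemma, and the two differ. Since $S_{i+m}$ is a function of $S^{+}_{i+m}$, and $S^{+}_{i+m}$ is obtained from $S_{i+m-1}$ by a Markov kernel, one has $\tilde\tau(\delta)\le\tilde\tau^{+}(\delta)\le\tilde\tau(\delta)+1$. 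The upper end is attained, for example for i.i.d. tokens with $\rho=1$. There $\tilde\tau\equiv1$, but $S^{+}_{i+1}=(Z_{i+1},Z_i)$ still contains $Z_i$, so $\tilde\tau^{+}\equiv2$. Your constant then becomes $\tilde\tau^{+}_{\min}=8$ instead of $\tilde\tau_{\min}=4$. So your argument proves the inequality only with $\tilde\tau_{\min}$ replaced by a quantity that can be twice as large, not the statement as written.

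The missing idea is that the coupling only needs the $\rho$-token states to coalesce, not the enlarged ones. Partition into blocks of length $L=\tilde\tau(\delta)$ and let $s$ be the end of the perturbed block $i$. The losses in a block starting at time $a$ are functions of $S_{a-1}$ and of the fresh tokens $Z_a,Z_{a+1},\ldots$. At the boundaries $s+kL$, maximally couple the $L$-step laws of the $\rho$-token states, fill in with Markov bridges, and use a common continuation once they coincide. Then block $i+k$ disagrees with probability at most $\delta^{k-1}$ for $k\ge2$. The one-index offset is absorbed into $\Gamma_{i,i+1}\le1$, which the blocking argument pays anyway. This gives $\|\Gamma\|\le(2-\delta)/(1-\delta)$ and $\|C(c)\|^2\le\tilde\tau(\delta)\|c\|^2$, which is the stated constant.

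This is the paper's proof, which builds the Marton coupling directly for the blocks of $Y$. You can keep your framework by opening up the corollary instead of using it as a black box. Apply Lemma~\ref{Mdiarmid_type_lemma} to the Markov chain $S^{+}$ with $F=f\circ\phi$ and the coupling just described. This has the side benefit that the Marton-coupling conditions are checked for a genuine Markov chain rather than for the non-Markov sequence $Y$.
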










\subsection{Proof of Theorem \ref{multi_round_markovian_general_bound}}\label{Proof_multi_round}

By the definition of the generalization gap in \eqref{generalization_croos_entropy_def}, we have
\begin{align}
    &\mathbb{E}_{\mathbf{S},{W}}\left[\gen(\mathbf{S},{W})\right]\label{multi_round_generalization_bound_1}\\
    &\qquad=\mathbb{E}\left[\mathcal{L}({W})-\hat{\mathcal{L}}(\mathbf{S},{W})\right]\nonumber\\
    &\qquad=\frac{1}{\lambda}\mathbb{E}\Bigg[-\frac{\lambda}{T}\sum_{t=1}^{T}\mathbb{E}_{\mathbf{S}'}\left[\log p_{W}\left(Z'_{t}|Z'_{t-1},\cdots,Z'_{t-\rho}\right)\right]\nonumber\\
    &\qquad\qquad+\frac{\lambda}{N T}\sum_{n=1}^{N}\sum_{t=1}^{T}\mathbb{E}\left[\log p_{W}\left(Z^{(n)}_{t}|Z^{(n)}_{t-1},\cdots,Z_{t-\rho}\right)\right]\Bigg]\nonumber\\
    &\qquad=\frac{1}{\lambda}\mathbb{E}\Bigg[\mathbb{E}\Bigg[-\frac{\lambda}{N T}\sum_{t=1}^{T}\sum_{n=1}^{N}\mathbb{E}_{\mathbf{S}'}\left[\log p_{W}(Z'^{(n)}_{t}|Z'^{(n)}_{t-1},\cdots,Z'^{(n)}_{t-\rho})\right]\label{multi_round_generalization_bound_6}\\
    &\qquad\qquad+\frac{\lambda}{N T}\sum_{t=1}^{T}\sum_{n=1}^{N}\log p_{{W}}(Z^{(n)}_{t}|Z^{(n)}_{t-1},\cdots,Z^{(n)}_{t-\rho})\Bigg]\Bigg]\nonumber\\
    &\qquad\leq\frac{1}{\lambda}\mathbb{E}\left[D\left(P_{\mathbf{S}|{W}}\|P_{\mathbf{S}}\right)+\log\mathbb{E}_{\mathbf{S}}\left[e^{\phi(\mathbf{S},\mathbf{S}',{W})}\right]\right]\label{multi_round_generalization_bound_9}\\
    &\qquad=\frac{1}{\lambda}\left[I(\mathbf{S};{W})+\mathbb{E}_{{W}}\left[\log\mathbb{E}_{\mathbf{S}}\left[e^{\phi(\mathbf{S},\mathbf{S}',{W})}\right]\right]\right]\label{multi_round_generalization_bound_10}
\end{align}

where
\begin{itemize}
    \item $\mathbf{S}'$ denotes an independent copy of $\mathbf{S}$;
    
    \item the expectations in \eqref{multi_round_generalization_bound_1}--\eqref{multi_round_generalization_bound_9} are taken with respect to the random variables $(\mathbf{S},W)$;
    
    
    \item the inner and outer expectations in \eqref{multi_round_generalization_bound_6} correspond to the conditional expectation of $\mathbf{S}$ given $W$ and the expectation over ${W}$, respectively;
    
    
    \item the inequality in \eqref{multi_round_generalization_bound_9} follows from the Donsker--Varadhan variational representation by substituting
    \begin{align}
        &\phi\left(\mathbf{S},\mathbf{S}',{W}\right)=\frac{\lambda}{N}
         \sum_{n=1}^{N}
        \left(
            \mathbb{E}_{\mathbf{S}'}\!\left[
                \ell\left(\mathbf{Z}^{(n)},{W}\right)
            \right]
            -\ell\left(\mathbf{Z}^{(n)},{W}\right)
        \right),\nonumber
    \end{align}
    and choosing the probability distributions $P = P_{\mathbf{S}\,|{W}}$ and $Q = P_{\mathbf{S}}$ in
    \begin{align}
        \mathbb{E}_{P}[\phi]
        \;\leq\;
        D(P\|Q) + \log \mathbb{E}_{Q}\!\left[e^{\phi}\right].\nonumber
    \end{align}

    \item the equivalence between \eqref{multi_round_generalization_bound_9} and \eqref{multi_round_generalization_bound_10} follows from the definition of mutual information, which states that
\begin{align}
    I\!\left(\mathbf{S}; {W}\right)
    = \mathbb{E}\left[
        D\!\left(
            P_{\mathbf{S}\,|W}
            \,\big\|\, 
            P_{\mathbf{S}}
        \right)
    \right],\nonumber
\end{align}
where the expectation is taken with respect to the probability distribution of 
${W}$.
\end{itemize}

Using \eqref{multi_round_generalization_bound_1}, we obtain
\begin{align}
&\mathbb{E}_{\mathbf{S},{W}}\left[\gen(\mathbf{S}, {W})\right]\nonumber\\
& \leq \frac{1}{\lambda}\left[I(\mathbf{S};W)+\mathbb{E}_{W}\left[\log\mathbb{E}_{\mathbf{S}}\left[e^{\phi(\mathbf{S},\mathbf{S}',{W})}\right]\right]\right]\nonumber\\
&=\frac{I(\mathbf{S};W)}{\lambda}+\frac{1}{\lambda}\mathbb{E}_{W}\Bigg[\log\mathbb{E}_{\mathbf{S}}\Bigg[ e^{\sum_{n=1}^{N}\frac{\lambda}{N}\left(\mathbb{E}_{\mathbf{S}'}\left[\ell(\mathbf{Z}'^{(n)},{W})\right]-\ell(\mathbf{Z}^{(n)},{W})\right)}\Bigg]\Bigg]\nonumber\\
&=\frac{I(\mathbf{S};W)}{\lambda}+\frac{1}{\lambda}\mathbb{E}_{W}\Bigg[\log\prod_{n=1}^{N}\mathbb{E}_{\mathbf{S}}\Bigg[ e^{\frac{\lambda}{N}\left(\mathbb{E}_{\mathbf{S}'}\left[\ell(\mathbf{Z}'^{(n)},{W})\right]-\ell(\mathbf{Z}^{(n)},{W})\right)}\Bigg]\Bigg]\nonumber\\
&=\frac{I(\mathbf{S};W)}{\lambda}+\frac{1}{\lambda}\sum_{n=1}^{N}\mathbb{E}_{W}\Bigg[\log\mathbb{E}_{\mathbf{S}}\Bigg[ e^{\frac{\lambda}{N}\left(\mathbb{E}_{\mathbf{S}'}\left[\ell(\mathbf{Z}'^{(n)},{W})\right]-\ell(\mathbf{Z}^{(n)},{W})\right)}\Bigg]\Bigg]\label{bounding_moment_10}
\end{align}
where
\begin{itemize}
    \item \({\mathbf{Z}^{(n)}} \in \mathbb{R}^{{T}} \) denotes the collection \( \{Z^{(n)}_{1},\cdots,Z^{(n)}_{T}\} \). For this purpose, let \( n \in [N] \) and \( \mathbf{Z}^{(n)}, \mathbf{Z}'^{(n)}\in \mathbb{R}^T \);
\end{itemize}

It remains to derive an upper bound on the second term in \eqref{bounding_moment_10}. To bound this moment-generating function, we apply McDiarmid's inequality for Markov chains, as stated in Corollary~\ref{mcdiarmid_inequality_markov_chain}. To this end, we show that the property defined in \eqref{Mdiarmid_type_lemma_1} holds for the loss function \( \ell\left( \mathbf{z}^{(n)}, {w} \right) \). For any fixed values of \( {w} \) and $n\in[N]$, this function is defined as
\begin{align}
    \ell\left(\mathbf{z}^{(n)},w\right) = -\frac{1}{T} \sum_{t \in [T]} \log p_{w}(z^{(n)}_{t}|z^{(n)}_{t-1},\cdots,z^{(n)}_{t-\rho}).\nonumber
\end{align}

We consider the following telescoping sum:
\begin{align}
    &\ell\left( \mathbf{z}'^{(n)},{w}\right)-\ell\left(\mathbf{z}^{(n)},{w} \right)\label{croos_entropy_loss_func_7}\\
        &=\frac{1}{T}\sum_{t\in[T]}\Bigg[\log p_{w}\left({z}^{(n)}_{t}\big|{z}^{(n)}_{t-1},\dots, {z}^{(n)}_{t-\rho}\right) -\log p_{w}\!\left(
{z}_{t}^{\prime(n)}\,\middle|\,{z}_{t}^{\prime(n)},
\ldots,{z}_{t-\rho}^{\prime(n)} \right)\Bigg]\nonumber\\
        &=\frac{1}{T}\sum_{t\in[T]}\Bigg[\log p_{w}\left(\mathbf{z}^{(n)}_{t}|\mathbf{z}^{(n)}_{t-1}\cdots,\mathbf{z}^{(n)}_{t-\rho}\right)-\log p_{w}\left(z^{(n)}_{t}|z^{(n)}_{t-1},\cdots,z'^{(n)}_{t-\rho}\right)\Bigg]\nonumber\\
        &\qquad+\frac{1}{T}\sum_{t\in[T]}\Bigg[\log p_{w}\left(z^{(n)}_{t}|z^{(n)}_{t-1},\cdots,z'^{(n)}_{t-\rho}\right)-\log p_{w}\left(z^{(n)}_{t}|z^{(n)}_{t-1},\cdots,z'^{(n)}_{t-\rho+1},z'^{(n)}_{t-\rho}\right)\Bigg]\nonumber\\
        &\qquad\qquad\qquad\qquad\qquad\qquad\vdots\nonumber\\
        &\qquad+\frac{1}{T}\sum_{t\in[T]}\Bigg[\log p_{w}\left(z^{(n)}_{t}|z'^{(n)}_{t-1}\dots,z'^{(n)}_{t-\rho}\right)-\log p_{w}\left(z'^{(n)}_{t}| z'^{(n)}_{t-1}\dots, z'^{(n)}_{t-\rho}\right)\Bigg]\\
        &=\frac{1}{T}\sum_{t=1}^{T}\Bigg[\log p_{w}\left(z^{(n)}_{t}|z'^{(n)}_{t-1}\dots,z'^{(n)}_{t-\rho}\right)-\log p_{w}\left(z'^{(n)}_{t}| z'^{(n)}_{t-1}\cdots, z'^{(n)}_{t-\rho}\right)\nonumber\\
        &\qquad+\!\!\!\!\!\!\sum_{i\in[1:T-1]}\!\!\Bigg[\log p_{w}\left(z^{(n)}_{t}|z^{(n)}_{t-1},\cdots,z^{(n)}_{t-i},z'^{(n)}_{t-i-1},\cdots,z'^{(n)}_{t-\rho}\right) -\log p_{w}\!\left(\!
z_{t}^{(n)}\,\middle|\,z_{t-1}^{(n)},\cdots,z_{t-i}^{\prime(n)},\cdots,z_{t-\rho}^{\prime(n)}\right)\Bigg]\Bigg]\label{Difference_loss_telescope_sum_1}.
\end{align}

We now derive an upper bound on the terms in the last expression.

By the definition of \( p_{w}(z^{(n)}_{t}|z^{(n)}_{t-1},\cdots,z^{(n)}_{t-\rho})\), for \(i\geq \rho\), the difference between two terms in \eqref{Difference_loss_telescope_sum_1} is zero because the conditional probability distribution depends only on \( \{ z^{(n)}_{t-1}, \dots, z^{(n)}_{t-\rho} \} \). Therefore, without loss of generality, we can rewrite the last terms as follows:
\begin{align}
&\sum_{t\in[T]}\Bigg[\log p_{w}\left(z^{(n)}_{t}|z'^{(n)}_{t-1}\dots,z'^{(n)}_{t-\rho}\right)-\log p_{w}\!\left(z'^{(n)}_{t}| z'^{(n)}_{t-1}\dots, z'^{(n)}_{t-\rho}\right)\nonumber\\
        &\quad+\!\!\!\!\!\!\sum_{i\in[1:T-1]}\!\!\Bigg[\log p_{w}\left(z^{(n)}_{t}|z^{(n)}_{t-1},\cdots,z^{(n)}_{t-i},z'^{(n)}_{t-i-1},\cdots,z'^{(n)}_{t-\rho}\right)-\log p_{w}\!\left(\!
z_{t}^{(n)}\,\middle|\,z_{t-1}^{(n)},\cdots,z_{t-i}^{\prime(n)},\cdots,z_{t-\rho}^{\prime(n)}\right)\!\Bigg]\!\Bigg]\nonumber\\
    &=\sum_{t=1}^{T}\Bigg[\log p_{w}\left(z^{(n)}_{t}|z'^{(n)}_{t-1}\dots,z'^{(n)}_{t-\rho}\right)-\log p_{w}\!\!\left(z'^{(n)}_{t}| z'^{(n)}_{t-1}\dots, z'^{(n)}_{t-\rho}\right)\Bigg]\nonumber\\
    &
    \quad+\sum_{t=1}^{T}\sum_{i=1}^{T-1}\Bigg[\!\!\log p_{w}\left(z^{(n)}_{t}|z^{(n)}_{t-1},\cdots,z^{(n)}_{t-i},z'^{(n)}_{t-i-1},\cdots,z'^{(n)}_{t-\rho}\right)-\log p_{w}\!\left(\!
z_{t}^{(n)}\,\middle|\,z_{t-1}^{(n)},\cdots,z_{t-i}^{\prime(n)},\cdots,z_{t-\rho}^{\prime(n)}\right)\!\!\Bigg]\mathbf{1}_{\left\{1\leq i\leq\rho\right\}}\label{telescope_serie_conv_4}\\
    &=\sum_{t=1}^{T}\Bigg[\log p_{w}\left(z^{(n)}_{t}|z'^{(n)}_{t-1}\dots,z'^{(n)}_{t-\rho}\right)-\log p_{w}\left(z'^{(n)}_{t}| z'^{(n)}_{t-1}\dots, z'^{(n)}_{t-\rho}\right)\Bigg]\mathbf{1}_{\left\{z^{(n)}_{t}\neq z'^{(n)}_{t}\right\}}\nonumber\\
    &\quad+\sum_{t=1}^{T}\sum_{i=1}^{\rho}\Bigg[\log p_{w}\left(z^{(n)}_{t}|z^{(n)}_{t-1},\cdots,z^{(n)}_{t-i},z'^{(n)}_{t-i-1},\cdots,z'^{(n)}_{t-\rho}\right)-\log p_{w}\!\left(\!
z_{t}^{(n)}\,\middle|\,z_{t-1}^{(n)},\cdots,z_{t-i}^{\prime(n)},\cdots,z_{t-\rho}^{\prime(n)}\right)\!\!\!\Bigg]\mathbf{1}_{\left\{z^{(n)}_{t-i}\neq z'^{(n)}_{t-i}\right\}},\label{telescope_serie_conv_5}
\end{align}
where
\begin{itemize}
    \item Equations~\eqref{telescope_serie_conv_4}--\eqref{telescope_serie_conv_5} follow from the definition of \( p_{w}(z^{(n)}_{t}|z^{(n)}_{t-1},\cdots,z^{(n)}_{t-\rho}) \), which depends solely on \( \{z^{(n)}_{t-\rho}, \dots, z^{(n)}_{t-1}\} \). This implies that, for \( i \notin [1:\rho] \), the difference between any pair is zero.
\end{itemize}

Taking the maximum over all pairs $(z^{(n)}_{1}, \cdots, z^{(n)}_{T})$ and $(z'^{(n)}_{t},\cdots,z'^{(n)}_{T})$ gives
\begin{align}
        &\log p_{w}\!\left(\!z^{(n)}_{t}|z'^{(n)}_{t-1}\cdots,z'^{(n)}_{t-\rho}\right)-\log p_{w}\!\left(\!z'^{(n)}_{t}|z'^{(n)}_{t-1}\cdots, z'^{(n)}_{t-\rho}\right)\nonumber\\
        &\quad\leq\max\Bigg|\log p_{w}\left(z^{(n)}_{t}|z'^{(n)}_{t-1}\dots,z'^{(n)}_{t-\rho}\right)-\log p_{w}\!\left(\!z'^{(n)}_{t}| z'^{(n)}_{t-1}\dots, x'^{(n)}_{k,r,t-\rho}\right)\Bigg|\label{Defnition_max_difference_telescope_2}\\
        &\quad=\left|\log\frac{p_{w}\left(z^{(n)}_{t}|z'^{(n)}_{t-1}\dots,z'^{(n)}_{t-\rho}\right)}{p_{w}\left(z'^{(n)}_{t}| z'^{(n)}_{t-1}\dots, z'^{(n)}_{t-\rho}\right)}\right|\nonumber\\
        &\quad\coloneqq C_1\label{Defnition_max_difference_telescope_1}
\end{align}

In addition, for all $i\in[1:\rho]$ and $t\in[T]$, by considering the pairs $\left(z^{(n)}_{1},\cdots z^{(n)}_{T}\right)$ and
$\left(z'^{(n)}_{1},\cdots, z'^{(n)}_{T}\right)$, we have
\begin{align}
        &\log p_{w}\left(z^{(n)}_{t}|z^{(n)}_{t-1},\cdots,z^{(n)}_{t-i},z'^{(n)}_{t-i-1},\cdots,z'^{(n)}_{t-\rho}\right)\label{telescope_serie_conv_10}\\
    &\qquad\qquad-\log p_{w}\!\left(\!
z_{t}^{(n)}\,\middle|\,z_{t-1}^{(n)},\cdots,z_{t-i}^{\prime(n)},\cdots,z_{t-\rho}^{\prime(n)}\right)\nonumber\\
    &\leq \max\Bigg|\log p_{w}\left(z^{(n)}_{t}|z^{(n)}_{t-1},\cdots,z^{(n)}_{t-i},z'^{(n)}_{t-i-1},\cdots,z'^{(n)}_{t-\rho}\right)\label{telescope_serie_conv_11}\\
    &\qquad\qquad-\log p_{w}\!\left(\!
z_{t}^{(n)}\,\middle|\,z_{t-1}^{(n)},\cdots,z_{t-i}^{\prime(n)},\cdots,z_{t-\rho}^{\prime(n)}\right)\Bigg|\label{telescope_serie_conv_12}\\
    &\coloneqq {C_2}\nonumber
\end{align}
where the maximum is taken over pairs $\left(z^{(n)}_{1},\cdots z^{(n)}_{T}\right)$ and $\left(z'^{(n)}_{1},\cdots, z'^{(n)}_{T}\right)$, with $t\in[T]$ and $i\in[1:\rho]$.

Therefore, we can upper-bound \eqref{telescope_serie_conv_5} as follows:
\begin{align}
    &\eqref{telescope_serie_conv_5}\leq\Bigg[\max\Bigg|\log p_{w}\left(z^{(n)}_{t}|z'^{(n)}_{t-1}\dots,z'^{(n)}_{t-\rho}\right)-\log p_{w}\left(z'^{(n)}_{t}| z'^{(n)}_{t-1}\dots, z'^{(n)}_{t-\rho}\right)\Bigg|\Bigg]\sum_{t=1}^{T}\mathbf{1}_{\left\{z^{(n)}_{t}\neq z'^{(n)}_{t}\right\}}\label{telescope_serie_conv_16}\\
    &\qquad+\Bigg[\max\Bigg|\log p_{w}\left(z^{(n)}_{t}|z^{(n)}_{t-1},\cdots,z^{(n)}_{t-i},z'^{(n)}_{t-i-1},\cdots z'^{(n)}_{t-\rho}\right)\label{telescope_serie_conv_6}\\
    &\qquad\qquad-\log p_{w}\!\!\left(z^{(n)}_{t}|z^{(n)}_{t-1},\cdots,z'^{(n)}_{t-i},\cdots z'^{(n)}_{t-\rho}\right)\!\!\Bigg|\Bigg]\!\!\times S_{T,\rho}\label{telescope_serie_conv_7}\\
    &\qquad\quad\leq \left(C_1+\rho C_2\right)\times\sum_{t=1}^{T}\mathbf{1}_{\{z^{(n)}_{t}\neq z'^{(n)}_{t}\}},\label{telescope_serie_conv_9}
\end{align}
where
\begin{itemize}

\item the maxima in \eqref{telescope_serie_conv_16} and \eqref{telescope_serie_conv_6} are taken over the same variables as before;

\item $S_{T,\rho}$ denotes
\begin{align}
    \sum_{t=1}^{T}\sum_{i=1}^{\rho}\mathbf{1}_{\left\{z^{(n)}_{t-i}\neq z'^{(n)}_{t-i}\right\}},\nonumber
\end{align}

\item \eqref{telescope_serie_conv_9} holds because
\begin{align}
    \sum_{t=1}^{T}\sum_{i=1}^{\rho}\mathbf{1}\{z^{(n)}_{t-i}\neq z'^{(n)}_{t-i}\}\leq\rho\sum_{t=1}^{T}\mathbf{1}\{z^{(n)}_{t}\neq z'^{(n)}_{t}\},
\end{align}
which holds because any difference between \( z^{(n)}_{t} \) and \( z'^{(n)}_{t} \) propagates at most \( \rho \) times.

We can now rewrite \eqref{croos_entropy_loss_func_7} with the following upper bound:
\begin{align}
    &\ell(\mathbf{z}^{(n)},{w})-\ell(\mathbf{z},{w})\leq\frac{(C_1+\rho C_2)}{T}\times\sum_{t=1}^{T}\mathbf{1}\left\{z^{(n)}_{t}\neq z'^{(n)}_{t}\right\},\label{difference_worse_bound_2}
\end{align}
which has the same form as \eqref{Mdiarmid_type_lemma_1}, where $c_{t}$ is bounded by
\begin{align}
    &\frac{(C_1+\rho C_2)}{T}.\label{difference_moment_max}
\end{align}
\end{itemize}

\begin{lemma}\label{max_probability_conditional_markov}
    For any $w\in\mathcal{W}$, we have
    \begin{align}
        \log\frac{1}{p_{{{w}}}(z_{t}|z_{t-1},\cdots,z_{t-\rho})}\leq 2B+\log d.\nonumber
    \end{align}
\end{lemma}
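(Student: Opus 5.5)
We use the softmax structure in \eqref{conditional_distribution_def} and Assumption~\ref{subsubsec:bounded_output} directly. Fix $w\in\mathcal{W}$ and a context $\vec{z}=(z_{t-1},\ldots,z_{t-\rho})\in\mathcal{D}^{\rho}$, and write $\nu \coloneqq g_w(\mathcal{E}^{\odot\rho}(\vec{z}))\in\mathbb{R}^d$. By Assumption~\ref{subsubsec:bounded_output}, $\|\nu\|_\infty\le B$, so every coordinate satisfies $-B\le \nu_j\le B$.

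If $z_t=t_i$ is a non-null token, then $\mathcal{E}(z_t)=e_i$, and by \eqref{conditional_distribution_def} and \eqref{soft_max_def}
\begin{equation*}
p_w(z_t\mid \vec{z})=\sigma(\nu)_i=\frac{e^{\nu_i}}{\sum_{j\in[d]}e^{\nu_j}}.
\end{equation*}
We bound the numerator from below by $e^{-B}$ and each of the $d$ summands in the denominator from above by $e^{B}$. This gives $p_w(z_t\mid\vec{z})\ge e^{-2B}/d$. Taking logarithms yields $\log\frac{1}{p_w(z_t\mid\vec{z})}\le 2B+\log d$.

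Since $w$ and the context were arbitrary, the bound holds uniformly over $w$, $t$ and all token sequences. This uniformity is exactly what is needed to control $C_1$ and $C_2$ in \eqref{Defnition_max_difference_telescope_1} and \eqref{telescope_serie_conv_12}. For $C_1$, both log-probabilities lie in $[-(2B+\log d),0]$, so their difference is at most $2B+\log d$.

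The only delicate point is the null token $\varnothing$. It is embedded as $\mathcal{E}(\varnothing)=0$, so the inner product in \eqref{conditional_distribution_def} would be zero. We handle this by noting that next-token targets range over $\{t_1,\ldots,t_d\}$: the null symbol only serves as padding in contexts (for $t\le\rho$) and is never a prediction target. With that convention, the argument above is complete.
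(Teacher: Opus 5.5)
Your proof is correct and follows the paper's argument: both lower-bound the numerator $e^{\nu_i}$ by $e^{-B}$ and upper-bound the softmax denominator by $d\,e^{B}$ using Assumption~\ref{subsubsec:bounded_output}, which gives $2B+\log d$. Your handling of the null token $\varnothing$ is a useful clarification the paper's proof omits: since its embedding is $0$, we get $p_w(\varnothing\mid\cdot)=0$, so the bound needs the convention that targets range over $\{t_1,\ldots,t_d\}$.
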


The proof is provided in Appendix~\ref{Proof_max_probability_conditional_markov}.

To complete the proof, we need to establish an upper bound for \eqref{difference_moment_max}, uniformly over all realizations of $\{z^{(n)}_{1},\cdots,z^{(n)}_{T}\}$ and $\{z'^{(n)}_{1},\cdots,z'^{(n)}_{T}\}$ and all $t\in[T],i\in[1:\rho]$, in order to bound \( c_{t} \) as follows:
\begin{align}
    \eqref{Defnition_max_difference_telescope_1}&=\max\left|\log\frac{p_{w}\left(z^{(n)}_{t}|z'^{(n)}_{t-1}\dots,z'^{(n)}_{t-\rho}\right)}{p_{w}\left(z'^{(n)}_{t}| z'^{(n)}_{t-1}\dots, z'^{(n)}_{t-\rho}\right)}\right|\\
    &\leq\max\left[\log\frac{1}{p_{w}\left(z'^{(n)}_{t}|z'^{(n)}_{t-1}\dots, z'^{(n)}_{t-\rho}\right)}\right]\\
    &\leq 2B+\log d,\label{upperbound_negative_loss}
\end{align}
where
\begin{itemize}
    \item the maximum is taken over the same variables as in \eqref{Defnition_max_difference_telescope_2};
    \item the last inequality follows from Lemma~\ref{max_probability_conditional_markov}.
\end{itemize}

On the other hand, we have
\begin{align}
        \eqref{telescope_serie_conv_12}&=\max\Bigg|\log p_{w}\left(z^{(n)}_{t}|z^{(n)}_{t-1},\cdots,z^{(n)}_{t-i+1},z'^{(n)}_{t-i},\cdots z'^{(n)}_{t-\rho}\right)\nonumber\\
    &\qquad\qquad-\log p_{w}\left(z^{(n)}_{t}|z^{(n)}_{t-1},\cdots,z'^{(n)}_{t-i+1},z'^{(n)}_{t-i},\cdots z'^{(n)}_{t-\rho}\right)\Bigg|\\
        &=\max\Bigg|\log\left[\frac{p_{w}\left(z^{(n)}_{t}|z^{(n)}_{t-1},\cdots,z^{(n)}_{t-i+1},z'^{(n)}_{t-i},\cdots z'^{(n)}_{t-\rho}\right)}{p_{w}\left(z^{(n)}_{t}|z^{(n)}_{t-1},\cdots,z'^{(n)}_{t-i+1},z'^{(n)}_{t-i},\cdots z'^{(n)}_{t-\rho}\right)}\right]\Bigg|\label{croos_entropy_loss_func_4}\\
&=\max\Bigg|\log\left[\frac{\big<\sigma\left(g_{w}\left(\mathcal{E}(z^{(n)}_{t-1}),\cdots,\mathcal{E}(z'^{(n)}_{t-i}),\cdots,\mathcal{E}(z'^{(n)}_{t-\rho})\right)\right),\mathcal{E}(z^{(n)}_{t})\big>}{\big<\sigma\left(g_{w}\left(\mathcal{E}(z^{(n)}_{t-1}),\cdots,\mathcal{E}(z'^{(n)}_{t-i+1}),\cdots,\mathcal{E}(z'^{(n)}_{t-\rho})\right)\right),\mathcal{E}(z^{(n)}_{t})\big>}\right]\Bigg|\label{croos_entropy_loss_func_1}
\end{align}
where
\begin{itemize}
    \item in \eqref{croos_entropy_loss_func_1}, the conditional probability distribution is substituted using the definition in \eqref{conditional_distribution_def}, which is the inner product between
\begin{align}
\sigma\left(g_{w}\left(\mathcal{E}(z^{(n)}_{t-1}), \cdots, \mathcal{E}(z^{(n)}_{t-\rho})\right)\right),\nonumber
\end{align}
and $\mathcal{E}(z^{(n)}_{t})$.
\end{itemize}

Without loss of generality, assume that the maximum in \eqref{croos_entropy_loss_func_1}
is attained for the pair of sequences
\[
\bigl(z^{(n)}_{1},\ldots,z^{(n)}_{T}\bigr)
\quad \text{and} \quad
\bigl(z'^{(n)}_{1},\ldots,z'^{(n)}_{T}\bigr).
\]
Then, by rewriting \eqref{croos_entropy_loss_func_1} using the softmax
function defined in \eqref{soft_max_def}, we obtain the following:
\begin{align}
&g_{w,j} = g_w\left(\mathcal{E}(z^{(n)}_{t-1}), \cdots,\mathcal{E}(z'^{(n)}_{t-i}),\cdots,\mathcal{E}(z'^{(n)}_{t-\rho})\right)_j,\label{element_wise_g_funct_1} \\
&g'_{w,j} = g_w\left(\mathcal{E}(x^{(n)}_{t-1}), \cdots,\mathcal{E}(z'^{(n)}_{t-i+1}),\cdots,\mathcal{E}(z'^{(n)}_{t-\rho})\right)_j,\label{element_wise_g_funct_2}
\end{align}
and
\begin{align}
&g^{(j)}_{w} = \frac{\exp({g_{w,j}})}{\sum_{j \in [d]} \exp({g_{w,j}})}, \qquad g'^{(j)}_{w} = \frac{\exp({g'_{w,j}})}{\sum_{j \in [d]} \exp({g'_{w,j}})},\label{change_measure_g_func_1}
\end{align}
where \( g_w\left(\mathcal{E}(z^{(n)}_{t-1}), \cdots, \mathcal{E}(z^{(n)}_{t-\rho})\right)_j \) and $\mathcal{E}_j(z^{(n)}_{t})$ denote the \(j\)-th coordinates of the vectors
\[
g_w\left(\mathcal{E}(z^{(n)}_{t-1}), \cdots, \mathcal{E}(z^{(n)}_{t-\rho})\right)
\]
and $\mathcal{E}(z^{(n)}_{t})$ in $\mathbb{R}^d$, respectively.

With these definitions, we have
\begin{align}
        \eqref{croos_entropy_loss_func_1} &= \Bigg| \left[\log\left[\frac{\sum_{j\in[d]}g^{(j)}_{w}\cdot\mathcal{E}_j(z^{(n)}_{t})}{\sum_{j\in[d]}g'^{(j)}_{w}\cdot\mathcal{E}_j(z^{(n)}_{t})}\right]\right]\nonumber\\
        &=\Bigg|\log\left[\frac{\sum_{j\in[d]}\exp({g_{w,j}})\cdot\mathcal{E}_j(z^{(n)}_{t})}{\sum_{j\in[d]}\exp({g'_{w,j}})\cdot\mathcal{E}_j(z^{(n)}_{t})}\right] 
        +\log\left[\frac{\sum_{j\in[d]}\exp({g'_{w,j}})}{\sum_{j\in[d]}\exp({g_{w,j}})}\right]\Bigg|\nonumber\\
&\leq\Bigg|\log\left[\frac{\sum_{j\in[d]}\exp({g_{w,j}})\cdot\mathcal{E}_j(z^{(n)}_{t})}{\sum_{j\in[d]}\exp({g'_{w,j}})\cdot\mathcal{E}_j(z^{(n)}_{t})}\right]\Bigg|
          +\Bigg|\left[\log\left(\frac{\sum_{j\in[d]}\exp({g'_{w,j}})}{\sum_{j\in[d]}\exp({g_{w,j}}})\right)\right]\Bigg|\label{simplification_difference_cross_entropy_11}\\
        &\leq\max\Bigg\{\max_{j\in[d]}\log\left[\frac{\exp({g_{w,j}})\cdot\mathcal{E}_j(z^{(n)}_{t})}{\exp({g'_{w,j}})\cdot\mathcal{E}_j(z^{(n)}_{t})}\right],\max_{j\in[d]}\log\left[\frac{\exp({g'_{w,j}})\cdot\mathcal{E}_j(z^{(n)}_{t})}{\exp({g_{w,j}})\cdot\mathcal{E}_j(z^{(n)}_{t})}\right]\Bigg\}\label{simplification_difference_croos_entropy_12}\\
        &\quad +\max\Bigg\{\!\max_{j\in[d]}\log\!\left[\frac{\exp({g_{w,j}})}{\exp({g'_{w,j}})}\right]\!,\max_{j\in[d]}\log\!\left[\frac{\exp({g'_{w,j}})}{\exp({g_{w,j}})}\right]\!\!\Bigg\}\label{simplification_difference_croos_entropy_15}\\
        &=\max_{j\in[d]}\Big|g_{w,j}-g'_{w,j}\Big|+\max_{j\in[d]}\left|g_{w,j}-g'_{w,j}\right|\nonumber\\
        &=2\max_{j\in[d]}\left|g_{w,j}-g'_{w,j}\right|\label{simplification_difference_croos_entropy_17}\\
        &\leq 4B\label{simplification_difference_croos_entropy_19}
\end{align}
where
\begin{itemize}
    \item Equations~\eqref{simplification_difference_cross_entropy_11} and \eqref{simplification_difference_croos_entropy_17} follow from the triangle inequality, i.e.,
    \begin{align}
        |x+y|\leq |x|+|y|,\nonumber
    \end{align}

    \item Equations~\eqref{simplification_difference_croos_entropy_12}--\eqref{simplification_difference_croos_entropy_15} follow from the following inequality:
    \begin{align}
        &\Bigg|\log\left(\frac{\sum_{j=1}^{d} \exp({g_{{w},j}})}{\sum_{j=1}^{d}  \exp({g'_{{w},j}})}\right)\Bigg|\nonumber\\
        &=\max\Bigg\{\!\!\log\left(\!\frac{\sum_{j=1}^{d} \exp({g_{w,j}})}{\sum_{j=1}^{d} \exp({g'_{w,j}})}\right)\!,\log\left(\!\frac{\sum_{j=1}^{d} \exp({g'_{w,i}})}{\sum_{j=1}^{d} \exp({g_{w,j}})}\right)\!\!\!\Bigg\}\nonumber\\
        &\leq \max\Bigg\{\!\!\log\left(\max_{j\in[d]} \frac{\exp({g_{w,j}})}{\exp({g'_{w,j}})}\right),\log\left(\max_{j\in[d]} \frac{\exp({g'_{w,j}})}{\exp({g_{w,j}})}\right)\!\!\Bigg\}\label{max_bound_change_g_func_1}\\
        &= \max\Bigg\{\!\!\max_{j\in[d]}\log\left( \frac{\exp({g_{w,j}})}{\exp({g'_{w,j}})}\right),\max_{j\in[d]}\log\left( \frac{\exp({g'_{w,j}})}{\exp({g_{w,j}})}\right)\!\!\Bigg\},\label{max_bound_change_g_func_2}\\
        &=\max_{j\in [d]}\Big|g_{w,j}-g'_{w,j}\Big|,
    \end{align}
    where \eqref{max_bound_change_g_func_1} uses the fact that, for two tuples of positive numbers $(a_1,\cdots,a_d)$ and $(b_1,\cdots,b_d)$, we have
    \begin{align}
        \frac{a_1+\cdots+a_d}{b_1+\cdots+b_d}\leq\max_{j\in[d]}\frac{a_j}{b_j}
    \end{align}

    \item \eqref{simplification_difference_croos_entropy_19} follows from Assumption~\eqref{subsubsec:bounded_output}.
\end{itemize}

Thus, by substituting \eqref{upperbound_negative_loss} and \eqref{simplification_difference_croos_entropy_19} into \eqref{difference_worse_bound_2}, we obtain, for any $t \in [T]$,
\begin{align}
    c_{t} &\leq\frac{(C_1+\rho C_2)}{T}\label{bounding_constant_mc_diarmid}
\end{align}
We can now apply Corollary~\ref{mcdiarmid_inequality_markov_chain}, which implies
\begin{align*}
&\sum_{n=1}^{N}\mathbb{E}_{W}\left[\log\mathbb{E}_{\mathbf{S}}\left[e^{\frac{\lambda}{N}\left(\mathbb{E}_{\mathbf{S}}\left[\ell(\mathbf{z}^{(n)},W)\right]-\left[\ell(\mathbf{s}^{(n)},W)\right]\right)}\right]\right]\nonumber\\
&\quad\leq\frac{\lambda^2\|c\|^2\tau_{\min}}{8N},
\end{align*}

Rewriting \eqref{multi_round_generalization_bound_10}, we obtain
\begin{align}
    &\mathbb{E}_{\mathbf{S},{W}}\left[\gen(\mathbf{S},W)\right]\nonumber\\
    &\qquad\leq\frac{1}{\lambda}\left[I(\mathbf{S};W)+\mathbb{E}_{W}\left[\log\mathbb{E}_{\mathbf{S}}\left[e^{\phi(\mathbf{S},\mathbf{S}',{W})}\right]\right]\right]\nonumber\\
    &\qquad\leq\frac{1}{\lambda}\left[I(\mathbf{S};W)+\frac{{\lambda}^{2}\|c\|^{2}\tau_{\min}}{8 N}\right]\nonumber\\
    &\qquad=\left(2B(1+2\rho)+\log d\right)\sqrt{\frac{\tau_{\min}I(\mathbf{S};W)}{2N T}},\label{substtuting_lambda_k_1}
\end{align}
where \eqref{substtuting_lambda_k_1} follows by choosing
\begin{align}
    \lambda=\sqrt{\frac{8N I(\mathbf{S};W)}{\|c\|^2\tau_{\min}}},
\end{align}
which completes the proof.

\qed








\subsection{Proof of Theorem \ref{Theorem_Multi_round_LNTP}}\label{Proof_Theorem_Multi_round_LNTP}

For ease of notation, let
\begin{align}
&m=\left\lceil 63\left(\frac{2C\rho}{\epsilon_1}\right)^2\log\left(dNT\right)\right\rceil,\:\epsilon_1=\frac{\theta}{2}\label{Paramaters_dimension_multi_round_1}\\
&c_1=c_2=\sqrt{\frac{\epsilon^2_1}{C^2}+1},\:\nu=\frac{1}{2c_1}.\label{Paramaters_dimension_multi_round_3}
\end{align}

We use Theorem~\ref{lossy_inexpectation_bound_federated}, which can be derived in the same manner. To prove this theorem, let $\epsilon$ satisfy \eqref{def:Rdist}. We upper-bound the rate-distortion term $R_{\mathcal{D}}(\epsilon)$, defined in \eqref{general_bound_lossy_mutual_1}.

We define an appropriate choice of $P_{\hat{W}|\mathbf{S}}$ that satisfies condition~\ref{def:Rdist}.

It is straightforward to verify that
\begin{align}
    &\mathbb{E}_{\mathbf{S},{W},\hat{W}}\left[\operatorname{gen}_{\theta}(\mathbf{S},{W}) - \operatorname{gen}_{0}(\mathbf{S},\hat{W})\right]\nonumber\\
    &= \mathbb{E}_{\mathbf{S},\hat{W},W}\left[\mathcal{L}_{-\theta/2}(W) - {\mathcal{L}}_{0}(\hat{W})\right] \nonumber \\
    &\quad- \mathbb{E}_{\mathbf{S},\hat{W},W}\left[\hat{\mathcal{L}}_{\theta/2}(\mathbf{S},W)-\hat{\mathcal{L}}_{0}(\mathbf{S},\hat{W})\right] \nonumber \\
    &=
    \mathbb{E}_{\mathbf{S},\hat{W},W}\left[\frac{1}{T}\sum_{t \in [T]}
        \mathbf{1}\left\{ \log\frac{C_1({W},t,\mathbf{z})}{C_2(W,t,\mathbf{z})} < {-\theta/2} \right\}\right]\\
    &\quad-
    \mathbb{E}_{\mathbf{S},\hat{W},W}\left[\frac{1}{T}\sum_{t \in [T]}
        \mathbf{1}\left\{\log\frac{C_1(\hat{W},t,\mathbf{z})}{C_2(\hat{W},t,\mathbf{z})} < {0} \right\}\right]\\
        &+\mathbb{E}_{\mathbf{S},\hat{W},W}\Bigg[\frac{1}{N}\!\!\!\sum_{n\in[N]}\Bigg[\frac{1}{T}\sum_{t \in [T]}
    \mathbf{1}\left\{ \log\frac{C_{1,n}(\hat{W},t,\mathbf{z}^{(n)})}{C_{2,n}(\hat{W},t,\mathbf{z}^{(n)})}<0 \right\}\\
    &\quad-\mathbb{E}_{\mathbf{S},{W},\hat{W}}\Bigg[\frac{1}{N}\!\!\!\sum_{n\in[N]}\!\!\Bigg[\frac{1}{T}\!\!\!\sum_{t \in [T]}
    \mathbf{1}\left\{ \log\frac{C_{1,n}({{W}},t,\mathbf{z}^{(n)})}{C_{2,n}({W},t,\mathbf{z}^{(n)})}\!<\!\theta/2 \right\}\!\!\Bigg]\Bigg]\\
    &\leq
    \mathbb{E}_{\mathbf{S},{W},\hat{W}}\Bigg[\frac{1}{T}\sum_{t \in [T]}
        \mathbf{1}\Bigg\{\Bigg|\log\frac{C_1({W},t,\mathbf{z})}{C_2({W},t,\mathbf{z})}-\log\frac{C_1(\hat{W},t,\mathbf{z})}{C_2(\hat{W},t,\mathbf{z})}\Bigg|>\frac{\theta}{2}\Bigg\}\nonumber\\
    &\quad+\mathbb{E}_{\mathbf{S},{W},\hat{W}}\Bigg[\frac{1}{N}\!\!\!\!\sum_{n\in[N]}\!\!\Bigg[\frac{1}{T}\sum_{t \in [T]}
    \mathbf{1}\Bigg\{\Bigg|\log\frac{C_{1,n}(\hat{W},t,\mathbf{z}^{(n)})}{C_{2,n}(\hat{W},t,\mathbf{z}^{(n)})}-\log\frac{C_{1,n}({W},t,\mathbf{z}^{(n)})}{C_{2,n}({{W}},t,\mathbf{z}^{(n)})}\Bigg|>\frac{\theta}{2}\Bigg\}\Bigg]\Bigg]\nonumber\\
    &\quad\coloneqq\epsilon_A\label{lossy_pop_emp_difference_1}
\end{align}

Let
\begin{align}
C_{1}(w,t,{\mathbf{z}}) &= p_{w}\left(z_{t}|z_{t-1}, \dots, z_{t-\rho}\right),\label{defition_maximal_ratio_1} \\
C_{2}(w,t,{\mathbf{z}}) &= \max_{z'_{t} \neq z_{t}}
p_{w}\left(z'_{t}|z_{t-1}, \dots, z_{t-\rho}\right).
\label{defition_maximal_ratio_2}
\end{align}

Given the model $w$, we denote the corresponding matrices by $M_j^{w}$, where $j\in[\rho]$. Moreover, let
\begin{align}
    u_{t} \coloneqq & \mathcal{E}(z_{t}), \nonumber \\
    \ell_{t} \coloneqq& \ell, \text{ if } u_{t} = e_{\ell}.
\end{align}
In other words, $\ell_{t}$ denotes the order of $z_{t}$ in the dictionary. We define $\ell'_{t}$ analogously. We also denote the element $(a_1,a_2)\in [d] \times [d]$ of $M_j^{w}$ by $M_j^{w}[a_1,a_2]$.

Then, we can write
\begin{align}
    C_{1}(w,t,{\mathbf{z}}) &= \frac{e^{\sum_{j\in[\rho]} M_j^{w}[\ell_{t-j},\ell_{t}]}}{\sum_{\ell\in[d]}e^{\sum_{j\in[\rho]} M_j^{w}[\ell_{t-j},\ell]}},\nonumber\\
    C_{2}(w,t,{\mathbf{z}}) &= \frac{\max_{\ell' \neq \ell }e^{\sum_{j\in[\rho]} M_j^{w}[\ell_{t-j},\ell']}}{\sum_{\ell\in[d]}e^{\sum_{j\in[\rho]} M_j^{w}[\ell_{t-j},\ell]}}. \nonumber
\end{align}
Hence,
\begin{align}
    \log \frac{ C_{1}(w,t,{\mathbf{z}})}{ C_{2}(w,t,{\mathbf{z}})} \coloneqq \sum\limits_{j\in[\rho]} M_j^{w}[\ell_{t-j},\ell]- \max_{\ell' \neq \ell}
\sum_{j\in[\rho]}M_j^{w}[\ell_{t-j},\ell'].
\end{align}

To express this term as an inner product, for a given model $w$, define $V(w,r)\in \mathbb{R}^{\rho d}$ and $V(z_{t-1:t-\rho})\in \mathbb{R}^{\rho d}$, respectively, as
\begin{align}
    &V(w,r) \triangleq  \Big(\! M_1^{w}[1,r],\cdots,M_1^{w}[d,r],\ldots,M_{\rho}^{w}[1,r],\ldots,M_{\rho}^{w}[d,r]\!\Big),\nonumber \\
    &V(z_{t-1:t-\rho}) \triangleq  \Big(u_{t-1},\ldots,u_{t-\rho}\Big).\label{Auxiliary_vector_definition_LNTP}
\end{align}
With these definitions, we have
\begin{align}
    \log \frac{ C_{1}(w,t,{\mathbf{z}})}{ C_{2}(w,t,{\mathbf{z}})} &= \left\langle   V(w,\ell) , V(z_{t-1:t-\rho}) \right\rangle\nonumber\\
    &-\max_{\ell'\neq\ell} \left\langle   V(w,\ell') , V(z_{t-1:t-\rho}) \right\rangle.\nonumber
\end{align}

Thus,
\begin{align}
D_{t}&\coloneqq\left|\log\frac{C_{1}({w},t,\mathbf{z})}{C_{2}({w},t,\mathbf{z})}-\log\frac{C_1(\hat{w},t,{\mathbf{z}})}{C_2(\hat{w},t,\mathbf{z})}\right|\\
&\leq \Bigg| \left\langle   V({w},\ell) - V(\hat{W},\ell)  , V(z_{t-1:t-\rho}) \right\rangle \Bigg|\nonumber\\
&\quad +\Bigg| \max_{\ell' \neq \ell} \left\langle   V({w},\ell') , V(z_{t-1:t-\rho}) \right\rangle- \max_{\ell' \neq \ell} \left\langle   V(\hat{W},\ell') , V(z_{t-1:t-\rho}) \right\rangle \Bigg| \nonumber\\
&\leq\left| \left\langle   V({w},\ell) - V(\hat{W},\ell)  , V(z_{t-1:t-\rho}) \right\rangle \right|\nonumber\\
&\quad +\max_{\ell' \neq \ell}\Bigg|\left\langle   V({w},\ell') , V(z_{t-1:t-\rho}) \right\rangle- \left\langle   V(\hat{W},\ell') , V(z_{t-1:t-\rho}) \right\rangle\Bigg| \nonumber\\
&=\left| \left\langle   V({w},\ell) - V(\hat{W},\ell)  , V(z_{t-1:t-\rho}) \right\rangle \right| +\max_{\ell' \neq \ell}\left|  \left\langle   V({w},\ell')-V(\hat{W},\ell') , V(z_{t-1:t-\rho}) \right\rangle \right| \nonumber\\
&=\left|\left\langle V({w}-\hat{W},\ell), V(z_{t-1:t-\rho}) \right\rangle \right| +\max_{\ell' \neq \ell}\left|  \left\langle   V({w}-\hat{W},\ell'), V(z_{t-1:t-\rho}) \right\rangle \right| \nonumber\\
&\leq 2\max_{\ell}\left|  \left\langle   V({w}-\hat{W},\ell_{i,t}), V(z_{t-1:t-\rho}) \right\rangle \right|.
\end{align}

Note that $V(\mathsf{A} w,\ell)=\mathsf{A}V(w,\ell)$ and $\sum_{\ell\in[d]}\left\|V\left( w,\ell\right)\right\|^2=\|w\|^2$, where $\mathsf{A}\in \mathbb{R}^{m \times \rho d}$ has elements distributed in an i.i.d.\ manner according to $\mathcal{N}(0,1/m)$. For any $\ell\in[d]$, denote by $\mathcal{J}_{\ell}$ the event that $\left\|  \mathsf{A} \, V\left( w,\ell\right)\right\| > c_1  \left\|V\left( w,\ell\right)\right\|$, where $c_1 \in \mathbb{R}_+$, i.e.,
\begin{align}
    \mathcal{J}_{\ell} \coloneqq \mathrm{1}\left\{\left\| \, \mathsf{A}V\left(w,\ell\right)\right\| > c_1 \left\|V\left(w,\ell\right)\right\|\right\}.
\end{align}

To define $\hat{W}$, we define $V(\hat{W},\ell)$ for every $\ell\in[d]$. Fix some $\epsilon_1 \geq 0$ and denote
\begin{align}
    [d]_{w} = \left\{\ell\in[d]\colon \|V(w,\ell)\|\geq\frac{\epsilon_1}{2\rho}\right\}.
\end{align}

First, let
\begin{align}
    U_{\ell} = \begin{cases}
        V\left(\mathsf{A}w,\ell\right), & \text{ if }  \left\|  \mathsf{A} \, V\left( w,\ell\right)\right\| \leq c_1 \left\|V\left(w,\ell\right)\right\|, \\
        \mathbf{0}_m, & \text{ otherwise.}
    \end{cases}
\end{align}
Then, define
\begin{align}
    \hat{W}_{\ell} = \begin{cases}
         \mathbf{0}_{m}&, \text{ if } \ell \notin [d]_{w}, \\
         U_{\ell} + N_{\ell},& \text{ otherwise.}
     \end{cases},
\end{align}
where $N_{\ell} \sim \text{Uniform}\left(\mathcal{B}_m(0,\nu)\right)$. Finally, let
\begin{align}
     V(\hat{W},\ell) &= \mathsf{A}^\top \hat{W}_{\ell},
\end{align}
Note that, for all $\ell$ such that $\ell \notin [d]_{w}$, we have
\begin{align}
    \mathbb{P}\left(\left|  \left\langle   V(w-\hat{W},\ell), V(z_{t-1:t-\rho}) \right\rangle  \right| \geq \epsilon_1/2 \right)&=\mathbb{P}\left(\left|  \left\langle   V({w},\ell), V(z_{t-1:t-\rho}) \right\rangle  \right| \geq \epsilon_1/2\right)\nonumber\\
    &\quad=0.
\end{align}

Hence,
\begin{align}
   \mathbb{P}\left(D_{t}\geq\epsilon_1\right)&\leq\mathbb{P}\left(2\max_{\ell}\left|  \left\langle   V({w}-\hat{W},\ell), V(z_{t-1:t-\rho}) \right\rangle \right|\geq\epsilon_1\right)\nonumber\\
   &\quad=\mathbb{P}\left(\max_{\ell}\left|\left\langle V({W}-\hat{W},\ell), V(z_{t-1:t-\rho}) \right\rangle \right|\geq\frac{\epsilon_1}{2}\right)\nonumber\\
   &\quad\leq\sum_{\ell\in[d]}\mathbb{P}\left(\left|  \left\langle   V({w}-\hat{{W}},\ell), V(z_{t-1:t-\rho}) \right\rangle \right|\geq\frac{\epsilon_1}{2}\right)\nonumber\\
   &\quad=\sum_{\ell\in[d]_{w}}\mathbb{P}\left(\left|  \left\langle   V({w}-\hat{W},\ell), V(z_{t-1:t-\rho}) \right\rangle \right|\geq\frac{\epsilon_1}{2}\right)\nonumber\\
   &\quad\leq\sum_{\ell\in[d]_{w}}\mathbb{P}\left(\left|\left\langle V\left({w}-\hat{W},\ell\right),V\left(z_{t-1:t-\rho}\right)\right\rangle\right|\geq\frac{\epsilon_1}{2}, \mathcal{J}^c_{\ell}\right)\nonumber\\
   &\qquad+\sum_{\ell\in[d]_{w}}\mathbb{P}\left(\left|\left\langle V\left({w}-\hat{W},\ell\right),V\left(z_{t-1:t-\rho}\right)\right\rangle\right|\geq\frac{\epsilon_1}{2}, \mathcal{J}_{\ell}\right)\nonumber\\
   &\quad\leq \sum_{\ell\in[d]_{w}}\mathbb{P}\left(\left|\left\langle V\left({w}-\hat{W},\ell\right),V\left(z_{t-1:t-\rho}\right)\right\rangle\right|\geq\frac{\epsilon_1}{2}, \mathcal{J}^c_{\ell}\right)+\sum_{\ell\in[d]_{w}}\mathbb{P}\left(\mathcal{J}_{\ell}\right)\nonumber\\
   &\quad\leq\sum_{\ell\in[d]_{w}}\mathbb{P}\left(\left|\left\langle V\left({w},\ell\right)-\mathsf{A}^\top \mathsf{A}V\left({w},\ell\right),V\left(z_{t-1:t-\rho}\right)\right\rangle\right|\geq\frac{\epsilon_1}{2}, \mathcal{J}^c_{\ell}\right)\nonumber\\
   &\qquad+\sum_{\ell\in[d]_{w}}\mathbb{P}\left(\left|\left\langle \mathsf{A}^\top N_{\ell} ,V\left(z_{t-1:t-\rho}\right)\right\rangle\right|\geq\frac{\epsilon_1}{2}\right)+\sum_{\ell\in[d]_{w}}\mathbb{P}\left(\mathcal{J}_{\ell}\right)\nonumber\\
   &\leq \sum_{\ell\in[d]_{w}}\mathbb{P}\left(\left|\left\langle V\left({w},\ell\right)-\mathsf{A}^\top \mathsf{A}V\left({w},\ell\right),V\left(z_{t-1:t-\rho}\right)\right\rangle\right|\geq\frac{\epsilon_1}{2}\right)\nonumber\\
   &\qquad+\sum_{\ell\in[d]_{w}}\mathbb{P}\left(\left|\left\langle \mathsf{A}^\top N_{\ell} ,V\left(z_{t-1:t-\rho}\right)\right\rangle\right|\geq\frac{\epsilon_1}{2}\right)+\sum_{\ell\in[d]_{w}}\mathbb{P}\left(\mathcal{J}_{\ell}\right)\nonumber\\
   &\quad\leq\!\!\!\!\sum_{\ell\in[d]_{w}}\mathbb{P}\left(\left|\left\langle V\left({w},\ell\right)-\mathsf{A}^\top \mathsf{A}V\left({w},\ell\right),V\left(z_{t-1:t-\rho}\right)\right\rangle\right|\geq\frac{\epsilon_1}{2}\right)\nonumber\\
   &\qquad+\!\!\!\!\sum_{\ell\in[d]_{w}}\!\!\mathbb{P}\left(\left|\left\langle  N_{\ell} ,\mathsf{A}V\left(z_{t-1:t-\rho}\right)\right\rangle\right|\geq\frac{\epsilon_1}{2}\!\Big|\textit{Cond}\right)\nonumber\\
   &\qquad +\sum_{\ell
   \in[d]_{w}}\mathbb{P}\left(\left\|  \mathsf{A}V\!\!\left(z_{t-1:t-\rho}\right)\right\|\!\geq c_2 \!\left\|V\!\!\left(z_{t-1:t-\rho}\right)\right\|\right) +\sum_{\ell\in[d]_{w}}\mathbb{P}\left(\mathcal{J}_{\ell}\right)\nonumber\\
&\quad=\circled{I}+\circled{II}+\circled{III}+\circled{IV}\nonumber
\end{align}
where $\textit{Cond}$ denotes $\!\left\|  \mathsf{A}V\left(z_{t-1:t-\rho}\right)\right\|\leq c_2 \left\|V\left(z_{t-1:t-\rho}\right)\right\|$.

\begin{itemize}

    \item \textbf{Bounding \circled{I}:}

\begin{align}
        \circled{I}&\leq 4\sum_{\ell\in[d]_w} e^{-\frac{m}{7}\bigg(\!\frac{\epsilon_1}{2\rho\left\|V\left(w,\ell\right)\right\|}\!\bigg)^2}\label{Bounding_term_I}\\
        &\leq 4d\times e^{-\frac{m}{7}\bigg(\!\frac{\epsilon_1}{2\rho\left\|V\left(w,\ell\right)\right\|}\!\bigg)^2}\leq\mathcal{O}\left(\frac{1}{NT}\right) \nonumber
\end{align}
where inequality~\eqref{Bounding_term_I} follows from \cite[Lemma 8, Part 2]{gronlund2020near}.

    \item \textbf{Bounding \circled{II}:}

\begin{align}
    \circled{II}&\leq\!\!\sum_{\ell\in[d]_w}\!\!\frac{2m\nu^m}{\sqrt{\pi}}e^{-\frac{\left(m+1\right)}{2}\left(\frac{\epsilon_1}{2c_1\nu C}\right)^2}\leq\mathcal{O}\left(\frac{1}{NT}\right)\nonumber
\end{align}

    \item \textbf{Bounding \circled{III}-\circled{IV}:}

\begin{align}
    &\circled{III}\leq \!\!\sum_{\ell\in[d]_w}\!\!e^{-m\left(c^2_2-1-\log(c_2)\right)}\leq\mathcal{O}\left(\frac{1}{NT}\right),\nonumber\\
    &\circled{IV}\leq \!\!\sum_{\ell\in[d]_w}\!\!e^{-m\left(c^2_1-1-\log(c_1)\right)}\leq\mathcal{O}\left(\frac{1}{NT}\right),\nonumber
\end{align}
where the preceding two relations follow from \cite[Lemma~8, Part~1]{gronlund2020near}.

\end{itemize}

In the final step, we compute an upper bound on the mutual information term. For this purpose, let
\begin{align}
    \mathbf{T}_{[d]}=\left({T}_{1},\cdots,{T}_{d}\right)\in\{0,1\}^{d},
\end{align}
where ${T}_{\ell}=1$ if $\ell\in[d]_{w}$, and ${T}_{\ell}=0$ otherwise.
For simplicity and with a slight abuse of notation, denote
\begin{align}
    [d]_{\mathbf{T}_{[d]}} =\left\{\ell\in[d]\colon T_{\ell} = 1\right\}.\nonumber
\end{align}
With this definition, we have
\begin{align}
&\mathbb{I}^A\left(\mathbf{S};\hat{W}\right)\nonumber\\
&\quad\leq \mathbb{I}^{\mathsf{A}}\left({W};\hat{W}\right)\label{Bound_MI_Binary_1}\\
    &\quad= \mathbb{I}^{\mathsf{A}}\left(\left\{V(W,\ell)\right\}_{\ell\in[d]};\left\{V(\hat{W},\ell)\right\}_{\ell\in[d]}\right)\nonumber\\
    &\quad\leq \mathbb{I}^{\mathsf{A}}\left(\left\{V(W,\ell)\right\}_{\ell\in[d]};\left\{\hat{W}_{\ell}\right\}_{\ell\in[d]}\right)\label{Bound_MI_Binary_2}\\
    &\quad\leq \mathbb{I}^{\mathsf{A}}\left(\left\{V(W,\ell)\right\}_{\ell\in[d]};\left\{\hat{W}_{\ell}\right\}_{\ell\in[d]},\mathbf{T}_{[d]}\right)\nonumber\\
    &\quad=\mathbb{I}^{\mathsf{A}}\left(\left\{V(W,\ell)\right\}_{\ell\in[d]};\left\{\hat{W}_{\ell}\right\}_{\ell\in[d]}\Big|\mathbf{T}_{[d]}\right)+\mathbb{I}^{\mathsf{A}}\left(\left\{V(W,\ell)\right\}_{\ell\in[d]};\mathbf{T}_{[d]}\right)\nonumber\\
    &\quad\leq \mathbb{I}^{\mathsf{A}}\left(\left\{V(W,\ell)\right\}_{\ell\in[d]};\left\{\hat{W}_{\ell}\right\}_{\ell\in[d]}\Big|\mathbf{T}_{[d]}\right)+H\left(\mathbf{T}_{[d]}\right)\nonumber\\
    &\quad=\mathbb{E}_{\mathbf{T}_{[d]}}\left[\mathbb{I}^{\mathsf{A},\mathbf{T}_{[d]}}\left(\!\!\left\{V(W,\ell)\right\}_{\ell\in[d]};\left\{\hat{W}_{\ell}\right\}_{\ell\in[d]}\right)\!\right]\!+\! H\left(\mathbf{T}_{[d]}\right)\nonumber\\
    &\quad=\mathbb{E}_{\mathbf{T}_{[d]}}\Bigg[h^{\mathsf{A},\mathbf{T}_{[d]}}\left(\left\{\hat{W}_{\ell}\right\}_{\ell\in[d]_{\mathbf{T}_{[d]}}}\right)-h^{\mathsf{A},\mathbf{T}_{[d]}}\left(\left\{\hat{W}_{\ell}\right\}_{\ell\in[d]_{\mathbf{T}_{[d]}}}\!\Big|\!\left\{V(W,\ell)\right\}_{\ell\in[d]}\right)\!\!\Bigg]\!\!+H\!\!\left(\mathbf{T}_{[d]}\right)\nonumber\\
    &\quad\leq\mathbb{E}_{\mathbf{T}_{[d]}}\Bigg[\sum_{\ell\in[d]_{\mathbf{T}_{[d]}}} h^{\mathsf{A},\mathbf{T}_{[d]}}\left(\left\{\hat{W}_{\ell}\right\}\right)-h^{\mathsf{A},\mathbf{T}_{[d]}}\left(\!\!\left\{\hat{W}_{\ell}\right\}_{\ell\in[d]_{\mathbf{T}_{[d]}}}\Big|\left\{V(W,\ell)\right\}_{\ell\in[d]}\right)\!\!\Bigg]\!\!+\!\!H\left(\mathbf{T}_{[d]}\right)\nonumber\\
    &\quad=\mathbb{E}_{\mathbf{T}_{[d]}}\!\!\left[\sum_{\ell\in[d]_{\mathbf{T}_{[d]}}}\!\!\!\!\!\! \left(h^{\mathsf{A},\mathbf{T}_{[d]}}\left(\hat{W}_{\ell}\right)-h^{\mathsf{A},\mathbf{T}_{[d]}}\left(\hat{W}_{\ell}\Big|V(W,\ell)\right)\right)\!\right]+H\left(\mathbf{T}_{[d]}\right)\nonumber\\
    &\quad\leq \mathbb{E}_{\mathbf{T}_{[d]}}\left[\sum_{\ell\in[d]} T_{\ell}\right]\log\left(\text{Volume}\left(\mathcal{B}_m(c_1B_1+\nu)\right)\right)-\mathbb{E}_{\mathbf{T}_{[d]}}\left[\sum_{\ell\in\mathbf{T}_{[d]}}h^{A,\mathbf{T}_{[d]}}\left(\hat{W}_{\ell}\Big|V(W,\ell)\right)\right]+H\left(\mathbf{T}_{[d]}\right)\nonumber\\
    &\quad=\mathbb{E}_{\mathbf{T}_{[d]}}\left[\sum_{\ell\in[d]} T_{\ell}\right]\log\left(\text{Volume}\left(\mathcal{B}_m(c_1B_1+\nu)\right)\right)-\mathbb{E}_{\mathbf{T}_{[d]}}\left[\sum_{\ell\in[d]} T_{\ell}\right]\log\left(\text{Volume}\left(\mathcal{B}_m(\nu)\right)\right)+H\left(\mathbf{T}_{[d]}\right)\nonumber\\
    &\quad=m\mathbb{E}_{\mathbf{T}_{[d]}}\left[\sum_{\ell\in[d]} T_{\ell}\right]\log\left(\frac{c_1B+\nu}{\nu}\right)+H\left(\mathbf{T}_{[d]}\right)\label{Bound_MI_Binary_3}
\end{align}
where equations~\eqref{Bound_MI_Binary_1} and \eqref{Bound_MI_Binary_2} use the data-processing inequality.

It remains to upper-bound $H\left(\mathbf{T}_{[d]}\right)$. Denote $N =\min\left(d, \left(\frac{4C\rho}{\theta}\right)^2\right)$. The vector $\mathbf{T}_{[d]}$ can have at most $N$ entries equal to one. Hence,

\begin{itemize}

\item If $N<d/2$, the number of possible choices for
$\mathbf{T}_{[d]}$ is at most
\begin{align}
\left|\mathbf{T}_{[d]}\right|
&\triangleq
\binom{d}{0}+\binom{d}{1}+\cdots+\binom{d}{N}
\nonumber\\
&\leq
\exp\left(d\,h_b(N/d)\right).
\nonumber
\end{align}

Since
\begin{align}
N
=
\min\left\{
d,
\left(\frac{4C\rho}{\theta}\right)^2
\right\},
\nonumber
\end{align}
and $N<d/2$, we have
\begin{align}
N
=
\left(\frac{4C\rho}{\theta}\right)^2.
\nonumber
\end{align}
Hence,
\begin{align}
H\left(\mathbf{T}_{[d]}\right)
&\leq
\log\left(\left|\mathbf{T}_{[d]}\right|\right)
\nonumber\\
&\leq
d\,
h_b\left(
\frac{1}{d}
\left(\frac{4C\rho}{\theta}\right)^2
\right)
\nonumber\\
&=
\left(\frac{4C\rho}{\theta}\right)^2
\left[
\frac{d}{
\left(\frac{4C\rho}{\theta}\right)^2}
h_b\left(
\frac{
\left(\frac{4C\rho}{\theta}\right)^2
}{d}
\right)
\right]
\nonumber\\
&\leq
\left(\frac{4C\rho}{\theta}\right)^2
\left[
1+
\log\left(
\frac{d}{
\left(\frac{4C\rho}{\theta}\right)^2}
\right)
\right].
\nonumber
\end{align}

where we used
\begin{align}
x h_b(1/x)
&=
\log(x)
-
x\left(1-\frac{1}{x}\right)
\log\left(1-\frac{1}{x}\right)
\nonumber\\
&\leq
\log(x)
-
x\left(1-\frac{1}{x}\right)
\left(
1-\frac{1}{1-\frac{1}{x}}
\right)
\nonumber\\
&\leq
\log(x)+1,
\nonumber
\end{align}
by exploiting the inequality
$\log(u)\geq 1-1/u$.

\item If $N\geq d/2$, then
\begin{align}
H\left(\mathbf{T}_{[d]}\right)
\leq d
\leq
2N
\leq
2\left(\frac{4C\rho}{\theta}\right)^2.
\nonumber
\end{align}

Therefore, by recalling equation~\eqref{Bound_MI_Binary_3}, we have
\begin{align}
\ref{Bound_MI_Binary_3}
&\leq
m
\min\left\{
d,
\left(\frac{4C\rho}{\theta}\right)^2
\right\}
\log\left(
\frac{c_1C+\nu}{\nu}
\right)
\nonumber\\
&\quad+
\mathcal{O}\left(
\left(\frac{4C\rho}{\theta}\right)^2
\left[
1+
\log\left(
\frac{d}{
\left(\frac{4C\rho}{\theta}\right)^2}
\right)
\right]
\right).
\nonumber
\end{align}

This completes the proof.

\end{itemize}








 \subsection{Proof of Theorem \ref{Theorem_Multi_round_SANTP}}\label{Proof_Theorem_Multi_round_SANTP}

The proof of Theorem \ref{Theorem_Multi_round_SANTP} follows the same lossy approach as the proof of Theorem \ref{Theorem_Multi_round_LNTP}.

We begin by defining the parameters
\begin{align}
m_3&=\left\lceil 448\left(\frac{C}{\theta}\right)^2\log\left(d NT\right)\right\rceil
\label{Paramaters_dimension_multi_SANTP_round_1}\\
m_4&=\left\lceil 448\left(\frac{C^2\sqrt{\rho}}{\theta}\right)^2
\log\left(\rho d NT\right)\right\rceil,\nonumber\\
c_{4,1}=c_{3,1}
&=\sqrt{\frac{\theta^2}{C^2}+1},
\quad
\nu_{4}=\nu_{3}=\frac{1}{2c_{3,1}},
\nonumber\\
c_{4,2}=c_{3,2}
&=\sqrt{\frac{\theta^2}{C^2}+1}.
\label{Paramaters_dimension_multi_SANTP_round_3}
\end{align}

These parameters will be used throughout the remainder of the proof.

We use the extension of Theorem~\ref{lossy_inexpectation_bound_federated} to derive an upper bound on the rate--distortion term $R_{\mathcal{D}}(\cdot)$. Since the attention mechanism depends on the product of the transpose of the Key matrix and the Query matrix, i.e.,
$M^{\top}_{2}\!\!\cdot M_{1}$, as well as the matrix $M_{3}$, we introduce the matrix $M_{4}=M^{\top}_{2} M_{1}$.
Accordingly, we consider the equivalent parameterization $W=\left(M_{3},M_{4}\right)$
for defining the attention problem. In this setting, the agent's model is defined as
$W=\left(M_{3}, M_{4}\right)$.

Therefore, we can rewrite $g_{M_{[2]}}(\mathbf{z})$ in Theorem~\ref{Theorem_Multi_round_SANTP} as follows:
\begin{align}
    g_{M_{4}}(\mathbf{z})\coloneqq g_{M_{[2]}}(\mathbf{z})
    &\coloneqq\sigma\left(\frac{1}{\sqrt{\rho}}\big\langle M_{1}
    \mathcal{E}(z_{t-1}),M_{2}\mathcal{E}(z_{t-j})\big\rangle\right)_{j=1}^{\rho}\nonumber\\
    &=\sigma\left(\frac{1}{\sqrt{\rho}}\big\langle M^{\top}_{2}M_{1}
    \mathcal{E}(z_{t-1}),\mathcal{E}(z_{t-j})\big\rangle\right)_{j=1}^{\rho}\nonumber\\
    &=\sigma\left(\frac{1}{\sqrt{\rho}}\big\langle M_{4}
    \mathcal{E}(z_{t-1}),\mathcal{E}(z_{t-j})\big\rangle\right)_{j=1}^{\rho}\nonumber
\end{align}

Let $C_1\left({w},t,{\mathbf{z}}\right)$ and $C_2\left({w},t,\mathbf{z}\right)$ be defined as in \eqref{defition_maximal_ratio_1} and \eqref{defition_maximal_ratio_2}, respectively. These definitions imply that, given the model $w$, we have
\begin{align}
    &p_{w}\left(z_{t}\Big|z_{t-1},\cdots,z_{t-\rho}\right)=\frac{\exp\left[\left\langle \mathcal{E}\left(z_{t}\right),M_{3}\sum_{j=1}^{\rho}\left(g_{M_4}\left(\mathbf{z}\right)\right)_j\cdot\mathcal{E}\left(z_{t-j}\right)\right\rangle\right]}{\sum_{\overline{z}_{t}}\exp\left[\left\langle \mathcal{E}\left(\overline{z}_{t}\right),M_{3}\sum_{j=1}^{\rho}\left(g_{M_{4}}\left(\mathbf{z}\right)\right)_j\cdot\mathcal{E}\left(z_{t-j}\right)\right\rangle\right]},\nonumber
\end{align}
and
\begin{align}
    C_{1}(w,t,{\mathbf{z}})
    &=\frac{\exp\left[\left\langle \mathcal{E}\left(z_{t}\right),M_{3}\sum_{j=1}^{\rho}\left(g_{M_{4}}\left(\mathbf{z}\right)\right)_j\cdot\mathcal{E}\left(z_{t-j}\right)\right\rangle\right]}{\sum_{\overline{z}_{t}}\exp\left[\left\langle \mathcal{E}\left(\overline{z}_{t}\right),M_{3}\sum_{j=1}^{\rho}\left(g_{M_{4}}\left(\mathbf{z}\right)\right)_j\cdot\mathcal{E}\left(z_{t-j}\right)\right\rangle\right]}\nonumber\\
    C_2(w,t,{\mathbf{z}})
    &=\frac{\max_{z'_{t}\neq z_{t}}\exp\left[\left\langle \mathcal{E}\left(z'_{t}\right),M_{3}\sum_{j=1}^{\rho}\left(g_{M_{4}}\left(\mathbf{z}\right)\right)_j\cdot\mathcal{E}\left(z_{t-j}\right)\right\rangle\right]}{\sum_{\overline{z}_{t}}\exp\left[\left\langle \mathcal{E}\left(\overline{z}_{t}\right),M_{3}\sum_{j=1}^{\rho}\left(g_{M_{4}}\left(\mathbf{z}\right)\right)_j\cdot\mathcal{E}\left(z_{t-j}\right)\right\rangle\right]}\nonumber
\end{align}
which implies that
\begin{align}
    \log\frac{C_1\left(w,t,{\mathbf{z}}\right)}{C_2\left(w,t,{\mathbf{z}}\right)}&=\left[\left\langle \mathcal{E}\left(z_{t}\right),M_{3}\sum_{j=1}^{\rho}\left(g_{M_{4}}\left(\mathbf{z}\right)\right)_j\cdot\mathcal{E}\left(z_{t-j}\right)\right\rangle\right]\nonumber\\
    &\quad-\max_{z'_{t}\neq z_{t}}\left[\left\langle \mathcal{E}\left(z'_{t}\right),M_{3}\sum_{j=1}^{\rho}\left(g_{M_{4}}\left(\mathbf{z}\right)\right)_j\cdot\mathcal{E}\left(z_{t-j}\right)\right\rangle\right]\nonumber
\end{align}

For each model $i\in\{3,4\}$, we additionally define the variables $\hat{M}_{i}$, with $\mathsf{A}_i$ independent for each $i\in\{3,4\}$.

\begin{align}
&\hat{W}_{1}=\left({M}_{3},\hat{M}_{4}\right),\nonumber\\
&\hat{{W}}=\left(\hat{M}_{3},\hat{M}_{4}\right).\nonumber
\end{align}

Using the same approach as in \eqref{lossy_pop_emp_difference_1}, we have
\begin{align}
&\mathbb{E}_{\mathbf{S},{W},\hat{W}}\left[\operatorname{gen}_{\theta}(\mathbf{S},{W}) - \operatorname{gen}_{0}(\mathbf{S},\hat{W})\right]\nonumber\\
&\leq
    \mathbb{E}_{\mathbf{S},{W},\hat{W}}\Bigg[\frac{1}{T}\sum_{t \in [T]}
        \mathbf{1}\Bigg\{\Bigg|\log\frac{C_1({W},t,\mathbf{z})}{C_2({W},t,\mathbf{z})}-\log\frac{C_1(\hat{W},t,\mathbf{z})}{C_2(\hat{W},t,\mathbf{z})}\Bigg|>\frac{\theta}{2}\Bigg\}\nonumber\\
    &\qquad+\mathbb{E}_{\mathbf{S},\hat{W},W}\Bigg[\frac{1}{N}\!\!\!\sum_{n\in[N]}\!\Bigg[\frac{1}{T}\!\!\!\sum_{t \in [T]}
    \mathbf{1}\Bigg\{\Bigg|\log\frac{C_{1,n}(\hat{W},t,\mathbf{z}^{(n)})}{C_{2,n}(\hat{W},t,\mathbf{z}^{(n)})}-\log\frac{C_{1,n}(W,t,\mathbf{z}^{(n)})}{C_{2,n}({W},t,\mathbf{z}^{(n)})}\Bigg|>\frac{\theta}{2}\Bigg\}\nonumber\\
    &\leq\epsilon_{\mathsf{A}_{[3]},1}+\epsilon_{\mathsf{A}_{[3]},2}\nonumber\\
    &=\epsilon_{\mathsf{A}_{[3]}}\nonumber,
\end{align}

Applying the triangle inequality implies that
\begin{align}
    &\epsilon_{\mathsf{A}_{[3]},1}\leq\mathbb{E}_{\mathbf{S},\hat{W},W}\Bigg[\frac{1}{T}\sum_{t \in [T]}
        \mathbf{1}\Bigg\{\Bigg|\log\frac{C_1(W,t,\mathbf{z})}{C_2({W},t,\mathbf{z})}-\log\frac{C_1(\hat{W}_{1},t,\mathbf{z})}{C_2(\hat{W}_{1},t,\mathbf{z})}\Bigg|>\frac{\theta}{4}\Bigg\}\nonumber\\
        &\quad\qquad+\mathbb{E}_{\mathbf{S},\hat{W},W}\Bigg[\frac{1}{T}\sum_{t \in [T]}
        \mathbf{1}\Bigg\{\Bigg|\log\frac{C_1(\hat{W}_{1},t,\mathbf{z})}{C_2(\hat{W}_{1},t,\mathbf{z})}-\log\frac{C_1(\hat{W},t,\mathbf{z})}{C_2(\hat{W},t,\mathbf{z})}\Bigg|>\frac{\theta}{4}\Bigg\}\nonumber\\
        &\qquad=\circled{I}^1+\circled{II}^1\label{Definition_summarizing_losses_1}
\end{align}
and
\begin{align}
\epsilon_{\mathsf{A}_{[3]},2}&\leq\mathbb{E}_{\mathbf{S},\hat{W},W}\Bigg[\frac{1}{N}\sum_{n\in[N]}\Bigg[\frac{1}{T}\sum_{t \in [T]}
    \mathbf{1}\Bigg\{\Bigg|\log\frac{C_{1,n}(W,t,\mathbf{z}^{(n)})}{C_{2,n}({W},t,\mathbf{z}^{(n)})}-\log\frac{C_{1,n}(\hat{W}_{1},t,\mathbf{z}^{(n)})}{C_{2,n}(\hat{W}_{1},t,\mathbf{z}^{(n)})}\Bigg|>\frac{\theta}{4}\Bigg\}\nonumber\\
    &\quad+\mathbb{E}_{\mathbf{S},\hat{W},W}\Bigg[\frac{1}{N}\sum_{n\in[N]}\Bigg[\frac{1}{T}\sum_{t \in [T]}
    \mathbf{1}\Bigg\{\Bigg|\log\frac{C_{1,n}(\hat{W}_{1},t,\mathbf{z}^{(n)})}{C_{2,n}(\hat{W}_{1},t,\mathbf{z}^{(n)})}-\log\frac{C_{1,n}(\hat{W},t,\mathbf{z}^{(n)})}{C_{2,n}(\hat{W},t,\mathbf{z}^{(n)})}\Bigg|>\frac{\theta}{4}\Bigg\}\nonumber\\
    &=\circled{I}^2+\circled{II}^2\label{Definition_summarizing_losses_2}
\end{align}

We then show that
\begin{align}
    \mathbb{E}_{\mathsf{A}_{[3]}}\left[\epsilon_{\mathsf{A}_{[3]}}\right]\leq\mathcal{O}\left(\frac{1}{NT}\right).\nonumber
\end{align}

Similarly to \eqref{Auxiliary_vector_definition_LNTP}, for any
$W=\left(M_{3},M_{4}\right)$ and $i\in\{3,4\}$, we define
\begin{align}
    V\left[\ell,M_{i}\right]&\coloneqq\left(M_{i}[\ell,1],\cdots,M_{i}[\ell,d]\right)\in\mathbb{R}^{d}\nonumber\\
    V\left[M_{i},\ell\right]&\coloneqq\left(M_{i}[1,\ell],\cdots,M_{i}[d,\ell]\right)\in\mathbb{R}^{d}\label{Denote_row_column_def}
\end{align}

\begin{itemize}

\item\textbf{Bounding $\circled{I}^1$ and $\circled{I}^2$:} We have
\begin{align}
    \circled{I}^1&=\mathbb{E}_{\mathbf{S},\hat{W},W}\Bigg[\frac{1}{T}\sum_{t \in [T]}
        \mathbf{1}\Bigg\{\Bigg|\log\frac{C_1({W},t,\mathbf{z})}{C_2({W},t,\mathbf{z})}-\log\frac{C_1(\hat{W}_{1},t,\mathbf{z})}{C_2(\hat{W}_{1},t,\mathbf{z})}\Bigg|>\frac{\theta}{4}\Bigg\}\nonumber
\end{align}
which implies that
\begin{align}
&D^{(1)}_{t-1:t-\rho}\coloneqq\left|\log\frac{C_1({W},t,\mathbf{z})}{C_2({W},t,\mathbf{z})}-\log\frac{C_1(\hat{W}_{1},t,\mathbf{z})}{C_2(\hat{W}_{1},t,\mathbf{z})}\right|\nonumber\\
        &\leq\Bigg|\left\langle \mathcal{E}\left(z_{t}\right),\sum_{j=1}^{\rho}\left(g_{M_{4}}\left(\mathbf{z}\right)\right)_j\cdot{M}_{3}\cdot\mathcal{E}\left(z_{t-j}\right)\right\rangle -\max_{z'_{t}\neq z_{t}}\left[\left\langle \mathcal{E}\left(z'_{t}\right),\sum_{j=1}^{\rho}\left(g_{M_{4}}\left(\mathbf{z}\right)\right)_j\cdot{M}_{3}\cdot\mathcal{E}\left(z_{t-j}\right)\right\rangle\right]\nonumber\\
        &\qquad-\left\langle \mathcal{E}\left(z_{t}\right),\sum_{j=1}^{\rho}\left(g_{\hat{M}_{4}}\left(\mathbf{z}\right)\right)_j\!\!\!\cdot{M}_{3}\cdot\mathcal{E}\left(z_{t-j}\right)\right\rangle +\max_{z'_{t}\neq z_{t}}\left[\left\langle \mathcal{E}\left(z'_{t}\right),\sum_{j=1}^{\rho}\left(g_{\hat{M}_{4}}\left(\mathbf{z}\right)\right)_j\!\!\!\cdot{M}_{3}\cdot\mathcal{E}\left(z_{t-j}\right)\right\rangle\right]\Bigg|\nonumber\\
        &\leq\Bigg|\left\langle \mathcal{E}\left(z_{t}\right),\sum_{j=1}^{\rho}\left(g_{{M}_{4}}\left(\mathbf{z}\right)-g_{\hat{M}_{4}}\left(\mathbf{z}\right)\right)_j\!\!\!\cdot{M}_{3}\cdot\mathcal{E}\left(z_{t-j}\right)\right\rangle\Bigg|\\
        &\qquad+\max_{z'_{t}\neq z_{t}}\Bigg|\left[\left\langle \mathcal{E}\left(z'_{t}\right),\sum_{j=1}^{\rho}\left(g_{M_{4}}\left(\mathbf{z}\right)-g_{\hat{M}_{4}}\left(\mathbf{z}\right)\right)_j\!\!\!\cdot{M}_{3}\cdot\mathcal{E}\left(z_{t-j}\right)\right\rangle\right]\Bigg|\nonumber\\
        &\leq 2\max_{\overline{z}_t}\left|\left\langle \mathcal{E}\left(\overline{z}_{t}\right),\sum_{j=1}^{\rho}\left(g_{M_{4}}\left(\mathbf{z}\right)-g_{\hat{M}_{4}}\left(\mathbf{z}\right)\right)_j\!\!\!\cdot{M}_{3}\cdot\mathcal{E}\left(z_{t-j}\right)\right\rangle\right|\label{Difference_softmaxes_loss}
\end{align}

Define
\begin{align}
\vec{\Delta} g(\mathbf{z})&=g_{M_{4}}\left(\mathbf{z}\right)-g_{\hat{M}_{4}}\left(\mathbf{z}\right),\nonumber
\end{align}
and hence
\begin{align}
\eqref{Difference_softmaxes_loss}&=2\max_{\overline{z}_{t}}\left|\left\langle \mathcal{E}\left(\overline{z}_{t}\right),\sum_{j=1}^{\rho}\left(\vec{\Delta} g(\mathbf{z})\right)_j\cdot{M}_{3}\cdot\mathcal{E}\left(z_{t-j}\right)\right\rangle\right|\nonumber
\end{align}
Therefore, recalling $\circled{I}^1$ and $\circled{I}^2$ in \eqref{Definition_summarizing_losses_1} and \eqref{Definition_summarizing_losses_2}, respectively, we obtain
\begin{align}
    \circled{I}^1&=\frac{1}{T}\sum_{t=1}^{T}\mathbb{P}\left(D^{(1)}_{t-1:t-\rho}>\frac{\theta}{4}\right)\nonumber\\
    &\leq \frac{1}{T}\sum_{t=1}^{T}\mathbb{P}\left(2\max_{\overline{z}_{t}}\left|\left\langle \mathcal{E}\left(\overline{z}_{t}\right),\sum_{j=1}^{\rho}\left(\vec{\Delta} g(\mathbf{z})\right)_j\!\!\!\cdot {M}_{3}\cdot\mathcal{E}\left(z_{t-j}\right)\right\rangle\right|\geq\frac{\theta}{4}\right)\label{boudning_probability_loss_1}\\
    &=\frac{1}{T}\sum_{t=1}^{T}\mathbb{P}\left(\max_{\mathcal{E}\left(\overline{z}_{t}\right)}\left|\left\langle \mathcal{E}\left(\overline{z}_{t}\right),\sum_{j=1}^{\rho}\left(\vec{\Delta} g(\mathbf{z})\right)_j\!\!\!\cdot{M}_{3}\cdot\mathcal{E}\left(z_{t-j}\right)\right\rangle\right|\geq\frac{\theta}{8}\right)\nonumber\\
    &\leq\frac{1}{T}\sum_{t=1}^{T}\sum_{\mathcal{E}\left(\overline{z}_{t}\right)}\mathbb{P}\left(\left|\left\langle \mathcal{E}\left(\overline{z}_{t}\right),\sum_{j=1}^{\rho}\left(\vec{\Delta} g(\mathbf{z})\right)_j\!\!\!\cdot{M}_{3}\cdot\mathcal{E}\left(z_{t-j}\right)\right\rangle\right|\geq\frac{\theta}{8}\right)\label{boudning_probability_loss_2}\\
    &=\frac{1}{T}\sum_{t=1}^{T}\sum_{\mathcal{E}\left(\overline{z}_{t}\right)}\mathbb{P}\left(\left|\left\langle{M_{3}}^{\top}\cdot \mathcal{E}\left(\overline{z}_{t}\right),\sum_{j=1}^{\rho}\left(\vec{\Delta} g(\mathbf{z})\right)_j\cdot\mathcal{E}\left(z_{t-j}\right)\right\rangle\right|\geq\frac{\theta}{8}\right)\\
    &=\frac{1}{T}\sum_{t=1}^{T}\sum_{\ell\in[d]}\mathbb{P}\left(\left|\left\langle V\left[{M_{3}}^{\top},\ell\right],\sum_{j=1}^{\rho}\left(\vec{\Delta} g(\mathbf{z})\right)_j\cdot\mathcal{E}\left(z_{t-j}\right)\right\rangle\right|\geq\frac{\theta}{8}\right)\label{boudning_probability_loss_3}
\end{align}

where
\begin{itemize}
    \item Equation~\eqref{boudning_probability_loss_1} follows from the upper bound on $D^{(1)}_{t-1:t-\rho}$ derived in \eqref{Difference_softmaxes_loss}.

    \item Equation~\eqref{boudning_probability_loss_2} follows from the union bound.
\end{itemize}

For $i=3$, we consider the columns of $M_3^{\top}$, whereas, for $i=4$,
we consider the columns of $M_4$. In particular,
\begin{align}
\sum_{\ell\in[d]}
\left\|V\left[M_3^{\top},\ell\right]\right\|^2
&=
\left\|M_3\right\|_F^2,
\nonumber\\
\sum_{\ell\in[d]}
\left\|V\left[M_4,\ell\right]\right\|^2
&=
\left\|M_4\right\|_F^2.
\nonumber
\end{align}

For any $\ell\in[d]$, define
\begin{align}
\mathcal{J}_{\ell,3}
&\coloneqq
\mathbf{1}\left\{
\left\|
\mathsf{A}_3V[M_3^{\top},\ell]
\right\|
\geq
c_{3,1}
\left\|
V[M_3^{\top},\ell]
\right\|
\right\},
\nonumber\\
\mathcal{J}_{\ell,4}
&\coloneqq
\mathbf{1}\left\{
\left\|
\mathsf{A}_4V[M_4,\ell]
\right\|
\geq
c_{4,1}
\left\|
V[M_4,\ell]
\right\|
\right\}.
\nonumber
\end{align}

For $i\in\{3,4\}$, define
\begin{align}
Q_3 &\coloneqq M_3^{\top},
\qquad
Q_4 \coloneqq M_4.
\nonumber
\end{align}
Since the Frobenius norm is invariant under transposition, we have
\begin{align}
\|Q_i\|_F\leq C,
\qquad i\in\{3,4\}.
\nonumber
\end{align}

To define the compressed model, for each $i\in\{3,4\}$, let
\begin{align}
[d]_{M_i}
\coloneqq
\left\{
\ell\in[d]:
\left\|V[Q_i,\ell]\right\|
\geq
\frac{\theta}{8C\sqrt{\rho}}
\right\}.
\nonumber
\end{align}

First, let
\begin{align}
U_{\ell,i}
=
\begin{cases}
\mathsf{A}_i V[Q_i,\ell],
&
\text{if }
\left\|
\mathsf{A}_i V[Q_i,\ell]
\right\|
\leq
c_{i,1}
\left\|
V[Q_i,\ell]
\right\|,
\\
\mathbf{0}_{m_i},
&
\text{otherwise}.
\end{cases}
\nonumber
\end{align}

Then, define
\begin{align}
\hat{M}_{\ell,i}
=
\begin{cases}
\mathbf{0}_{m_i},
&
\text{if } \ell\notin[d]_{M_i},
\\
U_{\ell,i}+N_{\ell,i},
&
\text{otherwise},
\end{cases}
\nonumber
\end{align}
where
\begin{align}
N_{\ell,i}
\sim
\operatorname{Uniform}
\left(
\mathcal{B}_{m_i}(0,\nu_i)
\right).
\nonumber
\end{align}

Define the reconstructed matrices $\hat{Q}_i$ by
\begin{align}
V[\hat{Q}_i,\ell]
=
\mathsf{A}_i^\top
\hat{M}_{\ell,i},
\qquad
i\in\{3,4\}.
\nonumber
\end{align}
Finally, set
\begin{align}
\hat{M}_3
&\coloneqq
\hat{Q}_3^{\top},
\qquad
\hat{M}_4
\coloneqq
\hat{Q}_4.
\nonumber
\end{align}

Note that, for all $\ell\in[d]$ and $t\in[T]$, we have
\begin{align}
    &\left|\left\langle V\left[{{M}_{3}}^{\top},\ell\right],\sum_{j=1}^{\rho}\left(\vec{\Delta} g(\mathbf{z})\right)_j\cdot\mathcal{E}\left(z_{t-j}\right)\right\rangle\right|\nonumber\\
    &\quad\leq  \left\|V\left[{{M}_{3}}^{\top},\ell\right]\right\|_{\infty} \times \sum_{j=1}^{\rho}\left|\left(\vec{\Delta} g(\mathbf{z})\right)_j\right|\label{Diffference_lipchittc_querry_loss_3}\\
    &\quad =\left\|V\left[{{M}_{3}}^{\top},\ell\right]\right\|_{\infty} \times \left\|\vec{\Delta} g(\mathbf{z})\right\|_{1}\nonumber\\
    &\quad\leq\left\|V\left[{{M}_{3}}^{\top},\ell\right]\right\|_{\infty}\times\sum_{j=1}^{\rho}\left| g_{{M}_{4}}\left(\mathbf{z}\right)-g_{\hat{M}_{4}}\left(\mathbf{z}\right)\right|\nonumber\\
    &\quad\leq\left\|V\left[{M}_{3}^{\top},\ell\right]\right\|_{\infty} \times \frac{1}{\sqrt{\rho}}\sum_{j=1}^{\rho}\left|\left\langle \left({M}_{4}-\hat{M}_{4}\right)\cdot\mathcal{E}(z_{t-1}),\mathcal{E}({z}_{t-j})\right\rangle\right|\label{Diffference_lipchittc_querry_loss_6}\\
    &\quad\leq C\sqrt{\rho} \times\max_{\substack{j\in[\rho]\\z_{t-1} \\ z_{t-j}}}\left|\left\langle \left({M}_{4}-\hat{M}_{4}\right)\cdot\mathcal{E}(z_{t-1}),\mathcal{E}({z}_{t-j})\right\rangle\right|\nonumber
\end{align}

where
\begin{itemize}
     \item Equation~\eqref{Diffference_lipchittc_querry_loss_3} follows from H\"older's inequality:
\begin{align}
&\left|\left\langle V[M_3^\top,\ell],\sum_{j=1}^{\rho}\left(\vec{\Delta} g(\mathbf{z})\right)_j\cdot\mathcal{E}\left(z_{t-j}\right)\right\rangle\right|\leq  \left\|V[M_3^\top,\ell]\right\|_{\infty} \times \sum_{j=1}^{\rho}\left|\left(\vec{\Delta} g(\mathbf{z})\right)_j\right|.\nonumber
\end{align}

 \item Equation~\eqref{Diffference_lipchittc_querry_loss_6} follows from the 1-Lipschitz continuity of the softmax function with respect to the $\ell_1$ norm, i.e.,
\begin{align}
    \|\sigma(\mathbf{x})-\sigma(\mathbf{y})\|_1
    \le
    \|\mathbf{x}-\mathbf{y}\|_1,
    \qquad
    \forall\,\mathbf{x},\mathbf{y}\in\mathbb{R}^\rho.\nonumber
\end{align}

\item The last inequality follows from the assumption that $\left\|{M}_3\right\|\leq C$.
\end{itemize}

This implies that
\begin{align}
    \eqref{boudning_probability_loss_3}
    &\leq\sum_{\ell\in[d]}\mathbb{P}\!\left(\max_{\substack{j\in[\rho]\\z_{t-1}\\z_{t-j}}}
    \left|\left\langle
    \left({M}_{4}-\hat{M}_{4}\right)\cdot\mathcal{E}(z_{t-1}), \mathcal{E}(z_{t-j})
    \right\rangle\right|\geq\frac{\theta}{8C\sqrt{\rho}}\right)\label{Union_bound_type_bound_1}\\
    &\leq d\times\sum_{j=1}^{\rho}\sum_{\mathcal{E}(z'),\mathcal{E}(z'')}\mathbb{P}\!\left(\left|\left\langle\left({M}_{4}-\hat{M}_{4}\right)\cdot\mathcal{E}(z'), \mathcal{E}(z'')
    \right\rangle\right|\geq\frac{\theta}{8C\sqrt{\rho}}\right)\nonumber\\
    &= {\rho}d\times\!\!\sum_{\ell''=1}^{d}\sum_{\ell'=1}^{d}\mathbb{P}\left(\left|\left\langle V\left[\left({M}_{4}-\hat{M}_{4}\right),\ell'\right],V\left[\mathbf{I},\ell''\right]\right\rangle\right|\geq\frac{\theta}{8C\sqrt{\rho}}\right)\label{Union_bound_type_bound_2}\\
    &={\rho}d\times\sum_{\ell''=1}^{d}\sum_{\ell'\in[d]_{{M}_{4}}}\mathbb{P}\left(\left|
\left\langle
V\left[\left({M}_{4}-\hat{M}_{4}\right),\ell'\right],
V\left[\mathbf{I},\ell''\right]
\right\rangle
\right|\geq\frac{\theta}{8C\sqrt{\rho}}\right)  \label{Union_bound_type_bound_3}
\end{align}

where
\begin{itemize}
    \item The last inequality in \eqref{Union_bound_type_bound_1} follows from the union bound.

 \item Equation~\eqref{Union_bound_type_bound_2} follows by rewriting the notation according to the definitions in~\eqref{Denote_row_column_def}.

\item Equation~\eqref{Union_bound_type_bound_3} holds because, for any $(\ell',\ell'')$ with $\ell''\in[d]$ and $\ell'\notin [d]_{{M}_{4}}$, we have
\begin{align}
    \mathbb{P}\left(\left|V\left[\left({M}_{4}-\hat{M}_{4}\right),\ell'\right],V\left[\mathbf{I},\ell''\right]\right|\geq\frac{\theta}{8C\sqrt{\rho}}\right)=0.\nonumber
\end{align}
\end{itemize}

Hence, by rewriting \eqref{Union_bound_type_bound_3}, for $(\ell',\ell'')$ with $\ell''\in[d]$ and $\ell'\in [d]_{{M}_{4}}$, we have
\begin{align}
&\mathbb{P}\left(\left|\left\langle V\left[\left({M}_{4}-\hat{M}_{4}\right),\ell'\right],V\left[\mathbf{I},\ell''\right]\right\rangle\right|\geq\frac{\theta}{8C\sqrt{\rho}}\right)\nonumber\\
&\leq\mathbb{P}\left(\left|\left\langle V\left[{M}_{4},\ell'\right]-\mathsf{A}_4^{\top}\mathsf{A}_4V[M_4,\ell'],V\left[\mathbf{I},\ell''\right]\right\rangle\right|\geq\frac{\theta}{8C\sqrt{\rho}}\right)\nonumber\\
&\quad+\mathbb{P}\left(\left|\langle N_{\ell',4},\mathsf{A}_4V\left[\mathbf{I},\ell''\right]\rangle\right|\geq\frac{\theta}{8C\sqrt{\rho}}\Big|\left\|\mathsf{A}_4V\left[\mathbf{I},\ell''\right]\right\|\leq c_{4,2}\left\|V\left[\mathbf{I},\ell''\right]\right\|\right)\nonumber\\
&\quad+\mathbb{P}\left(\left\|\mathsf{A}_4V\left[\mathbf{I},\ell''\right]\right\|\geq c_{4,2}\left\|V\left[\mathbf{I},\ell''\right]\right\|\right)+\mathbb{P}\left(\mathcal{J}_{\ell,4}\right)\nonumber\\
&=\circled{T}_{1,1}+\circled{T}_{1,2}+\circled{T}_{1,3}+\circled{T}_{1,4},\nonumber
\end{align}

\begin{itemize}

\item\textbf{Bounding $\circled{T}_{1,1}$:}
By applying \cite[Lemma~8, Part~2]{gronlund2020near}, for any pair $(\ell',\ell'')$, we have
\begin{align}
&\mathbb{P}\Bigg(
\left|
\left\langle V\left[ M_{4},\ell'\right],V\left[\mathbf{I},\ell''\right]\right\rangle-
\left\langle \mathsf{A}_4V\left[ M_{4},\ell'\right],\mathsf{A}_4V\left[\mathbf{I},\ell''\right]\right\rangle\right|
\geq \frac{\theta}{8C\sqrt{\rho}}
\Bigg)\nonumber \\
&\qquad\leq 4\exp\!\left(
-\frac{m_4}{7}
\left(\frac{\theta}{8 C^2\sqrt{\rho}\,}\right)^2
\right)
\nonumber \\
&\qquad\leq \mathcal{O}\!\left(\frac{1}{\rho NTd^3}\right).
\nonumber
\end{align}

which implies that
\begin{align}
\circled{T}_{1,1} \leq \mathcal{O}\!\left(\frac{1}{\rho NTd^3}\right).
\nonumber
\end{align}

\item\textbf{Bounding $\circled{T}_{1,2}$:}

We have
\begin{align}
   &\mathbb{P}\left(\Big|\langle N_{\ell',4},\mathsf{A}_4V\left[\mathbf{I},\ell''\right]\rangle\Big|\geq\frac{\theta}{8C\sqrt{\rho}}\Big|\left\|\mathsf{A}_4V\left[\mathbf{I},\ell''\right]\right\|\leq c_{4,2}\left\|V\left[\mathbf{I},\ell''\right]\right\|\right)\nonumber\\
   &\qquad\leq\frac{2m_4\nu_4^{m_4}}{\sqrt{\pi}}e^{-\frac{(m_4+1)}{2}\left(\frac{\theta}{8c_{4,2}\nu_4 C\sqrt{\rho}}\right)^2}\label{Inequality_sphere_unif}\\
   &\qquad\leq\mathcal{O}\left(\frac{1}{\rho NTd^3}\right)\nonumber
\end{align}

where inequality~\eqref{Inequality_sphere_unif} corresponds to Part~iii of \cite[page~26]{sefidgaran2022distributed}.

\item \textbf{Bounding $\circled{T}_{1,3}$:}
For any pair $(\ell',\ell'')$ such that $\ell'\in[d]_{{M}_{4}}$ and $\ell''\in[d]$, we have
\begin{align}
&\mathbb{P}\left(\left\|\mathsf{A}_4V\left[\mathbf{I},\ell''\right]\right\|\geq c_{4,2}\left\|V\left[\mathbf{I},\ell''\right]\right\|\right)\nonumber\\
&\qquad\leq 2e^{-0.21m_4\left(c^2_{4,2}-1-2\log (c_{4,2}) \right)}\label{Bounding_T_{1,3}}\\
&\qquad\leq\mathcal{O}\left(\frac{1}{\rho NTd^3}\right)\nonumber
\end{align}

where
\begin{itemize}
    \item Equation~\eqref{Bounding_T_{1,3}} follows from
    \cite[Lemma~8, Part~1]{gronlund2020near}.
\end{itemize}

\item\textbf{Bounding $\circled{T}_{1,4}$:}
Following the same argument as in \eqref{Bounding_T_{1,3}}, we have
\begin{align}
&\mathbb{P}\left(\mathcal{J}_{\ell,4}\right)=\mathbb{P}\left(\left\|\mathsf{A}_4V[ M_{4},\ell]\right\|\geq c_{4,1}\left\|V\left[M_{4},\ell\right]\right\|\right)\nonumber\\
     &\qquad\leq 2e^{-0.21m_4\left(c^2_{4,1}-1-2\log (c_{4,1}) \right)}\nonumber\\
     &\qquad\leq\mathcal{O}\left(\frac{1}{\rho NTd^3}\right)\nonumber
\end{align}

Therefore, this implies that
\begin{align}
    \circled{I}^1,\circled{I}^2
    &\leq \eqref{Union_bound_type_bound_3}\leq \mathcal{O}\!\left(\frac{1}{NT}\right)\nonumber
\end{align}

This completes the proof of the bounds on $\circled{I}^1$ and $\circled{I}^2$ in \eqref{Definition_summarizing_losses_1} and \eqref{Definition_summarizing_losses_2}.

\item\textbf{Bounding $\circled{II}^1$ and $\circled{II}^2$:}
To bound the terms $\circled{II}^1$ and $\circled{II}^2$, we have
\begin{align}
    \circled{II}^1=\mathbb{E}_{S,\hat{W},\hat{W}_1}\Bigg[\frac{1}{T}\sum_{t \in [T]}
        \mathbf{1}\left\{\left|\log\frac{C_1(\hat{W}_{1},t,\mathbf{z})}{C_2(\hat{W}_{1},t,\mathbf{z})}-\log\frac{C_1(\hat{W},t,\mathbf{z})}{C_2(\hat{W},t,\mathbf{z})}\right|>\frac{\theta}{4}\right\},\nonumber
\end{align}

By considering
\begin{align}
    &D^{(2)}_{t-1:t-\rho}
    \coloneqq
    \left|
    \log\frac{C_1(\hat{W}_{1},t,\mathbf{z})}
    {C_2(\hat{W}_{1},t,\mathbf{z})}
    -
    \log\frac{C_1(\hat{W},t,\mathbf{z})}
    {C_2(\hat{W},t,\mathbf{z})}
    \right|,
    \nonumber
\end{align}
and following the same argument as in \eqref{Difference_softmaxes_loss}, we have
\begin{align}
    &D^{(2)}_{t-1:t-\rho}
    \leq
    2\max_{\overline{z}_{t}}
    \left|
    \left\langle
    \mathcal{E}\left(\overline{z}_{t}\right),
    \sum_{j=1}^{\rho}
    \left(g_{\hat{M}_{4}}
    \left(\mathbf{z}\right)\right)_j
    \left({M}_{3}
    -
    \hat{M}_{3}\right)
    \mathcal{E}\left(z_{t-j}\right)
    \right\rangle
    \right|,
    \label{Difference_softmaxes_loss_3}
\end{align}
which implies that
\begin{align}
    \!\!\!\!\!\!\!\!\!\!\!\!\!\!\!\!\!\!\!\!\!\!\!\!\!\!\!\!\circled{II}^1
    &=
    \frac{1}{T}\sum_{t=1}^{T}
    \mathbb{P}\left(D^{(2)}_{t-1:t-\rho}>\frac{\theta}{4}\right)
    \nonumber\\
    &\!\!\!\!\!\!\!\!\!\!\!\!\!\!\!\!\!\!\!\!\!\!\!\!\!\!\!\!\leq
    \frac{1}{T}\sum_{t=1}^{T}\sum_{\ell\in[d]}
    \mathbb{P}\left(
    \left|
    \left\langle
    V\left[
    \left({M_{3}}-{\hat{M}}_{3}\right)^{\top},\ell
    \right],
    \sum_{j=1}^{\rho}
    \left(g_{\hat{M}_{4}}(\mathbf{z})\right)_j
    \mathcal{E}(z_{t-j})
    \right\rangle
    \right|
    \geq \frac{\theta}{8}
    \right)
    \nonumber\\
    &\!\!\!\!\!\!\!\!\!\!\!\!\!\!\!\!\!\!\!\!\!\!\!\!\!\!\!\!=
    \frac{1}{T}\sum_{t=1}^{T}
    \sum_{\ell\in[d]_{{M}_{3}}}
    \mathbb{P}\left(
    \left|
    \left\langle
    V\left[
    \left({{M}_{3}}-{\hat{M}}_{3}\right)^{\top},\ell
    \right],
    \sum_{j=1}^{\rho}
    \left(g_{\hat{M}_{4}}(\mathbf{z})\right)_j
    \mathcal{E}(z_{t-j})
    \right\rangle
    \right|
    \geq \frac{\theta}{8}
    \right).\nonumber
\end{align}

where the last equality follows from the fact that, for any
$\ell\notin[d]_{{M}_{3}}$,
\begin{align}
    \mathbb{P}\left(
    \left|
    \left\langle
    V\left[
    \left({{M}_{3}}-{\hat{M}}_{3}\right)^{\top},\ell
    \right],
    \sum_{j=1}^{\rho}
    \left(g_{\hat{M}_{4}}(\mathbf{z})\right)_j
    \mathcal{E}(z_{t-j})
    \right\rangle
    \right|
    \geq\frac{\theta}{8}
    \right)
    =0.\nonumber
\end{align}

This follows because
\begin{align}
&\left|
\left\langle
V\left[
\left({{M}_{3}}-{\hat{M}}_{3}\right)^{\top},\ell
\right],
\sum_{j=1}^{\rho}
\left(g_{\hat{M}_{4}}(\mathbf{z})\right)_j
\mathcal{E}(z_{t-j})
\right\rangle
\right|
\nonumber\\
&\leq
\left\|
V\left[
\left({{M}_{3}}-{\hat{M}}_{3}\right)^{\top},\ell
\right]
\right\|_{\infty}
\left\|
\sum_{j=1}^{\rho}
\left(g_{\hat{M}_{4}}(\mathbf{z})\right)_j
\mathcal{E}(z_{t-j})
\right\|_1
\label{Holder_upperbound_1}\\
&\leq
\left\|
V\left[
\left({{M}_{3}}-{\hat{M}}_{3}\right)^{\top},\ell
\right]
\right\|_{\infty}
\label{Holder_upperbound_2}\\
&\leq\left\|V\left[
\left({{M}_{3}}-{\hat{M}}_{3}\right)^{\top},\ell
\right]\right\|\nonumber\\
&\leq \frac{\theta}{8}\nonumber
\end{align}

where
\begin{itemize}
    \item The inequality in \eqref{Holder_upperbound_1} follows from H\"older's inequality.

    \item Equation~\eqref{Holder_upperbound_2} follows from the fact that
    $\left(g_{\hat{M}_{4}}(\mathbf{z})\right)_j$
    are elements of probability distributions, which implies
    \begin{align}
    \left\|
    \sum_{j=1}^{\rho}
    \left(g_{\hat{M}_{4}}(\mathbf{z})\right)_j
    \mathcal{E}(z_{t-j})
    \right\|_1
    \leq 1.\nonumber
    \end{align}
\end{itemize}

Therefore, it suffices to show that, uniformly over any $t\in[T]$ and
$\ell\in[d]_{{M}_{3}}$,
\begin{align}
&\mathbb{P}\left(\left|\left\langle V\left[
\left({{M}_{3}}-{\hat{{M}}}_{3}\right)^{\top},\ell
\right],
\sum_{j=1}^{\rho}\left(g_{\hat{{M}}_{4}}(\mathbf{z})\right)_j
\mathcal{E}(z_{t-j})\right\rangle\right|\geq\frac{\theta}{8}\right)\leq\mathcal{O}\left(\frac{1}{dNT}\right).\nonumber
\end{align}
Define
\begin{align}
    \vec{g}_{\hat{{M}}_{4}}
    \coloneqq
    \sum_{j=1}^{\rho}
    \left(g_{\hat{M}_{4}}(\mathbf{z})\right)_j
    \mathcal{E}(z_{t-j}),\label{Denoting_Vector_attention}
\end{align}
Then, for any $t\in[T]$ and $\ell\in[d]$, we have
\begin{align}
&\mathbb{P}\left(\left|\left\langle V\left[
\left({{M}_{3}}-{\hat{{M}}}_{3}\right)^{\top},\ell
\right],
\sum_{j=1}^{\rho}\left(g_{\hat{{M}}_{4}}(\mathbf{z})\right)_j
\mathcal{E}(z_{t-j})\right\rangle\right|\geq \frac{\theta}{8}\right),\nonumber\\
&\quad=\mathbb{P}\left(\left|\left\langle V\left[
\left({{M}_{3}}-{\hat{{M}}}_{3}\right)^{\top},\ell
\right],\vec{g}_{\hat{{M}}_{4}}\right\rangle\right|\geq \frac{\theta}{8}\right)\nonumber\\
&\quad\leq\mathbb{P}\left(\left|\langle V[M^{\top}_3,\ell]-\mathsf{A}^\top_3 \mathsf{A}_3V[M^\top_3,\ell],\vec{g}_{\hat{{M}}_{4}}\rangle\right|\geq\frac{\theta}{8}\right)\nonumber\\
&\qquad +\mathbb{P}\left(\left|\langle N_{\ell,3},\mathsf{A}_3\vec{g}_{\hat{{M}}_{4}}\rangle\right|\geq\frac{\theta}{8}\Big|\left\|\mathsf{A}_3\vec{g}_{\hat{{M}}_{4}}\right\|\leq c_{3,2}\right)\nonumber\\
&\qquad+\mathbb{P}_{\mathsf{A}_3}\left(\left\|\mathsf{A}_3\vec{g}_{\hat{{M}}_{4}}\right\|\geq c_{3,2}\right)+\mathbb{P}\left(\mathcal{J}_{\ell,3}\right)\nonumber\\
&\quad=\circled{T}_{2,1}+\circled{T}_{2,2}+\circled{T}_{2,3}+\circled{T}_{2,4},\nonumber
\end{align}

\begin{itemize}

\item \textbf{Bounding $\circled{T}_{2,1}$:}
Recalling the notation in \eqref{Denoting_Vector_attention}, we have
\begin{align}
  \circled{T}_{2,1}= &\mathbb{P}\left(
    \left|
    \left\langle
    V\!\left[
    \left( M_{3}
    -\mathsf{A}_3^\top\mathsf{A}_3 M_{3}
    \right)^{\!\top},
    \ell
    \right],
    \vec{g}_{\hat{M}_{4}}
    \right\rangle
    \right|
    \geq\frac{\theta}{8}
    \right)
    \nonumber\\
    &=
    \mathbb{P}\left(
    \left|
    \left\langle
    V\!\left[
     M_{3}^{\!\top},
    \ell
    \right],
    \vec{g}_{\hat{{M}}_{4}}
    \right\rangle
    -
    \left\langle
    \mathsf{A}_3
    V\!\left[
    M_{3}^{\!\top},
    \ell
    \right],
    \mathsf{A}_3
    \vec{g}_{\hat{{M}}_{4}}
    \right\rangle
    \right|
    \geq\frac{\theta}{8}
    \right)
    \nonumber\\
    &\leq
    4\exp\!\left(
    -\frac{m_3}{7}
    \left(
    \frac{\theta}
    {8 C }
    \right)^2
    \right)
    \label{multiplile_concentration_1}\\
    &\leq
    \mathcal{O}\!\left(\frac{1}{dNT}\right).
    \label{multiplile_concentration_2}
\end{align}

where
\begin{itemize}
    \item Equation~\eqref{multiplile_concentration_1} follows by applying \cite[Lemma~8, Part~2]{gronlund2020near}.

    \item Equation~\eqref{multiplile_concentration_2} follows from the choice of the parameter \(m_3\).
\end{itemize}

\item \textbf{Bounding $\circled{T}_{2,2}$:}

\begin{align}
    \circled{T}_{2,2}&=\mathbb{P}\left(\left|\langle N_{\ell,3},\mathsf{A}_3\vec{g}_{\hat{{M}}_{4}}\rangle\right|\geq\frac{\theta}{8}\Big|\left\|\mathsf{A}_3\vec{g}_{\hat{{M}}_{4}}\right\|\leq c_{3,2}\right)\nonumber\\
    &\leq\frac{2m_3\nu_3^{m_3}}{\sqrt{\pi}}e^{-\left(\frac{m_3+1}{2}\right)\left(\frac{\theta}{8 c_{3,2}\nu_3 C}\right)^2}\label{Bounding_unif_sphere_1}\\
    &\leq\mathcal{O}\left(\frac{1}{dNT}\right),\label{Bounding_unif_sphere_2}
\end{align}
where
\begin{itemize}
    \item \eqref{Bounding_unif_sphere_1} follows from the same argument as Part~III of \cite[page~26]{sefidgaran2022distributed}.
    \item \eqref{Bounding_unif_sphere_2} follows by choosing $c_{3,2}$, $\nu_3$, and $m_3$.
\end{itemize}

\begin{itemize}
    \item \textbf{Bounding $\circled{T}_{2,3}$:}
\begin{align}
    \circled{T}_{2,3}&=\mathbb{P}_{\mathsf{A}_3}\left(\left\|\mathsf{A}_3\vec{g}_{\hat{{M}}_{4}}\right\|\geq c_{3,2}\right)\nonumber\\
    &\leq2e^{-0.21m_3\left(c^2_{3,2}-1-2\log (c_{3,2}) \right)}\label{Simlification_third_thitrm_term_3}\\
    &\leq\mathcal{O}\left(\frac{1}{dNT}\right)\nonumber
\end{align}

\item Equation~\eqref{Simlification_third_thitrm_term_3} follows from
    \cite[Lemma~8, Part~1]{gronlund2020near}.

    \item \textbf{Bounding $\circled{T}_{2,4}$:}
\begin{align}
    \circled{T}_{2,4}&=\mathbb{P}\left(\mathcal{J}_{\ell,3}\right)\nonumber\\
    &\leq2e^{-0.21m_3\left(c^2_{3,1}-1-2\log (c_{3,1}) \right)}\label{Simlification_third_thitrm_term_4}\\
    &\leq\mathcal{O}\left(\frac{1}{dNT}\right)\nonumber
\end{align}

\item Equation~\eqref{Simlification_third_thitrm_term_4} follows from
    \cite[Lemma~8, Part~1]{gronlund2020near}.

\end{itemize}

Therefore,
\begin{align}
    \circled{II}^1,\ \circled{II}^2
    &\leq
    \mathcal{O}\!\left(\frac{1}{NT}\right).
    \nonumber
\end{align}
By combining the bounds for $\circled{I}^1,\circled{I}^2$ and $\circled{II}^1,\circled{II}^2$, we conclude that the distortion term is bounded by $\mathcal{O}\!\left(\frac{1}{NT}\right)$.

\end{itemize}

\end{itemize}

\end{itemize}

In the final step, it remains to derive an upper bound on the mutual information term. To this end, define
\[
\mathbf{T}_{\{3,4\},[d]}
\coloneqq
\bigl(
\mathbf{T}_{3,[d]},
\mathbf{T}_{4,[d]}
\bigr).
\]

For each $i\in\{3,4\}$, let
\begin{align}
\mathbf{T}_{i,[d]}
\coloneqq
\left(
T_{i,1},\ldots,T_{i,d}
\right)
\in\{0,1\}^{d},
\nonumber
\end{align}
where, for every $\ell\in[d]$,
\begin{align}
T_{i,\ell}
\coloneqq
\mathbf{1}\left\{\ell\in[d]_{M_i}\right\}.
\nonumber
\end{align}
Hence, if $T_{i,\ell}=0$, then
\begin{align}
V[\hat Q_i,\ell]=0,
\qquad
V[Q_i-\hat Q_i,\ell]=V[Q_i,\ell].\nonumber
\end{align}

Therefore, we have
\begin{align}
   &\mathbb{I}^{\mathsf{A}_{\{3,4\}}}\left(\mathbf{S};\hat{W}\right)\nonumber\\
   &\quad\leq \mathbb{I}^{\mathsf{A}_{\{3,4\}}}\left({W};\hat{W}\right)\label{Bound_MI_Binary_SANTP_1}\\
  &\quad=\mathbb{I}^{\mathsf{A}_{\{3,4\}}}\left(\left\{V(W,\ell)\right\}_{\ell\in[d]};\left\{V(\hat{W},\ell)\right\}_{\ell\in[d]}\right)\nonumber\\
  &\quad=\mathbb{I}^{\mathsf{A}_{\{3,4\}}}\left(\left\{V(Q_{i},\ell)\right\}_{\ell\in[d],i\in\{3,4\}};\left\{V(\hat{Q}_{i},\ell)\right\}_{\ell\in[d],i\in\{3,4\}}\right)\label{Bound_MI_Binary_SANTP_2}\\
  &\quad\leq \mathbb{I}^{\mathsf{A}_{\{3,4\}}}\left(\left\{V(Q_{i},\ell)\right\}_{\ell\in[d],i\in\{3,4\}};\left\{V(\hat{Q}_{i},\ell)\right\}_{\ell\in[d],i\in\{3,4\}},\mathbf{T}_{\{3,4\},[d]}\right)\nonumber\\
  &\quad = \mathbb{I}^{\mathsf{A}_{\{3,4\}}}\left(\left\{V(Q_{i},\ell)\right\}_{\ell\in[d],i\in\{3,4\}};\left\{V(\hat{Q}_{i},\ell)\right\}_{\ell\in[d],i\in\{3,4\}}\Big|\mathbf{T}_{\{3,4\},[d]}\right)\nonumber\\
  &\quad\qquad+\mathbb{I}^{\mathsf{A}_{\{3,4\}}}\left(\left\{V(Q_{i},\ell)\right\}_{\ell\in[d],i\in\{3,4\}};\mathbf{T}_{\{3,4\},[d]}\right)\nonumber\\
  &\quad\leq\mathbb{I}^{\mathsf{A}_{\{3,4\}}}\left(\left\{V(Q_{i},\ell)\right\}_{\ell\in[d],i\in\{3,4\}};\left\{\hat{M}_{\ell,i}\right\}_{\ell\in[d],i\in\{3,4\}}\Big|\mathbf{T}_{\{3,4\},[d]}\right)+\mathbb{H}\left(\mathbf{T}_{\{3,4\},[d]}\right)\nonumber\\
  &\quad=\mathbb{I}^{\mathsf{A}_{\{3,4\}}}\left(\left\{V\left(Q_{3},\ell\right)\right\}_{\ell\in[d]};\left\{\hat{M}_{\ell,i}\right\}_{\ell\in[d],i\in\{3,4\}}\Big|\mathbf{T}_{\{3,4\},[d]}\right)\nonumber\\
  &\quad\qquad +\mathbb{I}^{\mathsf{A}_{\{3,4\}}}\left(\left\{V\left(Q_{4},\ell\right)\right\}_{\ell\in[d]};\left\{\hat{M}_{\ell,i}\right\}_{\ell\in[d],i\in\{3,4\}}\Big|\left\{V\left(Q_{3},\ell\right)\right\}_{\ell\in[d]},\mathbf{T}_{\left\{3,4\right\},[d]}\right)+\mathbb{H}\left(\mathbf{T}_{\{3,4\},[d]}\right)\nonumber\\
  &\quad=\mathbb{I}^{\mathsf{A}_{\{3,4\}}}\left(\left\{V\left(Q_{3},\ell\right)\right\}_{\ell\in[d]};\left\{\hat{M}_{\ell,3}\right\}_{\ell\in[d]}\Big|\mathbf{T}_{\{3,4\},[d]}\right)\nonumber\\
  &\quad\qquad+\mathbb{I}^{\mathsf{A}_{\{3,4\}}}\left(\left\{V\left(Q_{3},\ell\right)\right\}_{\ell\in[d]};\left\{\hat{M}_{\ell,4}\right\}_{\ell\in[d]}\Big|\left\{\hat{M}_{\ell,3}\right\}_{\ell\in[d]},\mathbf{T}_{\{3,4\},[d]}\right)\nonumber\\
  &\qquad\quad+\mathbb{I}^{\mathsf{A}_{\{3,4\}}}\left(\left\{V\left(Q_{4},\ell\right)\right\}_{\ell\in[d]};\left\{\hat{M}_{\ell,3}\right\}_{\ell\in[d]}\Big|\left\{V\left(Q_{3},\ell\right)\right\}_{\ell\in[d]},\mathbf{T}_{\{3,4\},[d]}\right)\nonumber\\
  &\qquad\quad+\mathbb{I}^{\mathsf{A}_{\{3,4\}}}\left(\left\{V\left(Q_{4},\ell\right)\right\}_{\ell\in[d]};\left\{\hat{M}_{\ell,4}\right\}_{\ell\in[d]}\Big|\left\{V\left(Q_{3},\ell\right)\right\}_{\ell\in[d]},\left\{\hat{M}_{\ell,3}\right\}_{\ell\in[d]},\mathbf{T}_{\left\{3,4\right\},[d]}\right)+\mathbb{H}\left(\mathbf{T}_{\left\{3,4\right\},[d]}\right)\nonumber\\
  &\quad\leq 2\sum_{i\in\left\{3,4\right\}}\mathbb{I}^{\mathsf{A}_{[3]}}\left(\left\{V\left(Q_{i},\ell\right)\right\}_{\ell\in[d]};\left\{\hat{M}_{\ell,i}\right\}_{\ell\in[d]}\Big|\mathbf{T}_{\left\{3,4\right\},[d]}\right)+\mathbb{H}\left(\mathbf{T}_{\left\{3,4\right\},[d]}\right)\label{Bound_MI_Binary_SANTP_3}\\
  &\quad=2\!\!\!\sum_{i\in\left\{3,4\right\}}\!\!\!\!\mathbb{E}_{\mathbf{T}_{i,[d]}}\!\!\left[\mathbb{I}^{\mathsf{A}_{\left\{3,4\right\}},\mathbf{T}_{i,[d]}}\left(\!\!\left\{V(Q_{i},\ell)\right\}_{\ell\in[d]};\left\{\hat{M}_{\ell,i}\right\}_{\ell\in[d]}\right)\right]+\mathbb{H}\left(\mathbf{T}_{\left\{3,4\right\},[d]}\right),\label{Bound unofrom MI}
\end{align}

Rewriting the definition of mutual information in \eqref{Bound unofrom MI}, we have
\begin{align}
\eqref{Bound unofrom MI}&=2\sum_{i\in\left\{3,4\right\}}\mathbb{E}_{\mathbf{T}_{i,[d]}}\Bigg[h^{\mathsf{A}_{\{3,4\}},\mathbf{T}_{i,[d]}}\left(\left\{\hat{M}_{\ell,i}\right\}_{\ell\in[d]}\right)-h^{\mathsf{A}_{\left\{3,4\right\}},\mathbf{T}_{i,[d]}}\left(\left\{\hat{M}_{\ell,i}\right\}_{\ell\in[d]}\Big |\left\{V(Q_{i},\ell)\right\}_{\ell\in[d]}\right)\Bigg]\nonumber\\
&\quad\qquad\qquad+\mathbb{H}\left(\mathbf{T}_{\left\{3,4\right\},[d]}\right)\nonumber\\
&\quad\leq 2\sum_{i\in\{3,4\}}\mathbb{E}_{\mathbf{T}_{i,[d]}}\Bigg[\sum_{\ell\in[d]}h^{\mathsf{A}_{\left\{3,4\right\}},\mathbf{T}_{i,[d]}}\left(\hat{M}_{\ell,i}\right)-h^{\mathsf{A}_{[3]},\mathbf{T}_{i,[d]}}\left(\left\{\hat{M}_{\ell,i}\right\}_{\ell\in[d]}\Big|\left\{V(Q_{i},\ell)\right\}_{\ell\in[d]}\right)\Bigg]+\mathbb{H}\left(\mathbf{T}_{\left\{3,4\right\},[d]}\right)\nonumber\\
&\quad=2\sum_{i\in\{3,4\}}
\mathbb{E}_{\mathbf{T}_{i,[d]}}
\Bigg[
\sum_{\ell\in[d]}
h^{\mathsf{A}_{\{3,4\}},\mathbf{T}_{\{3,4\},[d]}}
\left(\hat{M}_{\ell,i}\right) -
h^{\mathsf{A}_{\{3,4\}},\mathbf{T}_{\{3,4\},[d]}}
\left(
\hat{M}_{\ell,i}
\Big|
V(Q_i,\ell)
\right)
\Bigg]
+
\mathbb{H}\left(\mathbf{T}_{\{3,4\},[d]}\right).\nonumber\\
&\quad\leq 2\sum_{i\in\left\{3,4\right\}}\Bigg[\left[\mathbb{E}_{\mathbf{T}_{i,[d]}}\left[\sum_{\ell\in[d]}T_{i,\ell}\right]\log\left(\text{Volume}\left(\mathcal{B}_{m_i}\left(c_{i,1}C+\nu_i\right)\right)\right)\right]\nonumber\\
&\qquad\qquad\qquad-\mathbb{E}_{\mathbf{T}_{i,[d]}}\left[\sum_{\ell\in\mathbf{T}_{i,[d]}}T_{i,\ell}h^{\mathsf{A}_{\left\{3,4\right\}},\mathbf{T}_{\left\{3,4\right\},[d]}}\left(\hat{M}_{\ell,i}\Big|V(Q_{i},\ell)\right)\right]\Bigg]+\mathbb{H}\left(\mathbf{T}_{\left\{3,4\right\},[d]}\right)\nonumber\\
&\quad=2\sum_{i\in\{3,4\}}\Bigg[\left[\mathbb{E}_{\mathbf{T}_{i,[d]}}\left[\sum_{\ell\in[d]}T_{i,\ell}\right]\log\left(\text{Volume}\left(\mathcal{B}_{m_i}\left(c_{i,1}C+\nu_i\right)\right)\right)\right]\nonumber\\
&\qquad\qquad\qquad-\left[\mathbb{E}_{\mathbf{T}_{i,[d]}}\left[\sum_{\ell\in[d]}T_{i,\ell}\right]\log\left(\text{Volume}\left(\mathcal{B}_{m_i}\left(\nu_i\right)\right)\right)\right]\Bigg]+\mathbb{H}\left(\mathbf{T}_{\left\{3,4\right\},[d]}\right)\nonumber\\
&\quad\leq 2\sum_{i\in\left\{3,4\right\}} \left[m_i\mathbb{E}_{\mathbf{T}_{i,[d]}}\left[\sum_{\ell\in[d]}T_{i,\ell}\right]\log\left(\frac{c_{i,1}C+\nu_i}{\nu_i}\right)+\mathbb{H}\left(\mathbf{T}_{i,[d]}\right)\right]\nonumber\\
&\quad\leq
2\sum_{i\in\{3,4\}}
\Bigg[
m_i K_i
\log\left(
\frac{c_{i,1}C+\nu_i}{\nu_i}
\right)
+
\mathcal{O}\left(
K_i
\left[
1+\log\left(
\frac{d}{K_i}
\right)
\right]
\right)
\Bigg].
\nonumber\nonumber
\end{align}

where
\begin{itemize}
    \item Equations~\eqref{Bound_MI_Binary_SANTP_1} and~\eqref{Bound_MI_Binary_SANTP_2} follow from the data-processing inequality.

\item The inequality in~\eqref{Bound_MI_Binary_SANTP_3} follows from the fact that, for any $i\in\{3,4\}$, conditioned on
$V\!\left(\{Q_{i,\ell}\}_{\ell\in[d]}\right)$, the random variable
$\hat{M}_{i,[d]}$
is independent of
$V\!\left(\{Q_{j,\ell}\}_{\ell\in[d]}\right)$ for all $j\neq i$.

\item $K_i =\min\left\{d,\,\left(\frac{8C^2\sqrt{\rho}}{\theta}\right)^2\right\}$
\end{itemize}

It remains to upper-bound $\mathbb{H}\left(\mathbf{T}_{i,[d]}\right)$. This follows the same argument used to bound $\mathbb{H}\left(\mathbf{T}_{i,[d]}\right)$ in \eqref{Bound_MI_Binary_3}. Since the argument is identical for each $i\in\{3,4\}$, we omit the details.

Therefore,
\begin{align}
I(\mathbf{S};\hat{W})
&=
\mathcal{O}\left(
\frac{C^8\rho^2}{\theta^4}
\log(\rho dNT)
\right).
\nonumber
\end{align}

Combining this rate bound with the distortion bound
$\epsilon=\mathcal{O}(1/(NT))$ and the lossy
margin-generalization bound yields
\begin{align}
\mathbb{E}\left[\gen_{\theta}(\mathbf{S},W)\right]
&\leq
\mathcal{O}\left(
C^4
\sqrt{
\frac{
\rho^2\widetilde{\tau}_{\min}\log(\rho dNT)
}{
\theta^4NT
}}
\right),
\nonumber
\end{align}
which completes the proof.



\subsection{Proof of Lemma \ref{Hoeffdinf_Lemma}}\label{Proof_Hoeffding_employing}

To prove Lemma~\ref{Hoeffdinf_Lemma}, we follow the same steps as
\eqref{multi_round_generalization_bound_1}--\eqref{multi_round_generalization_bound_10}.
In particular, analogously to \eqref{bounding_moment_10}, we obtain
\begin{align}
    \mathbb{E}\left[\gen(\mathbf{S},W)\right]
    \leq
    \frac{I(\mathbf{S};W)}{\lambda}
    +
    \frac{1}{\lambda}
    \sum_{n=1}^{N}
    \mathbb{E}_{W}\Bigg[
        \log
        \mathbb{E}_{\mathbf{S}}\Bigg[
            e^{
            \frac{\lambda}{N}
            \left(
                \mathbb{E}_{\mathbf{S}'}
                \left[
                    \ell(\mathbf{Z}'^{(n)},W)
                \right]
                -
                \ell(\mathbf{Z}^{(n)},W)
            \right)}
        \Bigg]
    \Bigg].
\label{Hoeffding_MGF_Gen_error}
\end{align}

Thus, it remains to bound the MGF term on the right-hand side of
\eqref{Hoeffding_MGF_Gen_error}. To this end, define
\begin{align}
    Y_t
    &\coloneqq\frac{1}{T}\log
    \frac{1}{
        p_W(Z_t|Z_{t-1},\ldots,Z_{t-\rho})
    },
    \nonumber\\
    Y_t^{(n)}
    &\coloneqq\frac{1}{T}\log
    \frac{1}{
        p_W(Z_t^{(n)}|Z_{t-1}^{(n)},\ldots,Z_{t-\rho}^{(n)})
    }.\nonumber
\end{align}
Then,
\begin{align}
    \ell(\mathbf{z},w)
    &=
    \frac{1}{T}\sum_{t=1}^{T}
    \log
    \frac{1}{
        p_w(z_t\mid z_{t-1},\ldots,z_{t-\rho})
    }
    =\sum_{t=1}^{T}Y_t.
    \nonumber
\end{align}
Consequently,
\begin{align}
    &\log
    \mathbb{E}_{\mathbf{S}}\Bigg[
        e^{{\frac{\lambda}{N}}
        \left(
            \mathbb{E}_{\mathbf{S}'}
            \left[
                \sum_{t=1}^{T}Y_t'
            \right]
            -
            \sum_{t=1}^{T}Y^{(n)}_t
        \right)}
    \Bigg]\leq
    \frac{
        \lambda^2
        (2B+\log d)^2
        \tilde{\tau}_{\min}
    }{
        8TN^2
    }
    \label{Hoeefding_inequality_MGF_Bounding}
\end{align}
where
\begin{itemize}

\item $\tau_{\min}$ is defined in \eqref{new_tau-min}.

    \item \eqref{Hoeefding_inequality_MGF_Bounding} follows by applying the
    McDiarmid-type inequality in Lemma~\ref{lem:window-loss-concentration}.
    Indeed, since $Y_t\geq0$ for every $t\in[T]$, the function
    \begin{align}
        g(Y_1,\ldots,Y_T)
        \coloneqq\frac{1}{N}
        \sum_{t=1}^{T}Y_t
        \nonumber
    \end{align}
    satisfies
    \begin{align}
        g(Y_1,\ldots,Y_T)
        -
        g(Y'_1,\ldots,Y'_T)
        &\leq\frac{1}{N}
        \sum_{t=1}^{T}
        Y_t\,
        \mathrm{1}\{Y_t\neq Y'_t\}\leq
        \left(\frac{2B+\log d}{NT}\right)
        \sum_{t=1}^{T}
        \mathrm{1}\{Y_t\neq Y'_t\}
        \label{max_bound_prob_y_t}
    \end{align}
    where \eqref{max_bound_prob_y_t} follows from the uniform upper bound on
    $Y_t$ established in Lemma~\ref{max_probability_conditional_markov}.
\end{itemize}
   


\subsection{Proof of Lemma \ref{lem:window-loss-concentration}}

\begin{proof}
We apply the McDiarmid-type inequality for dependent random
variables in Lemma \ref{Mdiarmid_type_lemma}. For a partition
$\widehat Y$ with mixing matrix $\Gamma$, this inequality gives
\[
\log\mathbb E\left[
e^{\lambda(f(Y)-\mathbb Ef(Y))}\right]
\le
\frac{\lambda^2}{8}\|\Gamma C(c)\|_2^2.
\]
It therefore remains to construct a Marton coupling for the
loss sequence and bound its mixing matrix.

Fix $\delta\in(0,1)$ such that $\tilde{\tau}(\delta)<\infty$,
set $L=\tilde{\tau}(\delta)$, and partition $Y$ into consecutive
blocks $\widehat Y_1,\ldots,\widehat Y_{\widehat T}$ of length $L$,
except possibly the last. Fix $i<\widehat T$ and two admissible
histories as in Definition \ref{Marton_coupling}, which agree up
to block $i-1$ and may differ at block $i$. Let $s=iL$ denote
the end of block $i$. Although the two histories may induce
different conditional distributions of $S_s$, the Markov
property ensures that, given $S_s$, the future is independent
of the past losses. Hence, we may couple the two conditional
distributions of $S_s$ and then their Markov continuations,
thereby preserving the conditional future laws required by
Definition \ref{Marton_coupling}.

We construct the coupling at the successive boundaries
$\{s+kL\}_{k\ge1}$. At each boundary, conditional on the current
pair of states, we maximally couple their $L$-step transition
distributions. By the definition of $\tilde{\tau}(\delta)$,
the resulting endpoints differ with probability at most
$\delta$, uniformly over the current states. We then sample
the intervening paths from the corresponding conditional Markov
bridge distributions, preserving the trajectory law of each
copy. Once the states coincide, we use a common continuation
so that they remain equal thereafter. Thus, the conditional
probability of disagreement at the next boundary is at most
$\delta$ whenever the current states differ. When they coincide,
their transition laws are identical, and we couple the subsequent
transitions identically so that the two copies remain equal.
Consequently,
\[
\mathbb P\!\left(
S_{s+(r+1)L}\neq S'_{s+(r+1)L}
\mid S_{s+rL},S'_{s+rL}
\right)
\le
\delta\,
\mathbf 1_{\{S_{s+rL}\neq S'_{s+rL}\}}.
\]
Taking expectations and using the tower property yields
\[
\mathbb P(S_{s+(r+1)L}\neq S'_{s+(r+1)L})
\le
\delta\,
\mathbb P(S_{s+rL}\neq S'_{s+rL}).
\]
Since the initial disagreement probability is at most one,
induction gives
\begin{equation}
\mathbb P(S_{s+rL}\neq S'_{s+rL})\le\delta^r.
\label{eq:window-state-coupling}
\end{equation}

Now consider block $i+k$, where $k\ge2$. It begins at time
$a=s+(k-1)L+1$, and the state immediately preceding it is
\[
S_{a-1}
=(Z_{a-1},\ldots,Z_{a-\rho}).
\]
This state contains the complete context required to generate
the first loss in the block. Therefore, if the two states
$S_{a-1}$ and $S'_{a-1}$ coincide, their common continuation
makes every loss in block $i+k$ identical. It follows from
\eqref{eq:window-state-coupling} that
\[
\mathbb{P}(\widehat Y_{i+k}\neq\widehat Y'_{i+k})
\le
\mathbb{P}(S_{s+(k-1)L}\neq S'_{s+(k-1)L})
\le\delta^{k-1}.
\]
Since this bound is uniform over the admissible conditional
histories, Definition \ref{Marton_coupling} yields
\[
\Gamma_{ii}=1,\qquad
\Gamma_{i,i+1}\le1,\qquad
\Gamma_{i,i+k}\le\delta^{k-1},\quad k\ge2.
\]
In particular, the row and column sums are bounded by
\[
1+1+\sum_{r=1}^{\infty}\delta^r
=\frac{2-\delta}{1-\delta},
\]
and thus
\[
\|\Gamma\|_2
\le\sqrt{\|\Gamma\|_1\|\Gamma\|_\infty}
\le\frac{2-\delta}{1-\delta}.
\]

Finally, by the definition of $C(c)$ in Lemma
\ref{Mdiarmid_type_lemma} and the fact that each block has
length at most $L$, the Cauchy--Schwarz inequality yields
\[
\|C(c)\|_2^2
=
\sum_{j=1}^{\widehat T}
\left(\sum_{t\in I(\widehat Y_j)}c_t\right)^2
\le L\|c\|_2^2.
\]
Substituting the preceding two bounds into Lemma
\ref{Mdiarmid_type_lemma} gives
\[
\log\mathbb E\left[
e^{\lambda(f(Y)-\mathbb Ef(Y))}\right]
\le
\frac{\lambda^2\tilde{\tau}(\delta)}{8}
\left(\frac{2-\delta}{1-\delta}\right)^2
\|c\|_2^2.
\]
Taking the infimum over $\delta\in(0,1)$ yields
\[
\log\mathbb E\left[
e^{\lambda(f(Y)-\mathbb Ef(Y))}\right]
\le
\frac{\lambda^2\tilde{\tau}_{\min}\|c\|_2^2}{8},
\]
which proves \eqref{Concentration_under_condition_A}.
\end{proof}



\subsection{Proof of Lemma \ref{max_probability_conditional_markov}}\label{Proof_max_probability_conditional_markov}

\begin{align}
    \log\frac{1}{p_{w}(z_{t}|z_{t-1},\cdots,z_{t-\rho})}&=\log\frac{1}{\big<\sigma\left(g_w(\mathcal{E}^{\odot\rho}(\mathbf{z}))\right),\mathcal{E}(z_{t})\big>}\nonumber\\
    &\leq\max_{j\in[d]}\log\left(\frac{\sum_{j\in[d]}\exp({g_{w,j}})}{\exp(g^{(j)}_w)\cdot\mathcal{E}_j(z_{t})}\right)\label{worst_case_prob_bound_3}\\
    &\leq \log\left(\frac{de^{B}}{e^{-B}}\right)\label{worst_case_prob_bound_2}\\
    &=2B+\log d\nonumber
\end{align}
where
\begin{itemize}
    \item Equation~\eqref{worst_case_prob_bound_3} follows from the definitions in \eqref{conditional_distribution_def} and \eqref{soft_max_def}.
    
    \item Equation~\eqref{worst_case_prob_bound_2} follows from Assumption~\eqref{subsubsec:bounded_output}, which implies that $|{g_{w,j}}|\leq\|g_w\|\leq B$ for any $j\in[d]$.
\end{itemize}



\subsection{TinyStories: Experimental and Implementation Details}
\label{app:tinystories_details}

This section provides additional implementation details for the
TinyStories experiment reported in
Section~\ref{subsec:tinystories}.

\paragraph{Dataset and tokenization.}
We use the training split of the TinyStories corpus and retain
$8\%$ of the stories for our experiments. Before tokenization,
newline characters are removed from each story. We train a byte-level
BPE tokenizer from scratch on the selected corpus with vocabulary size $d=8000$.
The tokenizer contains the special tokens
\texttt{<pad>}, \texttt{<bos>}, \texttt{<eos>}, and
\texttt{<unk>}. A beginning- and end-of-sequence template,
\texttt{<bos>} $\cdots$ \texttt{<eos>}, is applied to each
story at encoding time.

The selected stories are tokenized once and concatenated into a single
token stream that is reused for all context lengths. The resulting
stream is split contiguously, with the first $10\%$ of the tokens
reserved as a held-out evaluation set and the remaining $90\%$ used
for training. In the source code, this held-out subset is denoted as
the ``validation'' set; throughout the paper and in the reported
figures, we refer to the loss evaluated on exactly this same subset
as the \emph{test loss}. Thus, no additional data partition is
introduced by this difference in terminology.

For a context length $\rho$, training examples are constructed from
sliding windows of $\rho+1$ consecutive tokens. Specifically, if
$(x_i,\ldots,x_{i+\rho})$ denotes such a window, the input sequence
and the corresponding next-token targets are $(x_i,\ldots,x_{i+\rho-1})$
and $(x_{i+1},\ldots,x_{i+\rho})$, respectively. Hence, the same
tokenized corpus is used for every value of $\rho$, and only the window
length changes.

\paragraph{Model architecture.}
We employ a GPT-style decoder-only Transformer with causal
self-attention. The model has $N_{\mathrm{layer}}=6$, $d_{\mathrm{model}}=512$,
and $H=8$, where $N_{\mathrm{layer}}$ is the number of Transformer blocks and
$H$ is the number of attention heads. The feed-forward module uses an
expansion ratio of $4$, and dropout is set to $0.1$.

Each block uses a pre-normalized residual architecture with RMSNorm,
causal scaled dot-product self-attention, and a SwiGLU feed-forward
module. Causality is enforced directly in the scaled dot-product
attention operation, preventing each token position from attending
to future tokens. Learned token and positional embeddings are used,
and the input token-embedding matrix is tied to the output language
model head. Residual projection weights are initialized with a
depth-dependent scaling factor. The architecture contains
approximately $25$ million trainable parameters; the only
context-dependent contribution to the parameter count comes from the
learned positional embeddings.

\paragraph{Centralized context-length sweep.}
The experiment is fully centralized; we train an
independent model from scratch for every context length
$\rho\in\{2,4,8,16,32,64,128\}$.

The tokenized data and all optimization hyperparameters are shared
across the different values of $\rho$. In particular, the total
number of gradient updates is held fixed, so that changes in the
reported losses cannot be attributed simply to a larger optimization
budget for longer contexts.

\paragraph{Optimization.}
Every model is trained for exactly $10000$
gradient steps using AdamW with learning rate $\eta=3\times10^{-4}$,
weight decay $0.01$, and mini-batch size $64$. The global gradient
norm is clipped at $1.0$. The optimization objective is the standard
token-level cross-entropy next-token prediction loss. When a GPU is
available, mixed-precision computation uses \texttt{bfloat16}.

\paragraph{Evaluation.}
During training, the training and held-out losses are evaluated
periodically for diagnostic purposes. At each evaluation point, each
loss is estimated using at most $100$ mini-batches. The values
reported in Fig.~\ref{fig:tinystories_context} correspond to the
losses measured at the final optimization step.

For each realization and context length, we define
\begin{equation}
    \widehat{\operatorname{gen}}(\rho)
    =
    \widehat{\mathcal L}_{\mathrm{test}}(\rho)
    -
    \widehat{\mathcal L}_{\mathrm{train}}(\rho).\nonumber
\end{equation}
Here, $\widehat{\mathcal L}_{\mathrm{test}}$ is precisely the loss
called \texttt{val\_loss} in the implementation; the two quantities
therefore refer to the same held-out data and differ only in
terminology. The figures report the mean across independent
realizations, while the shaded bands correspond to one standard
deviation.



\subsection{ETTh2: Experimental and Implementation Details}
\label{app:etth2_details}

This section provides additional details for the ETTh2 experiments reported in
Section~\ref{subsec:etth2_main}. We first describe the discrete
next-token prediction experiment used to study context length and
generalization, followed by an auxiliary analysis of the characteristic
predictive-memory scale of ETTh2.

\subsubsection{Context-Length Generalization Experiment}
\label{app:etth2_generalization_details}

\paragraph{Data representation and prediction task}
We consider the univariate \texttt{OT} series of ETTh2 and uniformly
quantize its range into $d=20$ bins, each treated as a discrete token.
For context length $\rho$, prediction examples have the form
\begin{equation}
    \bigl(X_{t-\rho},\ldots,X_{t-1}\bigr)
    \longmapsto X_t.
    \nonumber
\end{equation}
We consider
$\rho\in\{2,4,8,16,32,64,128\}$.
The resulting next-token windows are divided into contiguous blocks that
are randomly assigned to the training or test partition. Whenever a
training block is adjacent to a test block, a buffer of width $\rho$ is
removed from the training side to prevent overlap between their receptive
fields.

\paragraph{Model and optimization}
For each $\rho$, we train from scratch a two-layer causal self-attention
predictor with embedding dimension $d_{\mathrm{model}}=64$, $4$ attention
heads, feed-forward dimension $4d_{\mathrm{model}}$, GELU activation,
pre-normalization, and dropout $0.1$. The model uses learned token and
positional embeddings together with a causal attention mask. After the Transformer
layers and a final LayerNorm, the hidden representation at the last
context position is mapped to logits over the $20$ output symbols.

Training uses cross-entropy loss and Adam with learning rate
$\eta=10^{-2}$ and mini-batch size $64$. Every model is trained for
exactly $15{,}000$ gradient updates, independently of $\rho$, with no
early stopping. For each seed, an inner validation subset of the training
data is used exclusively for checkpoint selection: the validation loss is
evaluated every $100$ updates, and the checkpoint attaining the minimum
validation loss over the full training trajectory is retained. The
selected model is then evaluated on the complete training set and on the
separate test set.

The experiment is repeated with $5$ independent random seeds. For each
context length, we report the mean training loss, test loss, and empirical
generalization gap
\begin{equation}
    \widehat{\operatorname{gen}}(\rho)
    =
    \widehat{\mathcal L}_{\mathrm{test}}(\rho)
    -
    \widehat{\mathcal L}_{\mathrm{train}}(\rho).
    \nonumber
\end{equation}
The shaded regions in Fig.~\ref{fig:etth2_context_main} represent
$95\%$ confidence intervals for the corresponding means.

As discussed in Section~\ref{subsec:etth2_main}, the training loss
decreases substantially with context length. The test loss initially
improves and reaches its lowest observed value around $\rho=32$, but does
not improve for larger contexts, while the training loss remains
substantially smaller. This growing train--test discrepancy motivates the
auxiliary predictive-memory analysis below.

\subsubsection{Empirical Estimation of the Effective Predictive Memory}
\label{app:etth2_memory}

The purpose of this auxiliary experiment is to provide an empirical
interpretation of the change in test-loss behavior observed in
Section~\ref{subsec:etth2_main}. One possible explanation is that most of
the predictively useful temporal dependence accessible to the model under
consideration is contained within a characteristic range of history, so that
making increasingly older observations available eventually provides
little measurable out-of-sample benefit.

We therefore study how predictive performance changes as progressively
longer histories are made available. This analysis does not attempt to
identify an exact finite-order Markov representation of ETTh2, nor does
it test whether individual lags beyond a particular point have zero
conditional information. Rather, it estimates an
\emph{effective predictive-memory scale} under the considered model and
evaluation protocol and is used only as a plausible empirical
interpretation of the behavior observed in the main experiment.

Let $\mathbf{X}_t\in\mathbb{R}^{d_x}$, with $d_x=7$, denote the complete
multivariate ETTh2 observation at time $t$. The analysis is motivated by
the finite-history predictive approximation
\begin{equation}
P\!\left(
    \mathbf{X}_t
    \mid
    \mathbf{X}_{t-1},\mathbf{X}_{t-2},\ldots
\right)
\simeq
P\!\left(
    \mathbf{X}_t
    \mid
    \mathbf{X}_{t-1},\ldots,\mathbf{X}_{t-\rho}
\right).
\label{eq:etth2_effective_memory}
\nonumber
\end{equation}
We use this relation only as an operational motivation, rather than as an
exact conditional-independence statement, and ask whether increasing
$\rho$ yields statistically supported gains in out-of-sample prediction.

\paragraph{Predictive estimator and evaluation protocol}
For each candidate $\rho$, a nonlinear Transformer predicts the current
multivariate observation from the preceding $\rho$ observations,
\begin{equation}
    \mathbf{X}_{t-\rho:t-1}
    =
    \left(
        \mathbf{X}_{t-\rho},\ldots,\mathbf{X}_{t-1}
    \right)
    \longmapsto
    \widehat{\mathbf{X}}_t .
    \nonumber
\end{equation}
It parameterizes the diagonal Gaussian conditional distribution
\begin{equation}
q_{\rho}\!\left(
    \mathbf{x}_t
    \mid
    \mathbf{x}_{t-\rho:t-1}
\right)
=
\prod_{j=1}^{d_x}
\mathcal{N}\!\left(
    x_{t,j};
    \mu_{\rho,j}\!\left(\mathbf{x}_{t-\rho:t-1}\right),
    \sigma_{\rho,j}^{2}\!\left(\mathbf{x}_{t-\rho:t-1}\right)
\right),
\nonumber
\end{equation}
where both conditional means and variances are learned from the data.
The primary criterion is the out-of-sample negative log-likelihood
\begin{equation}
\widehat{\mathcal L}_{\mathrm{NLL}}(\rho)
=
-\frac{1}{|\mathcal T_{\mathrm{test}}|}
\sum_{t\in\mathcal T_{\mathrm{test}}}
\log q_{\rho}\!\left(
    \mathbf X_t
    \mid
    \mathbf X_{t-\rho:t-1}
\right),
\nonumber
\end{equation}
where smaller values indicate better predictive performance. Normalized
multivariate MSE is also monitored as a secondary criterion.

We use expanding-window temporal cross-validation with four folds and
three random seeds, without random temporal shuffling. Normalization
parameters are estimated exclusively from the training portion of each
fold, and all context lengths are evaluated on identical target
timestamps, thereby enabling paired comparison of their out-of-sample losses.

\paragraph{Coarse and refined context searches}
We first perform the broad search
\begin{equation}
\mathcal R_{\mathrm{coarse}}
=
\{1,2,3,6,12,18,24,30,36,48,72,96,120,144,168\}
\quad\text{hours}.
\nonumber
\end{equation}
The minimum mean NLL is attained at
$\rho_{\mathrm{coarse}}^\star=24$ hours, with
$\widehat{\mathcal L}_{\mathrm{NLL}}(24)\simeq-0.2345$.
Increasing the history to substantially larger values, including
$48$, $72$, $96$, $120$, $144$, and $168$ hours, does not yield a lower
mean NLL, thereby localizing the dominant predictive-memory scale around one day.

To verify that this optimum is not an artifact of the coarse
discretization, we repeat the experiment with two-hour resolution:
\begin{equation}
\mathcal R_{\mathrm{fine}}
=
\{20,22,24,26,28,30,32,34,36\}
\quad\text{hours}.
\nonumber
\end{equation}
All other components of the protocol are unchanged.

\begin{figure*}[t]
     \centering
    \begin{minipage}[t]{0.35\textwidth}
        \centering
        \includegraphics[width=\linewidth]{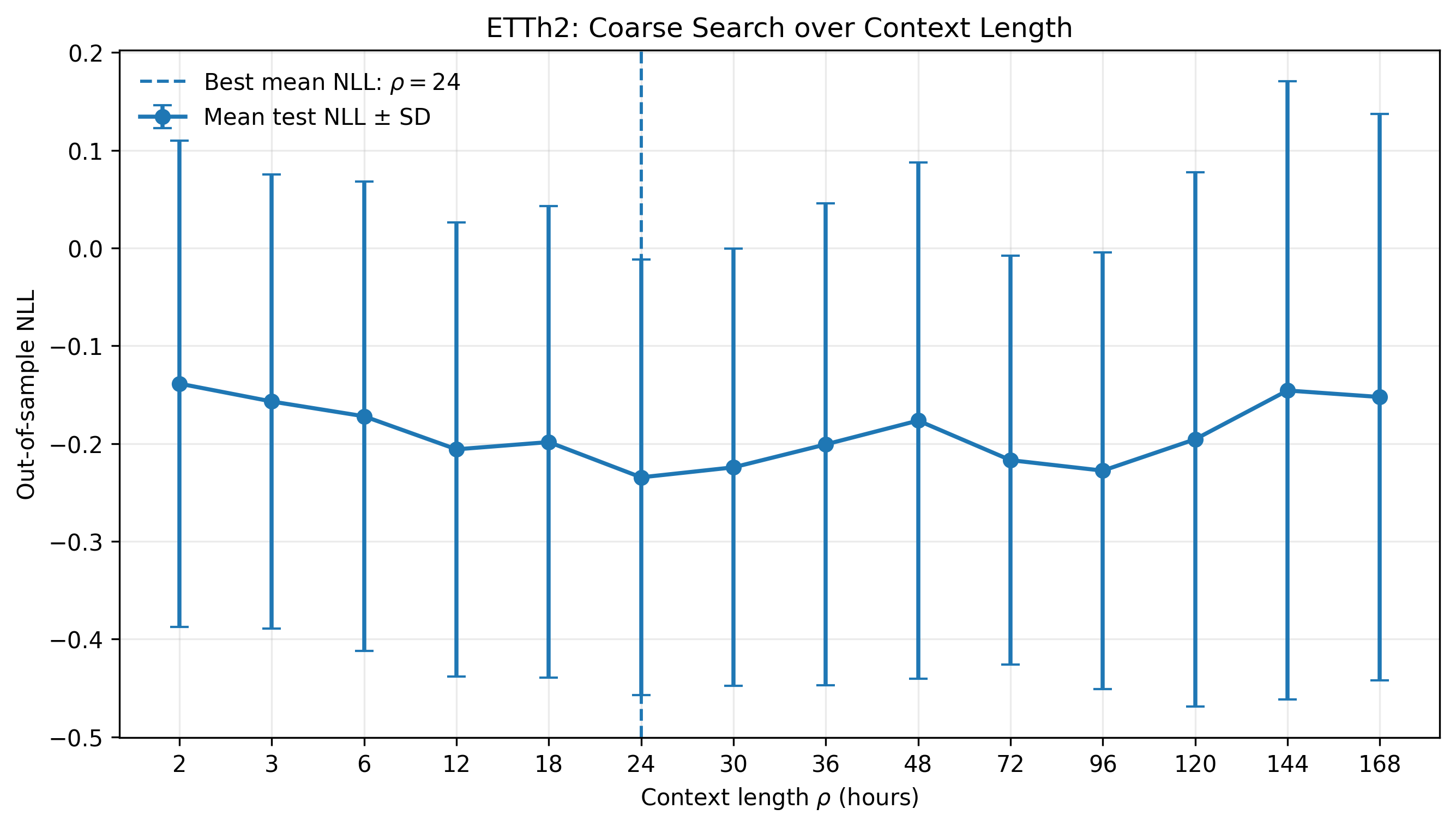}
        \small (a) Coarse search over context length
    \end{minipage}
    \hspace{0.1\textwidth}
    \begin{minipage}[t]{0.35\textwidth}
        \centering
        \includegraphics[width=\linewidth]{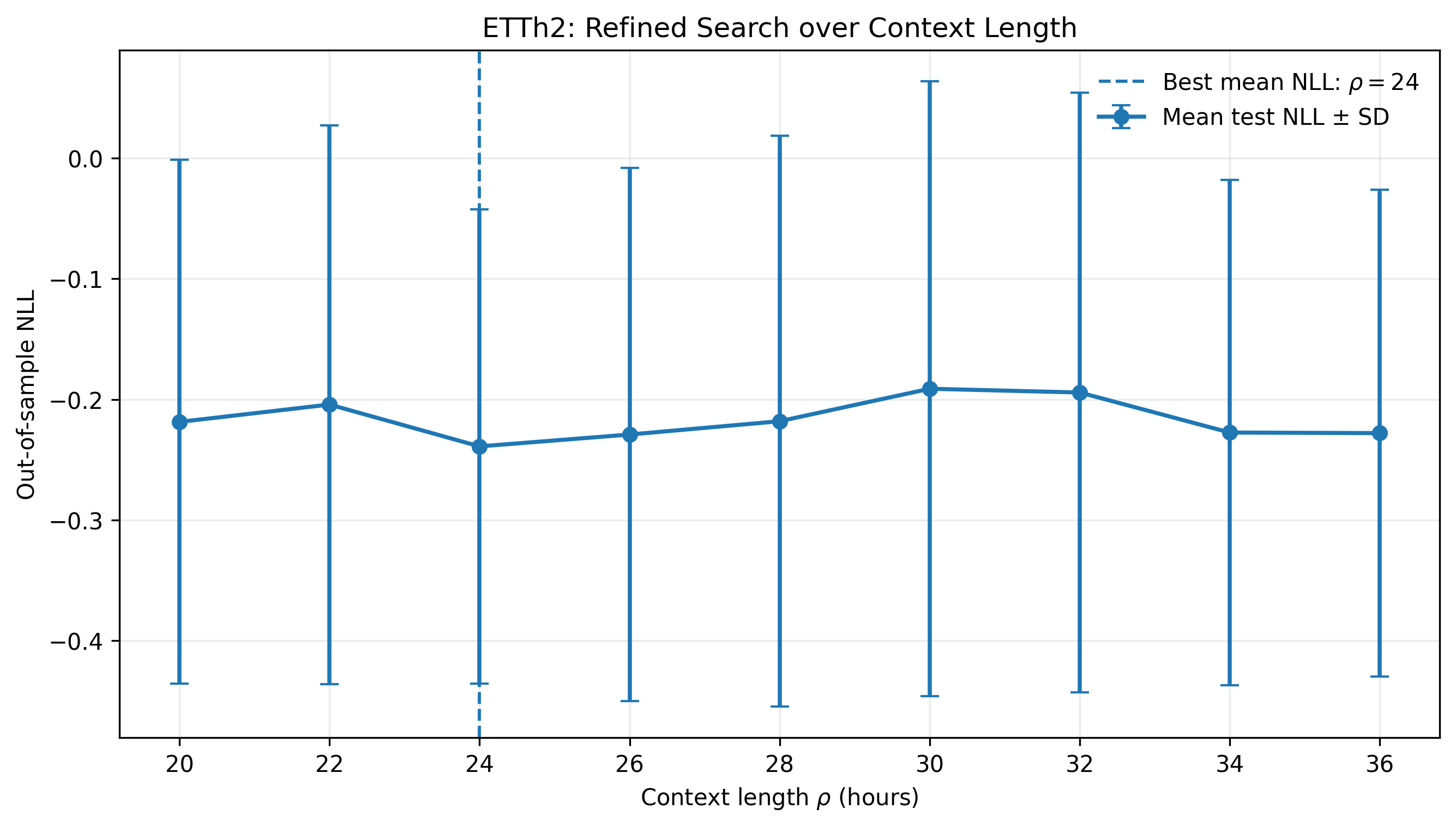}
        \small (b) Refined search around the coarse optimum
    \end{minipage}
    \caption{\textbf{Empirical estimation of the effective
    predictive-memory scale of ETTh2.}
    Panel~(a) reports the coarse search over context lengths ranging
    from $1$ to $168$ hours. Panel~(b) reports the refined search over
    $20$--$36$ hours with a two-hour resolution. Both experiments use
    four expanding-window temporal cross-validation folds and three
    random seeds, with error bars representing one standard deviation.
    The minimum mean out-of-sample NLL is attained at $\rho=24$ hours in
    both searches. The refined comparison finds no statistically supported
    predictive improvement from increasing the context beyond this value,
    while several moderately larger contexts, including $\rho=34$, remain
    close to the optimum. The results therefore indicate a dominant
    predictive-memory scale around one day together with a broader
    near-optimal predictive plateau, rather than a sharp empirical memory
    cutoff.}
    \label{fig:etth2_markov_order}
\end{figure*}

\paragraph{Statistical comparison and interpretation}
For $\rho_1<\rho_2$, let $\ell_t^{(\rho)}$ denote the out-of-sample NLL at
a common target timestamp $t$, and define
\begin{equation}
D_t(\rho_1,\rho_2)
=
\ell_t^{(\rho_1)}
-
\ell_t^{(\rho_2)}.
\nonumber
\end{equation}
We use a fold-aware moving-block bootstrap with $24$-hour blocks to
account for temporal dependence. Resampling is performed separately
within each temporal fold so that blocks never cross fold boundaries,
and pairwise comparisons are corrected using Holm's family-wise
error-rate procedure at $\alpha=0.05$.

Refining the grid does not change the location of the minimum:
\[
\rho_{\mathrm{fine}}^\star=24~\text{hours},
\qquad
\widehat{\mathcal L}_{\mathrm{NLL}}(24)\simeq-0.2390.
\]
Increasing the context beyond $24$ hours produces no statistically
supported predictive improvement under the paired moving-block bootstrap
with Holm correction. Nevertheless, the predictive loss remains
relatively flat for several nearby and moderately larger contexts. In
particular,
$\widehat{\mathcal L}_{\mathrm{NLL}}(34)\simeq-0.2275$ remains close to
the minimum, and the normalized multivariate MSE exhibits a similar
plateau.

These results suggest a broad predictive plateau rather than an abrupt
memory cutoff. A $34$-hour context can therefore be viewed as a viable
conservative history length under this protocol, although its
near-minimum NLL should not be interpreted as evidence that the
additional lags from $25$ to $34$ hours individually carry significant
predictive information.

The agreement between the coarse and refined searches motivates the empirical
estimate $\rho_{\mathrm{eff}}\approx24~\text{hours},$
which we interpret as a \emph{dominant predictive-memory scale}, not an
exact finite Markov order. The relatively flat performance at moderately
larger contexts further indicates that neither $24$ nor $34$ constitutes
a sharp boundary beyond which the underlying process contains no
temporal dependence; rather, a broader range around the dominant scale
retains near-optimal predictive performance.

\paragraph{Relation to the context-length generalization experiment}
The auxiliary analysis provides a possible explanation for the transition
observed in Section~\ref{subsec:etth2_main}. There, the discrete
next-token test cross-entropy reaches its lowest observed value around
$\rho=32$ and does not improve for larger contexts, while the training
loss remains substantially smaller.

Independently, the auxiliary analysis places the minimum mean predictive
NLL at $\rho=24$ hours, finds no statistically supported improvement from
extending the history beyond this value, and observes near-optimal
performance for some moderately larger contexts, including $\rho=34$.
The result is therefore more naturally interpreted as a predictive
plateau than as a sharp cutoff in temporal dependence.

Consequently, once the context enters this regime, making additional past
observations available may increase the statistical and representational
burden of the model without producing a corresponding out-of-sample gain.
This provides a plausible empirical interpretation of the transition
observed around $\rho\simeq32$ in the discrete ETTh2 experiment, which
falls within the broader predictive-memory regime identified by the
auxiliary analysis.



\bibliographystyle{IEEEtran}
\bibliography{bib}

@article{sefidgaran2024minimum,
  title={Minimum description length and generalization guarantees for representation learning},
  author={Sefidgaran, Milad and Zaidi, Abdellatif and Krasnowski, Piotr},
  journal={Advances in Neural Information Processing Systems},
  volume={36},
  year={2024}
}

@article{sefidgaran2022rate,
  title={Rate-distortion theoretic bounds on generalization error for distributed learning},
  author={Sefidgaran, Milad and Chor, Romain and Zaidi, Abdellatif},
  journal={Advances in Neural Information Processing Systems},
  volume={35},
  pages={19687--19702},
  year={2022}
}

@inproceedings{steinke2020reasoning,
  title={Reasoning about generalization via conditional mutual information},
  author={Steinke, Thomas and Zakynthinou, Lydia},
  booktitle={Conference on Learning Theory},
  pages={3437--3452},
  year={2020},
  organization={PMLR}
}

@article{xu2017information,
  title={Information-theoretic analysis of generalization capability of learning algorithms},
  author={Xu, Aolin and Raginsky, Maxim},
  journal={Advances in neural information processing systems},
  volume={30},
  year={2017}
}

@inproceedings{sefidgaran2024lessons,
  title={Lessons from Generalization Error Analysis of Federated Learning: You May Communicate Less Often!},
  author={Sefidgaran, Milad and Chor, Romain and Zaidi, Abdellatif and Wan, Yijun},
  booktitle={Forty-first International Conference on Machine Learning},
  year={2024}
}

@inproceedings{Sefidgaran2022,
  title={Rate-distortion theoretic generalization bounds for stochastic learning algorithms},
  author={Sefidgaran, Milad and Gohari, Amin and Richard, Gael and Simsekli, Umut},
  booktitle={Conference on Learning Theory},
  pages={4416--4463},
  year={2022},
  organization={PMLR}
}

@InProceedings{gronlund2020,
	title = 	 {Near-Tight Margin-Based Generalization Bounds for Support Vector Machines},
	author =       {Gr{\o}nlund, Allan and Kamma, Lior and Larsen, Kasper Green},
	booktitle = 	 {Proceedings of the 37th International Conference on Machine Learning},
	pages = 	 {3779--3788},
	year = 	 {2020},
	editor = 	 {III, Hal Daumé and Singh, Aarti},
	volume = 	 {119},
	series = 	 {Proceedings of Machine Learning Research},
	month = 	 {13--18 Jul},
	publisher =    {PMLR},
}

@inproceedings{gronlund2020near,
  title={Near-tight margin-based generalization bounds for support vector machines},
  author={Gr{\o}nlund, Allan and Kamma, Lior and Larsen, Kasper Green},
  booktitle={International Conference on Machine Learning},
  pages={3779--3788},
  year={2020},
  organization={PMLR}
}

@article{paulin2012concentration,
  title={Concentration inequalities for Markov chains by Marton couplings and spectral methods},
  author={Paulin, Daniel},
  journal={arXiv preprint arXiv:1212.2015},
  year={2012}
}

@inproceedings{yuksel2025sample,
  title={On the sample complexity of next-token prediction},
  author={Y{\"u}ksel, O{\u{g}}uz Kaan and Flammarion, Nicolas},
  booktitle={The 28th International Conference on Artificial Intelligence and Statistics},
  year={2025}
}

@inproceedings{arora2018stronger,
  title={Stronger generalization bounds for deep nets via a compression approach},
  author={Arora, Sanjeev and Ge, Rong and Neyshabur, Behnam and Zhang, Yi},
  booktitle={International conference on machine learning},
  pages={254--263},
  year={2018},
  organization={PMLR}
}

@book{levin2017markov,
  author    = {David A. Levin and Yuval Peres},
  title     = {Markov Chains and Mixing Times},
  publisher = {American Mathematical Society},
  year      = {2017},
  edition   = {2nd}
}

@inproceedings{hao2018learning,
  author    = {Yi Hao and Alon Orlitsky and Venkatadheeraj Pichapati},
  title     = {On Learning Markov Chains},
  booktitle = {Advances in Neural Information Processing Systems (NeurIPS)},
  volume    = {31},
  year      = {2018}
}

@inproceedings{wolfer2019minimax,
  author    = {Geoffrey Wolfer and Aryeh Kontorovich},
  title     = {Minimax Learning of Ergodic Markov Chains},
  booktitle = {Algorithmic Learning Theory (ALT)},
  year      = {2019},
  pages     = {904--930}
}

@inproceedings{bengio2000,
  author    = {Yoshua Bengio and R{\'e}jean Ducharme and Pascal Vincent},
  title     = {A Neural Probabilistic Language Model},
  booktitle = {Advances in Neural Information Processing Systems (NeurIPS)},
  volume    = {13},
  year      = {2000}
}

@inproceedings{vaswani2017,
  author    = {Ashish Vaswani and Noam Shazeer and Niki Parmar and Jakob Uszkoreit and Llion Jones and Aidan N. Gomez and {\L}ukasz Kaiser and Illia Polosukhin},
  title     = {Attention Is All You Need},
  booktitle = {Advances in Neural Information Processing Systems (NeurIPS)},
  volume    = {30},
  year      = {2017}
}

@book{berger1971rate,
  author    = {Thomas M. Berger},
  title     = {Rate Distortion Theory: A Mathematical Basis for Data Compression},
  publisher = {Prentice-Hall},
  year      = {1971}
}

@inproceedings{lotfi2024unlocking,
  author    = {Sanae Lotfi and Yilun Kuang and Marc Finzi and Brandon Amos and Micah Goldblum and Andrew G. Wilson},
  title     = {Unlocking Tokens as Data Points for Generalization Bounds on Large Language Models},
  booktitle = {Advances in Neural Information Processing Systems (NeurIPS)},
  volume    = {37},
  year      = {2024}
}

@article{raginsky2017,
  author  = {Maxim Raginsky and Rachit Talwar and Aolin Xu},
  title   = {Information-Theoretic Lower Bounds for Generalization Error},
  journal = {IEEE Transactions on Information Theory},
  volume  = {63},
  number  = {12},
  pages   = {7676--7688},
  year    = {2017}
}

@article{bu2020tightening,
  author  = {Zhenqing Bu and Shaofeng Zou and Venugopal V. Veeravalli},
  title   = {Tightening Mutual Information-Based Bounds on Generalization Error},
  journal = {IEEE Journal on Selected Areas in Information Theory},
  volume  = {1},
  number  = {1},
  pages   = {121--130},
  year    = {2020}
}

@inproceedings{sefidgaran2022distributed,
  author    = {Milad Sefidgaran and Ronen Chor and Abdellatif Zaidi},
  title     = {Rate-Distortion Theoretic Bounds on Generalization Error for Distributed Learning},
  booktitle = {Advances in Neural Information Processing Systems (NeurIPS)},
  volume    = {35},
  year      = {2022}
}

@article{sefidgaran2024variable,
  author  = {Milad Sefidgaran and Abdellatif Zaidi},
  title   = {Data-Dependent Generalization Bounds via Variable-Size Compressibility},
  journal = {IEEE Transactions on Information Theory},
  volume  = {70},
  number  = {2},
  pages   = {1065--1085},
  year    = {2024}
}

@inproceedings{russo2016controlling,
  title={Controlling bias in adaptive data analysis using information theory},
  author={Russo, Daniel and Zou, James},
  booktitle={Artificial Intelligence and Statistics},
  pages={1232--1240},
  year={2016},
  organization={PMLR}
}

@article{malach2023auto,
  title={Auto-regressive next-token predictors are universal learners},
  author={Malach, Eran},
  journal={arXiv preprint arXiv:2309.06979},
  year={2023},
  url={https://arxiv.org/abs/2309.06979}
}

@inproceedings{NIPS2017_b22b257a,
 author = {Bartlett, Peter and Foster, Dylan J and Telgarsky, Matus J},
 booktitle = {Advances in Neural Information Processing Systems},
 editor = {I. Guyon and U. Von Luxburg and S. Bengio and H. Wallach and R. Fergus and S. Vishwanathan and R. Garnett},
 pages = {},
 publisher = {Curran Associates, Inc.},
 title = {Spectrally-normalized margin bounds for neural networks},
 url = {https://proceedings.neurips.cc/paper_files/paper/2017/file/b22b257ad0519d4500539da3c8bcf4dd-Paper.pdf},
 volume = {30},
 year = {2017}
}

@article{JMLR:v22:19-479,
  author  = {Jianqing Fan and Bai Jiang and Qiang Sun},
  title   = {Hoeffding's Inequality for General Markov Chains and Its Applications to Statistical Learning},
  journal = {Journal of Machine Learning Research},
  year    = {2021},
  volume  = {22},
  number  = {139},
  pages   = {1--35},
  url     = {http://jmlr.org/papers/v22/19-479.html}
}

@article{jerison2013general,
  title={General mixing time bounds for finite Markov chains via the absolute spectral gap},
  author={Jerison, Daniel},
  journal={arXiv preprint arXiv:1310.8021},
  year={2013}
}

@INPROCEEDINGS{11653692,
  author={Kavian, Masoud and Zaidi, Abdellatif and Sefidgaran, Milad},
  booktitle={2026 IEEE International Symposium on Information Theory (ISIT)}, 
  title={Generalization Analysis of Next-Token Prediction Learning Algorithms}, 
  year={2026},
  volume={},
  number={},
  pages={1-6},
  doi={10.1109/ISIT62367.2026.11653692}}

@inproceedings{negrea2019information,
  title={Information-Theoretic Generalization Bounds for SGLD via Data-Dependent Estimates},
  author={Negrea, Jeffrey and Haghifam, Mahdi and Dziugaite, Gintare Karolina and Khisti, Ashish and Roy, Daniel M.},
  booktitle={Advances in Neural Information Processing Systems},
  volume={32},
  year={2019}
}

@article{hellstrom2020generalization,
  title={Generalization Bounds via Information Density and Conditional Information Density},
  author={Hellstr{\"o}m, Fredrik and Durisi, Giuseppe},
  journal={IEEE Journal on Selected Areas in Information Theory},
  volume={1},
  number={3},
  pages={824--839},
  year={2020},
  doi={10.1109/JSAIT.2020.3040992}
}

@inproceedings{harutyunyan2021blackbox,
  title={Information-Theoretic Generalization Bounds for Black-Box Learning Algorithms},
  author={Harutyunyan, Hrayr and Raginsky, Maxim and Ver Steeg, Greg and Galstyan, Aram},
  booktitle={Advances in Neural Information Processing Systems},
  volume={34},
  pages={24670--24682},
  year={2021}
}

@inproceedings{barsbey2021heavy,
  title={Heavy Tails in SGD and Compressibility of Overparametrized Neural Networks},
  author={Barsbey, Melih and Sefidgaran, Milad and Erdogdu, Murat A. and Richard, Ga{\"e}l and Simsekli, Umut},
  booktitle={Advances in Neural Information Processing Systems},
  volume={34},
  year={2021}
}

@article{marton1996bounding,
  title={Bounding $\bar d$-Distance by Informational Divergence: A Method to Prove Measure Concentration},
  author={Marton, Katalin},
  journal={The Annals of Probability},
  volume={24},
  number={2},
  pages={857--866},
  year={1996}
}

@inproceedings{mohri2008rademacher,
  title={Rademacher Complexity Bounds for Non-I.I.D. Processes},
  author={Mohri, Mehryar and Rostamizadeh, Afshin},
  booktitle={Advances in Neural Information Processing Systems},
  volume={21},
  pages={1097--1104},
  year={2008}
}

@article{kuznetsov2017generalization,
  title={Generalization Bounds for Non-Stationary Mixing Processes},
  author={Kuznetsov, Vitaly and Mohri, Mehryar},
  journal={Machine Learning},
  volume={106},
  number={1},
  pages={93--117},
  year={2017},
  doi={10.1007/s10994-016-5588-2}
}

@article{yu1994rates,
  title={Rates of Convergence for Empirical Processes of Stationary Mixing Sequences},
  author={Yu, Bin},
  journal={The Annals of Probability},
  volume={22},
  number={1},
  pages={94--116},
  year={1994},
  doi={10.1214/aop/1176988849}
}

@article{agarwal2013generalization,
  title={The Generalization Ability of Online Algorithms for Dependent Data},
  author={Agarwal, Alekh and Duchi, John C.},
  journal={IEEE Transactions on Information Theory},
  volume={59},
  number={1},
  pages={573--587},
  year={2013},
  doi={10.1109/TIT.2012.2212414}
}

@inproceedings{ralaivola2009chromatic,
  title={Chromatic PAC-Bayes Bounds for Non-IID Data},
  author={Ralaivola, Liva and Szafranski, Marie and Stempfel, Guillaume},
  booktitle={Proceedings of the Twelfth International Conference on Artificial Intelligence and Statistics},
  series={Proceedings of Machine Learning Research},
  volume={5},
  pages={416--423},
  year={2009},
  publisher={PMLR}
}

@article{kontorovich2008concentration,
  title={Concentration Inequalities for Dependent Random Variables via the Martingale Method},
  author={Kontorovich, Leonid and Ramanan, Kavita},
  journal={The Annals of Probability},
  volume={36},
  number={6},
  pages={2126--2158},
  year={2008},
  doi={10.1214/07-AOP384}
}

@article{samson2000concentration,
  title={Concentration of Measure Inequalities for Markov Chains and $\Phi$-Mixing Processes},
  author={Samson, Paul-Marie},
  journal={The Annals of Probability},
  volume={28},
  number={1},
  pages={416--461},
  year={2000},
  doi={10.1214/aop/1019160125}
}

@article{glynn2002hoeffding,
  title={Hoeffding's Inequality for Uniformly Ergodic Markov Chains},
  author={Glynn, Peter W. and Ormoneit, Dirk},
  journal={Statistics \& Probability Letters},
  volume={56},
  number={2},
  pages={143--146},
  year={2002},
  doi={10.1016/S0167-7152(01)00158-4}
}

@article{lezaud1998chernoff,
  title={Chernoff-Type Bound for Finite Markov Chains},
  author={Lezaud, Pascal},
  journal={The Annals of Applied Probability},
  volume={8},
  number={3},
  pages={849--867},
  year={1998}
}

@inproceedings{li2025generalizationntp,
  title={On the Generalization Ability of Next-Token-Prediction Pretraining},
  author={Li, Zhihao and Jiang, Xue and Liu, Liyuan and Zhang, Xuelin and Chen, Hong and Zheng, Feng},
  booktitle={Proceedings of the 42nd International Conference on Machine Learning},
  series={Proceedings of Machine Learning Research},
  volume={267},
  pages={34943--34975},
  year={2025},
  publisher={PMLR}
}

@inproceedings{bachmann2024pitfalls,
  title={The Pitfalls of Next-Token Prediction},
  author={Bachmann, Gregor and Nagarajan, Vaishnavh},
  booktitle={Proceedings of the 41st International Conference on Machine Learning},
  series={Proceedings of Machine Learning Research},
  volume={235},
  pages={2296--2318},
  year={2024},
  publisher={PMLR}
}

@article{sander2024universality,
  title={Towards Understanding the Universality of Transformers for Next-Token Prediction},
  author={Sander, Michael E. and Peyr{\'e}, Gabriel},
  journal={arXiv preprint arXiv:2410.03011},
  year={2024}
}

@inproceedings{yun2020transformers,
  title={Are Transformers Universal Approximators of Sequence-to-Sequence Functions?},
  author={Yun, Chulhee and Bhojanapalli, Srinadh and Rawat, Ankit Singh and Reddi, Sashank J. and Kumar, Sanjiv},
  booktitle={International Conference on Learning Representations},
  year={2020}
}

@article{likhosherstov2021expressive,
  title={On the Expressive Power of Self-Attention Matrices},
  author={Likhosherstov, Valerii and Choromanski, Krzysztof and Weller, Adrian},
  journal={arXiv preprint arXiv:2106.03764},
  year={2021}
}

@inproceedings{edelman2022inductive,
  title={Inductive Biases and Variable Creation in Self-Attention Mechanisms},
  author={Edelman, Benjamin L. and Goel, Surbhi and Kakade, Sham M. and Zhang, Cyril},
  booktitle={Proceedings of the 39th International Conference on Machine Learning},
  series={Proceedings of Machine Learning Research},
  volume={162},
  year={2022},
  publisher={PMLR}
}

@inproceedings{li2024mechanics,
  title={Mechanics of Next Token Prediction with Self-Attention},
  author={Li, Yingcong and Huang, Yixiao and Ildiz, Muhammed E. and Singh Rawat, Ankit and Oymak, Samet},
  booktitle={Proceedings of The 27th International Conference on Artificial Intelligence and Statistics},
  series={Proceedings of Machine Learning Research},
  volume={238},
  pages={685--693},
  year={2024},
  publisher={PMLR}
}

@article{madden2024capacity,
  title={Next-Token Prediction Capacity: General Upper Bounds and a Lower Bound for Transformers},
  author={Madden, Liam and Fox, Curtis and Thrampoulidis, Christos},
  journal={arXiv preprint arXiv:2405.13718},
  year={2024}
}

@inproceedings{golowich2025sparsity,
  title={The Role of Sparsity for Length Generalization in {LLM}s},
  author={Golowich, Noah and Jelassi, Samy and Brandfonbrener, David and Kakade, Sham M. and Malach, Eran},
  booktitle={Proceedings of the 42nd International Conference on Machine Learning},
  series={Proceedings of Machine Learning Research},
  volume={267},
  pages={19809--19840},
  year={2025},
  publisher={PMLR}
}

@article{eldan2023tinystories,
  title={TinyStories: How Small Can Language Models Be and Still Speak Coherent English?},
  author={Eldan, Ronen and Li, Yuanzhi},
  journal={arXiv preprint arXiv:2305.07759},
  year={2023}
}

@inproceedings{zhou2021informer,
  title={Informer: Beyond Efficient Transformer for Long Sequence Time-Series Forecasting},
  author={Zhou, Haoyi and Zhang, Shanghang and Peng, Jieqi and Zhang, Shuai and Li, Jianxin and Xiong, Hui and Zhang, Wancai},
  booktitle={Proceedings of the AAAI Conference on Artificial Intelligence},
  volume={35},
  number={12},
  pages={11106--11115},
  year={2021},
  doi={10.1609/aaai.v35i12.17325}
}

@inproceedings{wu2021autoformer,
  title={Autoformer: Decomposition Transformers with Auto-Correlation for Long-Term Series Forecasting},
  author={Wu, Haixu and Xu, Jiehui and Wang, Jianmin and Long, Mingsheng},
  booktitle={Advances in Neural Information Processing Systems},
  volume={34},
  year={2021}
}

@inproceedings{zhou2022fedformer,
  title={{FEDformer}: Frequency Enhanced Decomposed Transformer for Long-Term Series Forecasting},
  author={Zhou, Tian and Ma, Ziqing and Wen, Qingsong and Wang, Xue and Sun, Liang and Jin, Rong},
  booktitle={Proceedings of the 39th International Conference on Machine Learning},
  series={Proceedings of Machine Learning Research},
  volume={162},
  pages={27268--27286},
  year={2022},
  publisher={PMLR}
}

@inproceedings{nie2023patchtst,
  title={A Time Series Is Worth 64 Words: Long-Term Forecasting with Transformers},
  author={Nie, Yuqi and Nguyen, Nam H. and Sinthong, Phanwadee and Kalagnanam, Jayant},
  booktitle={International Conference on Learning Representations},
  year={2023}
}

@inproceedings{wu2023timesnet,
  title={TimesNet: Temporal 2D-Variation Modeling for General Time Series Analysis},
  author={Wu, Haixu and Hu, Tengge and Liu, Yong and Zhou, Hang and Wang, Jianmin and Long, Mingsheng},
  booktitle={International Conference on Learning Representations},
  year={2023}
}

@inproceedings{zhang2023crossformer,
  title={Crossformer: Transformer Utilizing Cross-Dimension Dependency for Multivariate Time Series Forecasting},
  author={Zhang, Yunhao and Yan, Junchi},
  booktitle={International Conference on Learning Representations},
  year={2023}
}

@inproceedings{liu2024itransformer,
  title={{iTransformer}: Inverted Transformers Are Effective for Time Series Forecasting},
  author={Liu, Yong and Hu, Tengge and Zhang, Haoran and Wu, Haixu and Wang, Shiyu and Ma, Lintao and Long, Mingsheng},
  booktitle={International Conference on Learning Representations},
  year={2024}
}

\end{document}